\documentclass[accepted]{uai2026} 
                        
\usepackage[american]{babel}

\usepackage{natbib} 
\usepackage{mathtools} 
\usepackage{booktabs} 
\usepackage{tikz} 
\usepackage{amsthm}
\usepackage{amssymb}
\usepackage{hyperref}
\usepackage{dsfont}
\usepackage{stmaryrd}
\usepackage{multirow} 
\newtheorem{definition}{Definition}[section]
\newtheorem{theorem}{Theorem}[section]
\newtheorem{proposition}{Proposition}[section]

\usepackage{algorithm}
\usepackage{algpseudocode}

\renewcommand{\vec}[1]{\boldsymbol{#1}}
\renewcommand{\cite}[1]{\citep{#1}}
\newtheorem*{theorem*}{Theorem}

\title{Reliable Conformal Prediction for Ordinal Classification\\ Using the Ranked Probability Score}

\author[1,3]{\href{mailto:stefan.sh.haas@bmwgroup.com?Subject=Reliable Conformal Prediction for Ordinal Classification Using the Ranked Probability Score}{Stefan Haas}{}}
\author[1]{Luca Killmaier}
\author[1,2]{Alireza Javanmardi}
\author[1,2,4]{Eyke Hüllermeier}
\affil[1]{%
    Institute of Informatics\\
    LMU Munich\\
    Munich, Germany
}
\affil[2]{%
    Munich Center for Machine Learning (MCML)\\
    Munich, Germany\\
}
\affil[3]{%
    BMW Group\\
     Munich, Germany
  }
  
\affil[4]{%
   German Centre for Artificial Intelligence (DFKI, DSA)\\
     Kaiserslautern, Germany
}
  
\begin{document}
\maketitle

\begin{abstract}
Ordinal classification (OC) arises in high-stakes domains such as medicine and finance, where uncertainty quantification must account for the severity of ordinal errors. Conformal prediction (CP) provides distribution-free prediction sets with marginal coverage guarantees; however, its practical effectiveness depends critically on the choice of nonconformity function.
We introduce a CP method for ordinal classification based on the ranked probability score (RPS), a proper scoring rule defined over cumulative predictive distributions. Although it reflects ordinal risk quite naturally, it has largely been neglected in conformal ordinal prediction (COP). When used as a measure of nonconformity, RPS yields median-centered contiguous prediction sets by construction. The method is model-agnostic, supports both assessed and grouped ordered categorical outcomes, and permits efficient implementation compared to greedy interval selection procedures.
Across multiple ordinal image and tabular datasets, RPS-based CP produces contiguous prediction sets and strikes a favorable balance between prediction set width and the magnitude of ordinal miscoverage relative to existing CP methods.
\end{abstract}

\section{Introduction}\label{sec:intro}

Ordinal classification (OC), also known as ordinal regression in statistics~\cite{mccullagh1980regression}, refers to classification problems in which class labels exhibit a natural linear order. 
Representative applications include medical diagnosis~\cite{DBLP:journals/peerj-cs/AlbuquerqueCC21}, age estimation~\cite{DBLP:journals/prl/CaoMR20}, and credit risk assessment~\cite{DBLP:journals/sma/HirkHV19}. Despite its prevalence in high-stakes domains, most work in OC has primarily focused on improving point prediction performance~\cite{DBLP:journals/paa/ShiCR23,DBLP:conf/aaai/NachmaniGSSG25,DBLP:journals/eswa/PolatCT25}, whereas uncertainty quantification (UQ) has attracted attention only recently~\cite{DBLP:journals/ijar/HaasH25, DBLP:journals/corr/abs-2507-00733}. Here, uncertainty for a query $\vec{x}_q$ is typically represented by the conditional predictive distribution over ordered labels, $p(y \mid \vec{x}_q)$, produced by a probabilistic predictor.

Alternatively, uncertainty can be represented by set-valued predictions. Conformal prediction (CP) offers a principled, model-agnostic framework for post-hoc construction of prediction sets  ~\cite{DBLP:journals/jmlr/ShaferV08,vovk2005algorithmic,DBLP:conf/icml/VovkGS99,DBLP:journals/corr/abs-2107-07511}.
It calibrates an underlying (heuristic) uncertainty estimate to achieve finite-sample, distribution-free marginal coverage at a user-specified miscoverage rate $\alpha$. Instead of committing to a single label $y \in \mathcal{Y}$, CP outputs a set $\mathcal{C}_\alpha(\vec{x}_q) \subseteq \mathcal{Y}$ of plausible labels, whose size reflects predictive uncertainty at the query $\vec{x}_q$. However, producing sets in conformal ordinal prediction (COP) that are both informative and consistent with the ordinal structure of the label space remains challenging.

A key requirement for COP is that prediction sets be contiguous \cite{DBLP:conf/miccai/LuAP22,DBLP:conf/uai/XuGW23,DBLP:conf/nips/DeyMK23}. For example, consider age estimation~\cite{DBLP:journals/paa/ShiCR23,DBLP:conf/caepia/YunGGBGH24} from images, where a latent continuous variable (e.g., age) is discretized into ordered categories, with $\mathcal{Y} = \{\texttt{baby}, \texttt{child}, \texttt{teenager}, \texttt{adult}, \texttt{senior}\}$. A valid prediction set should only contain adjacent age categories, e.g., $\mathcal{C}_\alpha(\vec{x}_q) = \{\texttt{child}, \texttt{teenager}, \texttt{adult}\}$, whereas non-contiguous sets such as $\mathcal{C}_\alpha(\vec{x}_q) = \{\texttt{child}, \texttt{senior}\}$ may appear unreasonable.


\begin{figure*}[htb]
    \centering
    \includegraphics[width=0.6\textwidth]{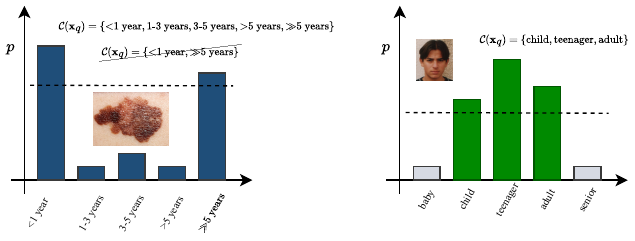}
    \caption{
(Left) Illustration of an assessed ordered categorical variable (survival prognosis for melanoma~\cite{NCI1985_AV8500_3850}) with extreme disagreement between physicians. To faithfully quantify uncertainty, the prediction set must be contiguous and include all intermediate classes between the conflicting assessments.
(Right) Illustration of a grouped ordered categorical variable (age estimation), where unimodal predictive modeling is well justified and naturally leads to contiguous prediction sets~\cite{lanitis2002toward,panis2016overview}.
    }
    \label{fig:assessed_grouped}
\end{figure*}

Another important category, alongside the previously described \emph{grouped} ordered categorical variables, is the class of \emph{assessed} ordered categorical variables~\cite{anderson1984regression}, in which human experts assign labels, as in financial risk assessment or medical survival prognosis. In these contexts, errors tend to be inherently larger due to inter-expert disagreement, which makes maintaining contiguity even more crucial for accurately capturing uncertainty.
For example, consider physician opinions regarding survival prognosis for stage IV melanoma, which may be polarized, resulting in large clusters at \texttt{<1 year} (very pessimistic group) and \texttt{$\gg$5 years} (optimistic group influenced by immunotherapy outcomes). To properly quantify uncertainty in such cases, the prediction set should not be limited to the two most frequent categories, i.e., $\mathcal{C}_\alpha(\vec{x}_q) = \{\texttt{<1 year}, \texttt{$\gg$5 years}\}$; instead, it should encompass the entire range of plausible categories, $\mathcal{C}_\alpha(\vec{x}_q) = \{\texttt{<1 year}, \texttt{1-3 years}, \texttt{3-5 years}, \texttt{>5 years}, \\ \texttt{$\gg$5 years}\}$. Otherwise, uncertainty measured by the size of the prediction set will be severely underestimated (see Figure~\ref{fig:assessed_grouped}).


Another important aspect of COP that has received limited attention is the \emph{severity of miscoverage} when the true label lies outside the prediction set. For instance, in financial risk assessment, if the true label is $y = \texttt{very high}$ while $\mathcal{C}_\alpha(x) = \{\texttt{low}, \texttt{moderate}\}$, the resulting miscoverage is substantial and could lead to catastrophic risk misestimation. Ideally, when coverage fails, the true label should lie as close as possible to the boundary of the prediction set, minimizing the impact of miscoverage. Under this criterion, a larger set such as $\mathcal{C}_\alpha(x) = \{\texttt{low}, \texttt{moderate}, \texttt{high}\}$ may be preferable to a smaller set $\mathcal{C}_\alpha(x) = \{\texttt{low}, \texttt{moderate}\}$, even though both fail to cover the true label and the larger set is less efficient in the classical CP sense. This observation motivates uncertainty quantification methods for OC that account not only for coverage and efficiency, but also for the ordinal distance incurred under miscoverage.

These challenges motivate a model-agnostic conformal approach to OC that can be combined with arbitrary loss functions, accommodates both grouped and assessed ordered targets, and produces meaningful contiguous prediction sets that accurately quantify uncertainty. Moreover, such a method should leverage the entire predictive probability distribution in an unbiased and computationally efficient manner when constructing prediction sets. These requirements are not met by existing approaches (cf.\ Section \ref{sec:related_work}).


In this paper, we advocate the ranked probability score (RPS)~\cite{epstein1969scoring} as a nonconformity measure for conformal prediction in OC. RPS is a proper scoring rule~\cite{gneiting2007strictly} for ordinal outcomes that incentivizes truthful probability estimation and explicitly accounts for the linear structure of the label space. Despite being well-established in the forecasting literature~\cite{murphy1970ranked,Murphy1971ANO}, it has only recently been recognized as a theoretically grounded metric for evaluating probabilistic ordinal classifiers in machine learning~\cite{DBLP:conf/miccai/Galdran23}. To the best of our knowledge, RPS has not yet been proposed as a nonconformity measure for CP in OC.
The main contributions of this paper are as follows:
\begin{itemize}
\item We propose RPS as a proper, model-agnostic nonconformity measure for CP in OC.
\item We provide theoretical guarantees for desirable properties in OC: RPS-based conformal prediction sets (i) \textbf{satisfy marginal coverage}, (ii) \textbf{are nested} with respect to the miscoverage level ($\alpha$), and (iii) \textbf{are contiguous}. 
\item Furthermore, we show that RPS-based prediction sets directly \textbf{optimize ordinal risk under oracle conditional coverage}, measured as set-based $l_1$ error, in contrast to mode-centered approaches which primarily target set efficiency.
\item We show that RPS-based conformal prediction is computationally efficient, scaling linearly with both the number of labels and the number of calibration points.
\item Finally, we empirically validate our approach on ordinal image and tabular datasets, showing that median-centered RPS-based prediction sets strike a favorable balance between interval width and ordinal miscoverage magnitude.
\end{itemize}

\section{Related Work}
\label{sec:related_work}
\paragraph*{Ordinal Classification}
addresses the problem of predicting discrete ordered labels as commonly encountered in many high-stakes domains, including medicine~\cite{DBLP:journals/artmed/Dorado-MorenoPG17,prodeau2019ordinal,DBLP:journals/mta/TariqSN25} and finance~\cite{DBLP:journals/sma/HirkHV19}. Unlike multinomial classification, where class labels are unordered, OC must account for the inherent order among classes, implying that misclassification costs typically increase with an increasing gap between predicted label $\hat{y}$ and true label $y$. At the same time, OC differs from regression in that the labels are discrete rather than continuous, and the underlying measurement scale is ordinal rather than cardinal. Thus, strictly speaking, there is no natural notion of distance. In spite of this, encoding the class labels by integers $1, \ldots , K$ and using distance-based losses such as $|\hat{y} - y|$ is common practice. 

Recent work in OC has largely focused on improving predictive performance, often by minimizing distance-based losses such as mean absolute error~\cite{DBLP:conf/ai/GaudetteJ09} or quadratic weighted kappa~\cite{cohen1968weighted}. Existing approaches can be broadly categorized into (i) \emph{unimodal soft-labeling methods}~\cite{DBLP:conf/cvpr/DiazM19,DBLP:journals/ijon/LiuFKDXLY20,DBLP:conf/pkdd/HaasH23,DBLP:journals/pr/VargasGH22,DBLP:journals/isci/VargasDGGH23,DBLP:journals/inffus/VargasGBH23}, (ii) \emph{ordinal loss functions}~\cite{DBLP:journals/corr/HouYS16,DBLP:journals/prl/TorrePV18,DBLP:conf/coling/CastagnosMD22,albuquerque2022quasi,DBLP:conf/aaai/NachmaniGSSG25,DBLP:journals/eswa/PolatCT25}, and (iii) \emph{explicit unimodality constraints}~\cite{DBLP:journals/nn/CostaAC08,DBLP:conf/icml/BeckhamP17,DBLP:conf/nips/DeyMK23,DBLP:journals/tai/CardosoCA25}.  

\paragraph*{Conformal Prediction} is a framework that can be applied on top of any base model to produce \emph{prediction sets} (or \emph{intervals} in regression) instead of point predictions~\cite{DBLP:journals/jmlr/ShaferV08,vovk2005algorithmic,DBLP:conf/icml/VovkGS99}. These sets are guaranteed to contain the true label with a user-specified \emph{marginal coverage probability}. 
Inductive conformal prediction~\cite{DBLP:conf/ecml/PapadopoulosPVG02,papadopoulos2008inductive}, also known as split conformal prediction, has become the standard approach in practice due to its computational efficiency. CP has been extensively studied for classification \cite{sadinle2019least, DBLP:conf/nips/RomanoSC20} and regression~\cite{DBLP:conf/nips/RomanoPC19}, where it provides finite-sample, distribution-free coverage guarantees.

More recently, conformal prediction for ordinal classification has attracted increasing attention. \citet{DBLP:conf/miccai/LuAP22} and \citet{zhang2025provably} construct contiguous prediction sets by expanding outward from the mode of the predictive distribution, performing a greedy search for a threshold that ensures marginal coverage while aiming to keep the sets as small as possible. \citet{DBLP:conf/uai/XuGW23} formulate ordinal conformal prediction within the conformal risk control framework~\cite{DBLP:journals/corr/abs-2208-02814}, pursuing essentially the same goals. A different approach is taken by~\citet{DBLP:conf/nips/DeyMK23}, who enforce unimodal predictive distributions, enabling the reuse of existing conformal methods such as least ambiguous set-valued classifiers (LAC)~\cite{sadinle2019least} and adaptive prediction sets (APS)~\cite{DBLP:conf/nips/RomanoSC20}, while guaranteeing contiguity. However, unimodality is a strong bias that is not always warranted and may negatively impact unbiased UQ in OC~\cite{DBLP:journals/ijar/HaasH25,DBLP:journals/corr/abs-2507-00733}.

In contrast to these approaches, we propose an efficient, model-agnostic conformal method that leverages the full predictive distribution through a principled proper scoring rule. This method guarantees median-centered, contiguous prediction sets without relying on greedy or search-based procedures, mode-centered constructions, or unimodality assumptions, while faithfully respecting the ordinal structure of the label space and minimizing ordinal risk under oracle conditional coverage.

\section{Method}

\subsection{Problem formulation}

Consider a dataset $\mathcal{D} = \{(X_i, Y_i)\}_{i=1}^n \subset \mathcal{X} \times \mathcal{Y}$,
drawn from an underlying distribution $\mathcal{P}$ over the joint input-output space
$\mathcal{X} \times \mathcal{Y}$.
We focus on the ordinal classification setting, where the output space
$\mathcal{Y} = \{y_{1}, \ldots, y_{K}\}$ consists of a finite set of class labels endowed with a natural (linear) order
$y_{1} \prec y_{2} \prec \cdots \prec y_{K}$.

Let $(X_{n+1}, Y_{n+1})$ denote a test instance such that the augmented sample
$\mathcal{D} \cup \{(X_{n+1}, Y_{n+1})\}$ is exchangeable.
Assuming that the test label $Y_{n+1}$ is unobserved, the goal of conformal prediction is to construct a prediction set
$\mathcal{C}_\alpha(X_{n+1}) \subseteq \mathcal{Y}$ satisfying a marginal coverage guarantee.

\begin{definition}[Marginal coverage]
\label{def:mc}
A conformal prediction procedure satisfies marginal coverage at level $1-\alpha$ if the prediction set $\mathcal{C}(X_{n+1})$ it outputs satisfies
\begin{equation}
\mathbb{P}\left( Y_{n+1} \in \mathcal{C}(X_{n+1}) \right) \geq 1 - \alpha ,
\end{equation}
where $\alpha \in (0,1)$ is a user-specified error rate and the probability is taken with respect to the joint
distribution $\mathcal{P}$ and any randomness in the construction of $\mathcal{C}$.
\end{definition}
Distribution-free CP methods cannot generally guarantee instance-wise conditional coverage \cite{DBLP:journals/ml/Vovk13}, a strictly stronger requirement than marginal coverage.
\begin{definition}[Conditional coverage]
\label{def:cc}
A conformal prediction procedure satisfies conditional coverage at level $1-\alpha$ if for all $X_{n+1}$, the set $\mathcal{C}(X_{n+1})$ it outputs satisfies
\begin{equation}
\mathbb{P}\left( Y_{n+1} \in \mathcal{C}(X_{n+1}) \mid X_{n+1} \right) \geq 1 - \alpha ,
\end{equation}
where $\alpha \in (0,1)$ is a user-specified error rate and the probability is taken with respect to
the conditional distribution induced by $\mathcal{P}$.
\end{definition}


Conformal prediction constructs prediction sets through a \emph{nonconformity score}, which quantifies how incompatible a candidate label is with a given input relative to a predictive model.
Formally, a nonconformity score is a function
$
s : \mathcal{X} \times \mathcal{Y} \rightarrow \mathbb{R},
$
where larger values indicate greater nonconformity between an input--label pair $(\vec{x}, y)$ and the model's predictive behavior.
In this work, we consider probabilistic predictors
$h : \mathcal{X} \rightarrow \mathbb{P}(\mathcal{Y})$,
which output a predictive probability vector
$\vec{p} = (p(y_1), \ldots, p(y_K)) = (p_1, \ldots, p_K) \in \mathbb{P}(\mathcal{Y})$,
where $p(y_k)$ denotes the predictive probability assigned to class $y_k$.
Nonconformity scores are then derived from this predictive distribution to quantify the incompatibility of candidate labels with the model's predictions.

In this work, we adopt the inductive (or split) conformal prediction framework~\cite{DBLP:conf/ecml/PapadopoulosPVG02,papadopoulos2008inductive}.
The dataset $\mathcal{D}$ is partitioned into a proper training set $\mathcal{D}_{\mathrm{train}}$, used to train a predictive model $h$, and a calibration set
$\mathcal{D}_{\mathrm{cal}} = \{(X_i, Y_i)\}_{i=1}^{n}$, used to compute nonconformity scores.
Given a test input $X_{n+1}$ and a candidate label $y_k \in \mathcal{Y}$, the conformal prediction set is defined as
\begin{equation}
\label{eq:quantile}
\mathcal{C}_\alpha(X_{n+1}) =
\left\{
y \in \mathcal{Y} \; : \;
s(X_{n+1}, y) \leq \hat q_{1-\alpha}
\right\},
\end{equation}
where $\hat q_{1-\alpha}$ denotes the empirical $(1-\alpha)$-quantile of the nonconformity scores computed on the calibration set.
This construction guarantees marginal coverage at level $1-\alpha$ under the exchangeability assumption on
$\mathcal{D}_{\mathrm{cal}} \cup \{(X_{n+1}, Y_{n+1})\}$~\cite{vovk2005algorithmic,DBLP:journals/jmlr/ShaferV08}.
Naturally, the choice of the nonconformity score plays a central role in conformal prediction, as it determines not only calibration but also how informative the resulting prediction sets are, particularly in structured output spaces such as ordinal classification.

\subsection{An Ordinal Oracle Method}
\label{subsec:oracle}
Assume the true data-generating distribution $\mathcal{P}$ is known, which would in principle allow construction of prediction sets satisfying conditional coverage (Definition~\ref{def:cc}).
In the ordinal classification setting, an additional common objective of conformal prediction is to construct the \emph{smallest contiguous} prediction set achieving conditional coverage, thereby balancing coverage with efficiency~\cite{DBLP:conf/miccai/LuAP22,DBLP:conf/uai/XuGW23,zhang2025provably}.

Formally, with contiguous intervals $[l,u] := \{ l, l+1, \ldots, u \}$ and interval probabilities $p_{l,u}^*(X)  :=  \sum_{j=l}^{u} p^*(y_j \mid X) $, the optimal contiguous prediction set for an instance $X$ can be written as
\begin{equation}\label{eq:oracle}
\mathcal{C}_\alpha^*(X) = \underset{[l,u] : 1 \le l \le u \le K}{\arg\min}
\bigl\{\, u - l  \;:\; p_{l,u}^*(X) \ge 1 - \alpha \bigr\}
\end{equation}
Minimizing the length of the prediction set balances efficiency with conditional coverage, producing the most efficient contiguous sets possible.
Contiguity respects the ordinal structure of $\mathcal{Y}$ and ensures that no gaps exist in the predicted set of labels.
This oracle construction is not available in practice, as the true conditional distribution $p^*(y \mid \vec{x})$ is unknown.
Notably, this oracle minimizes set length but does not explicitly account for ordinal risk, such as expected $l_1$
deviation from the true label.

\subsection{Conformal Ordinal Prediction}

To approximate the oracle construction \eqref{eq:oracle} in practice, existing approaches~
\cite{DBLP:conf/miccai/LuAP22,zhang2025provably}
aim to identify a threshold $\lambda$ through greedy search that satisfies the marginal coverage
guarantee (Definition~\ref{def:mc}) while producing efficient contiguous prediction sets.

\begin{equation}
\label{eq:mode}
\begin{aligned}
\mathcal{C}_{\lambda}(X)
&= \{ y_j \in \mathcal{Y} : l(X;\lambda) \le j \le u(X;\lambda) \}, \\[4pt]
\bigl(l(X;\lambda), u(X;\lambda)\bigr)
&= \underset{1 \le l \le u \le K}{\arg\min}
\Bigl\{\, u - l  \;:\;  p_{l,u}(X) \ge \lambda \Bigr\}.
\end{aligned}
\end{equation}

To ensure marginal coverage, the threshold $\lambda$ is selected using the
calibration set $\mathcal{D}_{\mathrm{cal}} = \{(X_i,Y_i)\}_{i=1}^n$ as the
smallest value satisfying
\begin{equation}
\begin{aligned}
\sum_{i=1}^n
\mathds{1}\!\left\{ Y_i \in \mathcal{C}_{\lambda}(X_i) \right\}
\;\ge\;
\left\lceil (1-\alpha)(n+1) \right\rceil .
\end{aligned}
\end{equation}

While the procedure guarantees marginal coverage, relying on a single global threshold $\lambda$ ignores the geometric structure of the ordinal predictive distribution and instead primarily controls set size, i.e., efficiency. As a consequence, the resulting prediction sets may be efficient in terms of cardinality, yet limited in faithfulness with respect to uncertainty quantification and ordinal risk.

Similarly, the approach of~\cite{DBLP:conf/nips/DeyMK23} constructs prediction sets by growing them outward from the mode of a unimodal predictive distribution.

\begin{definition}[Unimodality~\cite{keilson1971some}]
\label{def:umod}
A discrete probability distribution $\vec{p}$ is \emph{unimodal} if there exists at least one index $m$, called the mode, such that
$$
p_k \ge p_{k-1}, \quad \text{for all } k \le m,
$$
$$
p_{k+1} \le p_k, \quad \text{for all } k \ge m.
$$
\end{definition}

The assumption of unimodal predictive distributions is a common inductive bias in ordinal classification~\cite{DBLP:journals/nn/CostaAC08,DBLP:conf/icml/BeckhamP17}. 
Restricting the output distributions of a predictor to be unimodal allows one to obtain contiguous prediction sets using standard nominal nonconformity scores, such as LAC or APS~\cite{DBLP:conf/nips/DeyMK23}. 
Nonetheless, both methods are driven by probability magnitude rather than ordinal geometry, and thus may insufficiently reflect uncertainty in low-probability tails of unimodal predictive distributions. 



\subsection{The Ranked Probability Score (RPS)}

To address these limitations, we propose the use of the
\emph{Ranked Probability Score (RPS)}~\cite{epstein1969scoring}
as a nonconformity score for conformal prediction
in ordinal classification.
RPS is a \textit{proper scoring rule}~\cite{gneiting2007strictly, murphy1969ranked} defined as follows.

\begin{definition}[Proper scoring rule]
A scoring rule $s : \mathcal{Y} \times \mathbb{P}(\mathcal{Y}) \rightarrow \mathbb{R}$ is \emph{proper} if the expected score is minimized when the predicted distribution $\vec{p}$ equals the true distribution $\vec{p}^*$, that is,
$$
\mathbb{E}_{Y \sim \vec{p}^*} \big[ s(Y, \vec{p}^*) \big]
\;\le\;
\mathbb{E}_{Y \sim \vec{p}^*} \big[ s(Y, \vec{p}) \big]
\quad \forall\, \vec{p} \in \mathbb{P}(\mathcal{Y}).
$$
It is \emph{strictly proper} if equality holds if and only if $\vec{p} = \vec{p}^*$.
\end{definition}

The RPS can be viewed as the Brier score~\cite{glenn1950verification}
applied to cumulative predictive probabilities, making it sensitive to the ordinal distances between classes~\cite{DBLP:journals/mansci/JoseNW09}.
For $K=2$, it reduces exactly to the standard Brier score for binary outcomes.

Let
$
F_X(k) = \sum_{j=1}^{k} p(y_j \mid X)
$
denote the predicted cumulative distribution function (CDF), and let
$
\mathds{1}\{ Y \le y_k \}
$
be the corresponding cumulative indicator of the true label.
The RPS is then defined as
\begin{equation}
\label{eq:rps_cdf}
\mathrm{RPS}(X,Y)
= \frac{1}{K-1}
\sum_{k=1}^{K-1}
\left(
F_X(k) - \mathds{1}\{ Y \le y_k \}
\right)^2,
\end{equation}
where the factor $\frac{1}{K-1}$ normalizes the score to lie in the interval $[0,1]$, making it independent of the number of classes $K$.

Unlike previous ordinal conformal methods, RPS leverages the \emph{entire predictive distribution} rather than mode-centric summaries, yielding prediction sets that better adapt to input-dependent uncertainty.
By construction, it provides a natural measure of nonconformity for ordinal candidate labels with respect to the full predictive distribution: it attains its maximum when the predicted mass is concentrated at the opposite end of the ordinal scale from the true label, and its minimum when the predicted mass is concentrated on the true class. As a strictly proper scoring rule for ordinal outcomes~\cite{murphy1969ranked}, RPS is theoretically grounded and encourages calibrated probability forecasts.
Its properness has recently attracted renewed attention for evaluating probabilistic ordinal classifiers~\cite{DBLP:conf/miccai/Galdran23}, despite its long-standing popularity in forecasting~\cite{murphy1970ranked,Murphy1971ANO}.


\subsection{Validity of RPS for Ordinal CP}

In this section, we establish the validity of the RPS as a nonconformity measure within the conformal prediction framework.
Let $s_{\mathrm{RPS}}(X,Y) = \mathrm{RPS}(X,Y)$ denote the nonconformity score derived from the RPS.
\begin{algorithm}[t]
\caption{Prediction Sets via RPS}
\begin{algorithmic}[1]
\State \textbf{Input:} $\mathcal{D}_{\mathrm{cal}} = \{(X_i,Y_i)\}_{i=1}^n$, significance level $\alpha$, new instance $X_{n+1}$
\State \textbf{Output:} Prediction set $\mathcal{C}_\alpha(X_{n+1})$
\State $s_i \gets s_{\mathrm{RPS}}(X_i,Y_i), \quad i=1,\dots,n$
\State $\hat q_{1-\alpha} \gets \text{quantile}_{(1-\alpha)(1+\frac{1}{n})}\big(\{s_i\}_{i=1}^n\big)$
\State $\mathcal{C}_\alpha(X_{n+1}) \gets \{ y \in \mathcal{Y} : s_{\mathrm{RPS}}(X_{n+1}, y) \leq \hat q_{1-\alpha} \}$
\State \Return $\mathcal{C}_\alpha(X_{n+1})$
\end{algorithmic}
\end{algorithm}
A key requirement for using RPS in conformal prediction is that the resulting sets satisfy marginal coverage (Definition~\ref{def:mc}).

\begin{proposition}[Marginal coverage guarantee of RPS-based sets]
\label{prop:rps_cov}
Under the exchangeability assumption on $\mathcal{D}_{\mathrm{cal}} \cup \{(X_{n+1}, Y_{n+1})\}$, the RPS-based conformal procedure satisfies
$$
\mathbb{P}\bigl(Y_{n+1} \in \mathcal{C}(X_{n+1})\bigr) \ge 1-\alpha,
$$
regardless of the underlying predictive model.
\end{proposition}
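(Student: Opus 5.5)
This is the standard split-conformal argument, specialized to the RPS score. Its only ingredients are exchangeability and the fact that the score is a fixed, deterministic function of $(X,Y)$.

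\textbf{Step 1: the scores are exchangeable.} The predictor $h$ is trained on $\mathcal{D}_{\mathrm{train}}$ only, so conditionally on $\mathcal{D}_{\mathrm{train}}$ the map $(\vec{x},y)\mapsto s_{\mathrm{RPS}}(\vec{x},y)$ in \eqref{eq:rps_cdf} is fixed and measurable. Applying one fixed function coordinatewise to an exchangeable sequence yields an exchangeable sequence. Hence the scores $s_1,\dots,s_n,s_{n+1}$, with $s_{n+1}=s_{\mathrm{RPS}}(X_{n+1},Y_{n+1})$, are exchangeable. No property of RPS beyond being a real-valued function of the pair is used, which is why the guarantee holds regardless of the underlying predictive model.

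\textbf{Step 2: rewrite the coverage event.} By the construction in Algorithm~1, $Y_{n+1}\in\mathcal{C}_\alpha(X_{n+1})$ holds if and only if $s_{n+1}\le \hat q_{1-\alpha}$. Here $\hat q_{1-\alpha}$ is the $\lceil (1-\alpha)(n+1)\rceil$-th smallest of $s_1,\dots,s_n$, with the convention $\hat q=+\infty$ if this index exceeds $n$. The standard equivalence then applies: $s_{n+1}\le s_{(k)}$ among the calibration scores, with $k=\lceil(1-\alpha)(n+1)\rceil$, exactly when the rank of $s_{n+1}$ among all $n+1$ scores is at most $k$. Ties are counted in the favorable direction, i.e.\ $s_{n+1}$ is ranked below any calibration score it equals.

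\textbf{Step 3: bound the rank probability.} By exchangeability, the rank of $s_{n+1}$ among the $n+1$ scores is uniform on $\{1,\dots,n+1\}$ when there are no ties. Therefore
\[
\mathbb{P}(\mathrm{rank}\le k)= \frac{k}{n+1}\ge 1-\alpha .
\]
With ties, the favorable tie-breaking in Step 2 only enlarges the event, so the inequality still holds. A clean way to see this is to break ties with independent uniform noise and note that the coverage event contains the tie-broken event.

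\textbf{Main obstacle.} The only delicate points are ties and the exact quantile convention. Ties are a real concern here because RPS takes finitely many values for each $X$ and can coincide across points. I would handle them via the rank argument with favorable tie-breaking rather than assuming distinct scores. Beyond that, the result is simply the classical guarantee of \citet{vovk2005algorithmic}, which may be cited directly.
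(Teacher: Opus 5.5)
Your proof is correct and follows the same route as the paper, whose sketch invokes the standard split-conformal quantile guarantee for any real-valued score evaluated on exchangeable calibration and test points, as in your Steps 1--3. Your version also makes explicit two things the sketch leaves implicit, both handled correctly: reading the level $(1-\alpha)(1+\frac{1}{n})$ in Algorithm~1 as the order statistic $k=\lceil(1-\alpha)(n+1)\rceil$, and the favorable tie-breaking.
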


\begin{proof}[Proof Sketch]
This result follows directly from the general theory of inductive conformal prediction~\cite{DBLP:journals/jmlr/ShaferV08,vovk2005algorithmic}.
Since $s_{\mathrm{RPS}}$ is a real-valued nonconformity score and the calibration scores are exchangeable with the test score, the standard quantile-based construction \eqref{eq:quantile} guarantees marginal coverage~\cite{vovk2005algorithmic,DBLP:journals/jmlr/ShaferV08}.
\end{proof}

Another desirable property for ordinal prediction sets is \emph{nesting} with respect to $\alpha$
\cite{DBLP:conf/miccai/LuAP22,DBLP:conf/uai/XuGW23,zhang2025provably},
e.g., $\mathcal{C}_{\alpha_2}(X)=\{\texttt{baby},\texttt{child}\}
\subseteq \mathcal{C}_{\alpha_1}(X)=\{\texttt{baby},\texttt{child},\texttt{teenager}\}$
for $0<\alpha_1\le\alpha_2<1$.
While RPS sets are always nested regardless of the predictive distribution, mode-based sets \eqref{eq:mode} are nested only when the predictive density is radially monotone, i.e., when it decreases monotonically from the mode along every direction~\cite{zhang2025provably}.

\begin{proposition}[Nestedness of RPS-based sets in $\alpha$]
\label{prop:nested}
For any input $X$ and any $0 < \alpha_1 \le \alpha_2 < 1$, the RPS-based conformal prediction sets satisfy
$
\mathcal{C}_{\alpha_2}(X) \subseteq \mathcal{C}_{\alpha_1}(X).
$
\end{proposition}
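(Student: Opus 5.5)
The plan is to reduce nestedness entirely to monotonicity of the calibrated threshold in $\alpha$. The score $s_{\mathrm{RPS}}(X,y)$ depends only on the input $X$, the candidate label $y$, and the fixed trained predictor $h$; it does not depend on $\alpha$. The only $\alpha$-dependent object in Algorithm~1 is the threshold $\hat q_{1-\alpha}$. So it suffices to show that $\alpha_1 \le \alpha_2$ implies $\hat q_{1-\alpha_2} \le \hat q_{1-\alpha_1}$. Once that holds, any $y$ with $s_{\mathrm{RPS}}(X,y) \le \hat q_{1-\alpha_2}$ also satisfies $s_{\mathrm{RPS}}(X,y) \le \hat q_{1-\alpha_1}$, which gives $\mathcal{C}_{\alpha_2}(X) \subseteq \mathcal{C}_{\alpha_1}(X)$ for every $X$.

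For the threshold, I will make the quantile explicit. Let $s_{(1)} \le \dots \le s_{(n)}$ be the sorted calibration scores, which are computed once and are independent of $\alpha$. The empirical quantile at level $(1-\alpha)(1+1/n)$ is
\[
\hat q_{1-\alpha} = s_{(k(\alpha))}, \qquad k(\alpha) = \lceil (n+1)(1-\alpha) \rceil ,
\]
with the convention $\hat q_{1-\alpha} = +\infty$ (or the maximal score value $1$) when $k(\alpha) > n$. The map $\alpha \mapsto (n+1)(1-\alpha)$ is decreasing, and the ceiling is nondecreasing. Hence $k(\alpha_2) \le k(\alpha_1)$, and because order statistics are nondecreasing in their index, $s_{(k(\alpha_2))} \le s_{(k(\alpha_1))}$. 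The infinite case is also consistent: if $k(\alpha_2) > n$ then $k(\alpha_1) > n$ as well.

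The only delicate step is fixing the exact quantile convention, including interpolation and the $k>n$ case. I would argue generically: any empirical quantile function $\tau \mapsto Q(\tau)$ is nondecreasing in $\tau$, whether it is the generalized inverse of the empirical CDF or a standard interpolation. Since $\tau = (1-\alpha)(1+1/n)$ is decreasing in $\alpha$, the composition $\alpha \mapsto \hat q_{1-\alpha}$ is nonincreasing. This gives the threshold monotonicity without relying on a particular convention.

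Finally, I will note that nothing in the argument uses properties of the predictive distribution such as unimodality. It uses only that the score is fixed and the threshold is monotone, so the nesting holds for arbitrary $h$, in contrast to the mode-based sets \eqref{eq:mode}.
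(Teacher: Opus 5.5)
Your proposal is correct and takes the same approach as the paper. The score does not depend on $\alpha$ and the threshold $\hat q_{1-\alpha}$ is nonincreasing in $\alpha$, so the sublevel sets are nested. Your explicit order-statistic form $\hat q_{1-\alpha} = s_{(\lceil (n+1)(1-\alpha)\rceil)}$, with the $k(\alpha)>n$ case handled, just spells out the quantile monotonicity that the paper asserts in one line.
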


\begin{proof}[Proof]
If $\alpha_1 \le \alpha_2$, then $1-\alpha_1 \ge 1-\alpha_2$, which implies
$\hat q_{1-\alpha_1} \ge \hat q_{1-\alpha_2}$.
Hence, any label $y$ such that
$s_{\mathrm{RPS}}(X,y) \le \hat q_{1-\alpha_2}$
also satisfies
$s_{\mathrm{RPS}}(X,y) \le \hat q_{1-\alpha_1}$,
and therefore
$\mathcal{C}_{\alpha_2}(X) \subseteq \mathcal{C}_{\alpha_1}(X)$.
\end{proof}

Furthermore, ordinal prediction sets must be contiguous along the label ordering, consistent with the oracle definition in \eqref{eq:oracle}~\cite{DBLP:conf/uai/XuGW23,DBLP:conf/nips/DeyMK23}.

\begin{theorem}[Contiguity of RPS-based sets]
\label{theo:cont}
Let $\mathcal{Y} = \{y_1 \prec y_2 \prec \dots \prec y_K\}$ be a set of ordered labels, and let $s_{\mathrm{RPS}}$ denote the RPS-based nonconformity score.  
Then, for any input $X$ and any miscoverage level $\alpha \in (0,1)$, the conformal prediction set
$
\mathcal{C}^\mathrm{RPS}_\alpha(X) = \{ y \in \mathcal{Y} : s_{\mathrm{RPS}}(X,y) \le \hat q_{1-\alpha} \}
$
forms a contiguous interval of labels centered at the median $m$, i.e., there exist integers $1 \le l \le m \le u \le K$ such that
$
\mathcal{C}_\alpha(X) = \{y_l, \dots, y_m, \dots ,y_u\}.
$
\end{theorem}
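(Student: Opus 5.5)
The plan is to study the RPS nonconformity score of a fixed input $X$ as a function of the candidate label index alone, and to show that this function is discretely convex with its minimum at the median of the predictive distribution. Once that is established, contiguity is immediate: every sublevel set $\{j : g(j) \le \hat q_{1-\alpha}\}$ of a discretely convex sequence is an integer interval. If that interval is nonempty, it contains a minimizer, which is the median.

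\textbf{Step 1.} Fix $X$ and write $F(k) = F_X(k)$, a nondecreasing sequence with values in $[0,1]$. Set $g(j) := (K-1)\, s_{\mathrm{RPS}}(X, y_j)$. For the candidate label $y_j$, the indicator $\mathds{1}\{y_j \le y_k\}$ equals $0$ for $k<j$ and $1$ for $k \ge j$. Therefore
\[
g(j) = \sum_{k=1}^{j-1} F(k)^2 + \sum_{k=j}^{K-1} \bigl(1-F(k)\bigr)^2 .
\]

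\textbf{Step 2.} Compute the forward difference. Moving from $j$ to $j+1$ changes only the $k=j$ term, which goes from $(1-F(j))^2$ to $F(j)^2$. Hence, for $1 \le j \le K-1$,
\[
g(j+1) - g(j) = F(j)^2 - \bigl(1-F(j)\bigr)^2 = 2F(j) - 1 .
\]
Because $F$ is nondecreasing, these increments are nondecreasing in $j$, so $g$ is discretely convex. Define the median as $m := \min\{ j : F(j) \ge \tfrac12 \}$; this is well defined since $F(K) = 1$. Then:
\begin{itemize}
\item for $j < m$ the increment is strictly negative, so $g$ strictly decreases up to $m$;
\item for $j \ge m$ the increment is nonnegative, so $g$ is nondecreasing from $m$ on.
\end{itemize}
Consequently $m$ minimizes $g$.

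\textbf{Step 3.} Let $t := (K-1)\hat q_{1-\alpha}$. On $\{1,\dots,m\}$ the function $g$ is decreasing, so $\{j \le m : g(j) \le t\}$ has the form $\{l,\dots,m\}$. On $\{m,\dots,K\}$ it is nondecreasing, so $\{j \ge m : g(j) \le t\}$ has the form $\{m,\dots,u\}$. Both pieces are nonempty exactly when $g(m) \le t$, and their union is $\{y_l,\dots,y_m,\dots,y_u\}$. This gives the integers $l \le m \le u$ required by the statement.

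\textbf{Main obstacle.} The only real subtlety is the degenerate case $\hat q_{1-\alpha} < s_{\mathrm{RPS}}(X, y_m)$, in which the set is empty. An empty set is trivially contiguous but is not centered at $m$. I would handle this by stating the centering claim under nonemptiness, or by adopting the usual convention of always including the minimizer. A related point is ties when $F(j) = \tfrac12$ exactly: then $g(j) = g(j+1)$, the minimizer is not unique, and any median in the flat region works. The inequality structure above covers this case without modification.
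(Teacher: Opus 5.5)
Your proposal is correct and follows the paper's proof essentially step for step. Like the paper, you compute the consecutive difference $(2F(\ell)-1)/(K-1)$, use monotonicity of $F$ to get a V-shaped score minimized at $m=\min\{\ell : F(\ell)\ge 1/2\}$, and conclude that the sublevel set is an interval. Your remark that the set is empty whenever $\hat q_{1-\alpha} < s_{\mathrm{RPS}}(X,y_m)$, and therefore not ``centered at $m$'', is a legitimate caveat that the paper's statement and proof ignore; restricting the centering claim to nonempty sets is the right fix.
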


Unlike existing nonconformity scores for nominal classification such as LAC~\cite{sadinle2019least}
or APS~\cite{DBLP:conf/nips/RomanoSC20}, RPS-based prediction sets are contiguous by design.
This contiguity follows directly from the ordinal structure and the monotonicity of the cumulative
distribution function, and does not require imposing unimodality assumptions on the predictive
probability mass function~\cite{DBLP:conf/nips/DeyMK23}, which may be unwarranted for ordinal targets.
A formal proof is provided in Appendix~\ref{sec:proof_cont}. Together with marginal validity and nesting in $\alpha$, these properties ensure that RPS-based conformal prediction produces prediction sets well suited for ordinal tasks.

Moreover, unlike \emph{mode}-centered approaches, RPS-based prediction sets are \emph{median}-centered (see Appendix \ref{sec:proof_cont}), thereby balancing the cumulative probability mass above and below the center. 
This property promotes robustness under skewed or heavy-tailed distributions; sublevel sets expand by minimizing the imbalance between the lower and upper predictive tails, rather than expanding outwards from a single high-probability mode. 
While this characteristic does not guarantee global risk optimality for the full conformal procedure, defined in terms of the set-based $l_1$ error \eqref{eq:ord_risk}, it does imply instance-level risk optimality at the level of the nonconformity scores. 
Furthermore, this instance-level optimality scales to full risk optimality within a conditional coverage oracle setting, where the true conditional distribution is known, causing pointwise risk minimization to aggregate directly into prediction sets that globally minimize the expected $l_1$ error.

\begin{theorem}[Ordinal risk optimality of RPS-based median-grown prediction sets under oracle conditional coverage]
\label{theo:ordrisk}
For a fixed input $X$, let $\mathcal{C}_\mathrm{RPS}(X) = \{y_l, \dots, y_m, \dots ,y_u\}$ denote the contiguous RPS-based prediction set, which is grown from a median index $m$ as in Theorem~\ref{theo:cont}.
Define the \emph{ordinal risk} of a set $\mathcal{C}(X)$ as the expected $l_1$-distance of true label from this set: 
\begin{equation}
\label{eq:ord_risk}
R(\mathcal{C}(X)) := \sum_{y=1}^K p(y \mid X)\, \min_{c \in \mathcal{C}(X)} |y - c|.
\end{equation}
Let $\mathcal{C}(X)$ be any other \emph{contiguous} conformal set of minimal cardinality satisfying the same coverage constraint as $\mathcal{C}_\mathrm{RPS}(X)$. Then
\begin{equation}
R(\mathcal{C}_\mathrm{RPS}(X)) \le R(\mathcal{C}(X)).
\end{equation}
\end{theorem}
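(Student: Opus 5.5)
The plan is to reduce everything to the predictive CDF $F(k)=F_X(k)$, with $p=p^*$ in the oracle setting. For a fixed input $X$ I will then compare the RPS interval against all contiguous intervals of the same cardinality. Competitors of the same minimal cardinality are exactly the intervals $[l,l+w]$ with the width $w=u-l$ fixed, so the problem becomes a one-dimensional optimisation over the left endpoint $l$.

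\emph{Step 1 (closed form for the risk).} For an interval $[l,u]$, the distance $\min_{c}|y-c|$ equals $l-y$ for $y<l$, equals $y-u$ for $y>u$, and is zero otherwise. Swapping the order of summation (Abel summation) gives
\begin{equation*}
R([l,u]) \;=\; \sum_{k=1}^{l-1} F(k) \;+\; \sum_{k=u}^{K-1}\bigl(1-F(k)\bigr).
\end{equation*}
Fix $w$ and write $R(l):=R([l,l+w])$. Shifting the window right by one adds the term $F(l)$ and removes the term $1-F(l+w)$, so
\begin{equation*}
\Delta(l) := R(l+1)-R(l) = F(l)+F(l+w)-1 .
\end{equation*}
Since $F$ is nondecreasing, $\Delta$ is nondecreasing in $l$. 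Hence $R$ is discretely convex in $l$. Its minimisers are exactly the $l^\star$ with $\Delta(l^\star-1)\le 0\le \Delta(l^\star)$, so the optimal window balances the lower tail mass $F(l-1)$ against the upper tail mass $1-F(u)$. This is the median-balancing property.

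\emph{Step 2 (RPS interval as a balanced window).} From \eqref{eq:rps_cdf}, the candidate score is $s(j)\propto\sum_{k<j}F(k)^2+\sum_{k\ge j}(1-F(k))^2$, so
\begin{equation*}
s(j+1)-s(j)\propto 2F(j)-1 .
\end{equation*}
Thus $s$ is discretely convex with minimum at the median $m$; this is the argument behind Theorem~\ref{theo:cont}. For any threshold, the sublevel set $[l,u]$ is obtained by growing from $m$ and at each step adding whichever neighbouring label has the smaller score. At the final set this gives $s(l-1)>\hat q_{1-\alpha}\ge\max(s(l),s(u))$ and $s(u+1)>\hat q_{1-\alpha}$. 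I will translate these endpoint inequalities into the sign conditions $\Delta(l-1)\le 0\le\Delta(l)$ using the telescoped forms
\begin{equation*}
s(u+1)-s(l)\propto\sum_{k=l}^{u}\bigl(2F(k)-1\bigr),\qquad s(l-1)-s(u)\propto-\sum_{k=l-1}^{u-1}\bigl(2F(k)-1\bigr).
\end{equation*}
If that works, Step 1 gives $R(\mathcal{C}_{\mathrm{RPS}}(X))=\min_l R(l)\le R(\mathcal{C}(X))$ for every equal-width contiguous competitor, whether or not it meets the coverage constraint. A fortiori the inequality holds against those competitors that satisfy the constraint. In the oracle setting the RPS set itself meets the constraint, so it lies in the admissible class.

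\emph{Main obstacle.} The hard step is Step 2. The RPS stopping rule compares \emph{sums} of $2F(k)-1$ over the whole window, whereas the risk optimality condition depends only on the two endpoint values $F(l)+F(u)$. These conditions are not identical in general. I would first try to show that the window-sum condition implies the endpoint condition, using monotonicity of $F$: all terms $2F(k)-1$ with $k<m$ are negative and all with $k\ge m$ are nonnegative, so the sums are dominated by the extreme terms.

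If this fails for general $F$, I would fall back in two stages. First, break ties, which requires equal-width competitors and distinct scores. Second, sharpen the claim into a pointwise exchange argument. In that argument, if moving the RPS window by one step lowered the risk, i.e.\ $\Delta(l)<0$ (or $\Delta(l-1)>0$), I would derive $s(u+1)<s(l)$ (respectively $s(l-1)<s(u)$). That would contradict the greedy ordering in which the sublevel set is built. If even this needs an extra assumption, such as unimodality of the cumulative increments near the boundary, I would state it explicitly as the regime in which the theorem holds.
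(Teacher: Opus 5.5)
Your Step 1 is correct, and it makes precise what the paper's step (3) only asserts. The risk has the closed form $R([l,u])=\sum_{k<l}F(k)+\sum_{k\ge u}(1-F(k))$. The shift increment $\Delta(l)=F(l)+F(l+w)-1$ is nondecreasing. The optimal width-$w$ windows are exactly those with $F(l-1)+F(u-1)\le 1\le F(l)+F(u)$.

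The gap is Step 2, and it cannot be closed. The window-sum inequalities that characterise RPS sublevel sets do not imply this endpoint condition. Take $K=5$ and $p(\cdot\mid X)=(0,\,0.1,\,0.6,\,0.1,\,0.2)$. Then $F=(0,\,0.1,\,0.7,\,0.8,\,1)$, $m=3$, and $(K-1)\,s_{\mathrm{RPS}}(X,y_\ell)=1.94,\,0.94,\,0.14,\,0.54,\,1.14$ for $\ell=1,\dots,5$. The sublevel sets grow as $\{y_3\},\{y_3,y_4\},\{y_2,y_3,y_4\},\{y_2,\dots,y_5\},\mathcal{Y}$, so $[l,u]=[2,4]$ is an RPS set. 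For this set, $\sum_{k=2}^{4}(2F(k)-1)=0.2>0$, but $\Delta(2)=F(2)+F(4)-1=-0.1<0$. The interior term $2F(3)-1=0.4$ flips the sign, so the sum is not ``dominated by the extreme terms''. Your exchange implication $\Delta(l)<0\Rightarrow s(y_{u+1})<s(y_l)$ fails as well, since $1.14>0.94$. Concretely, $R(\{y_2,y_3,y_4\})=0.2$, while the shifted window has $R(\{y_3,y_4,y_5\})=0.1$. The two conditions are the same condition only when $u-l\le 1$, where the window sum equals the endpoint sum.

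This is not merely a failure of your route; the same example refutes the theorem as stated. Take $1-\alpha=0.8$. Then $\{y_2,y_3,y_4\}$ is the smallest RPS sublevel set meeting oracle coverage, with mass exactly $0.8$. No contiguous pair reaches $0.8$, so the minimal cardinality is $3$. Yet $\{y_3,y_4,y_5\}$ is a contiguous minimal-cardinality set with coverage $0.9$ and half the risk. A strictly positive perturbation such as $(0.01,\,0.09,\,0.6,\,0.1,\,0.2)$ with $1-\alpha=0.79$ behaves the same way, giving $0.21$ versus $0.11$.

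The paper's proof does not get past your obstacle either. Its step (2) compares the increments $1-2F(l-1)$ and $2F(u)-1$ rather than the scores $s(y_{l-1})=s(y_l)+\frac{1-2F(l-1)}{K-1}$ and $s(y_{u+1})=s(y_u)+\frac{2F(u)-1}{K-1}$. That substitution is legitimate only when $s(y_l)=s(y_u)$. At $\{y_3,y_4\}$, both the increments and the risk reductions favour $y_5$, yet the sublevel set adds $y_2$. Its step (3) then simply asserts the fixed-size optimality you were trying to derive.

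Separately, you misread ``minimal cardinality''. It refers to the smallest size $s^\star$ of a covering contiguous set, which can be smaller than $|\mathcal{C}_{\mathrm{RPS}}(X)|$, so competitors need not have width $w$. The paper bridges this with monotonicity of $R$ under inclusion, and you would need the same step, enlarging the competitor to width $w$. That only helps once fixed-width optimality is available, which it is not. Your last fallback, an explicit additional hypothesis, is therefore the only way to reach a true statement.
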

A formal proof is provided in Appendix~\ref{sec:proof_ord_risk}. RPS-based sets, directly target ordinal risk reduction under oracle conditional coverage by sequentially adding adjacent labels that minimize risk, starting from the singleton risk-minimizing set containing only the median. This contrasts with mode-centered procedures, which prioritize set-size efficiency and may overlook the ordinal structure of the labels (Section~\ref{subsec:oracle}).

As illustrative examples, consider the unimodal distribution 
$\vec{p}_{\mathrm{umod}} = (0.06, 0.24, 0.32, 0.20, 0.18)$ 
and the multimodal distribution 
$\vec{p}_{\mathrm{multimod}} = (0.09, 0.12, 0.40, 0.04, 0.35)$, 
both with median $m=3$, coinciding with the mode. 
Tables~\ref{tab:median_mode_step_comparison_umod} and~\ref{tab:median_mode_step_comparison_multi} compare step-wise set expansions based on RPS with greedy mode-based expansion (e.g., min-CPS). 
The greedy strategy expands from the mode by iteratively adding the class with the largest remaining probability, thereby maximizing local probability mass for a given set size. 
In contrast, the RPS-based expansion accounts for cumulative distance-weighted deviations and directly minimizes ordinal risk \eqref{eq:ord_risk}. 
While the differences are moderate in the unimodal case, they become substantial for the multimodal distribution, where heavy tail mass causes the greedy procedure to underestimate ordinal risk.

\begin{table}[h!]
\scriptsize
\centering
\begin{tabular}{|c|c|c|c|c|}
\hline
\textbf{\#} & \textbf{RPS-based} & \textbf{Risk} & \textbf{Mode-grown} & \textbf{Risk} \\
\hline
1 & \texttt{\{3\}} & 0.92 & \texttt{\{3\}} & 0.92 \\
2 & \texttt{\{3, 4\}} & \textbf{0.54} & \texttt{\{2, 3\}} & 0.62 \\
3 & \texttt{\{2, 3, 4\}} & 0.24 & \texttt{\{2, 3, 4\}} & 0.24 \\
4 & \texttt{\{2, 3, 4, 5\}} & 0.06 & \texttt{\{2, 3, 4, 5\}} & 0.06 \\
5 & \texttt{\{1, 2, 3, 4, 5\}} & 0.00 & \texttt{\{1, 2, 3, 4, 5\}} & 0.00 \\
\hline
\end{tabular}
\caption{Step-wise comparison of ordinal risk for median RPS-based versus greedy mode-based set expansions for an exemplary unimodal distribution $\vec{p}_\mathrm{umod}=(0.06, 0.24, 0.32, 0.20, 0.18)$. Lower ordinal risk \eqref{eq:ord_risk} is highlighted.}
\label{tab:median_mode_step_comparison_umod}
\end{table}

\begin{table}[h!]
\scriptsize
\centering
\begin{tabular}{|c|c|c|c|c|}
\hline
\textbf{\#} & \textbf{RPS-based} & \textbf{Risk} & \textbf{Mode-grown} & \textbf{Risk} \\
\hline
1 & \texttt{\{3\}} & 1.04 & \texttt{\{3\}} & 1.04 \\
2 & \texttt{\{3, 4\}} & \textbf{0.65} & \texttt{\{2, 3\}} & 0.83 \\
3 & \texttt{\{3, 4, 5\}} & \textbf{0.30} & \texttt{\{1, 2, 3\}} & 0.74 \\
4 & \texttt{\{2, 3, 4, 5\}} & \textbf{0.09} & \texttt{\{1, 2, 3, 4\}} & 0.35 \\
5 & \texttt{\{1, 2, 3, 4, 5\}} & 0.00 & \texttt{\{1, 2, 3, 4, 5\}} & 0.00 \\
\hline
\end{tabular}
\caption{Step-wise comparison of ordinal risk for median RPS-based versus greedy mode-based set expansions for a multimodal distribution $\vec{p}_\mathrm{multimod} = (0.09, 0.12, 0.40, 0.04, 0.35)$. Lower ordinal risk \eqref{eq:ord_risk} is highlighted.}
\label{tab:median_mode_step_comparison_multi}
\end{table}

\subsection{Computational complexity of RPS}

During conformal \textbf{calibration}, we only need to compute the RPS score 
$s_{\mathrm{RPS}}(X_i, Y_i)$ for the true label $Y_i$ of each calibration point $X_i$. 
This requires a single RPS evaluation per point, each taking 
$\mathcal{O}(K)$ time to compute the cumulative distribution $F_X(k)$ and the score. 
Hence, the total cost for the calibration dataset $\mathcal{D}_{\mathrm{cal}}$ is $\mathcal{O}(n K)$.

During conformal \textbf{inference}, computing a prediction set for a new input requires evaluating
$s_{\mathrm{RPS}}(X, y_\ell)$ for all $K$ candidate labels $y_\ell$. Naively, this requires
$\mathcal{O}(K^2)$ time. However, after computing $s_{\mathrm{RPS}}(X, y_1)$ once, we can exploit the exact recurrence
$$
s_{\mathrm{RPS}}(X, y_{\ell+1})
= s_{\mathrm{RPS}}(X, y_\ell) + \frac{2F(\ell) - 1}{K-1},
$$
to compute all $K$ scores in linear time $\mathcal{O}(K)$ (see Appendix~\ref{sec:proof_cont}). 
Specifically, the initial score $s_{\mathrm{RPS}}(X, y_1)$ is computed in $\mathcal{O}(K)$ time, and each subsequent score is obtained with a constant-time update, i.e., $\mathcal{O}(1)$ per label.

Overall, RPS-based conformal prediction is highly efficient: linear in the number of calibration points $n$ and the number of labels $K$. 
By contrast, prior methods~\cite{DBLP:conf/miccai/LuAP22,DBLP:conf/uai/XuGW23, zhang2025provably} 
typically require an additional multiplicative cost of $\mathcal{O}(\log(1/\epsilon))$ to identify an $\epsilon$-optimal contiguous probability mass threshold $\lambda$ in~\eqref{eq:mode} via binary search to ensure marginal coverage.

\section{Experiments}

To evaluate the quality of RPS-based prediction sets, we conduct experiments on several ordinal image and tabular datasets, comparing against established ordinal conformal baselines. All experiments use neural networks as the underlying model class. 
The source code of the following experiments is made publicly available \footnote{\url{https://github.com/stefanahaas41/rps-ordinal-conformal-prediction}}.

\subsection{Baseline Nonconformity Scores}

As nominal baseline nonconformity measures, we include the Least ambiguous set-valued classifier (LAC) score~\cite{sadinle2019least}, as well as the adaptive prediction set (APS) score~\cite{DBLP:conf/nips/RomanoSC20} (see Appendix~\ref{sec:experiments_details} for details).
Both LAC and APS treat class labels as unstructured, providing natural baselines for assessing the benefits of incorporating ordinal structure. As ordinal conformal prediction baselines, we consider conformal prediction sets for ordinal classification (COPOC)~\cite{DBLP:conf/nips/DeyMK23} combined with LAC (COPOCL) and APS (COPOCA), as well as the min-CPS approach~\cite{zhang2025provably}. The latter is a greedy search–based algorithm that produces small contiguous mode-centered prediction sets and improves upon OrdinalAPS~\cite{DBLP:conf/miccai/LuAP22} in both computational efficiency and empirical performance.
As a naive ordinal baseline method, we also include the ordinal CDF (OCDF)~\cite{DBLP:conf/miccai/LuAP22}, which constructs prediction intervals from cumulative probabilities and is therefore an interesting baseline for RPS.

\subsection{Performance Metrics}

To evaluate the performance of the different conformal methods, we compute the empirical coverage (COV), as well as the average prediction set size (PS).
Additionally, we include the mean interval width (MW) (see Appendix~\ref{sec:experiments_details} for details). Another important aspect is the contiguity of the produced prediction sets, which we measure following~\cite{DBLP:conf/nips/DeyMK23} via the contiguity violation (CV) metric, where $0$ indicates no contiguity violations on $\mathcal{D}_\mathrm{test}$ and $1$ corresponds to maximal contiguity violation.

Since a central objective in ordinal classification is to minimize error distances, we prioritize ordinal-specific metrics. 
First, with $M = \{i : Y_i \notin \mathcal{C}(X_i)\}$ and
$$
d(Y_i,\mathcal{C}(X_i)) =
\begin{cases}
l_i - Y_i & \text{if } Y_i < l_i \\
Y_i - u_i & \text{if } Y_i > u_i
\end{cases}
$$
for a contiguous interval $\mathcal{C}(X_i) = [l_i,u_i]$, the mean absolute miscoverage magnitude (MAMM) is defined as
$$
\mathrm{MAMM} 
:= \frac{1}{|M|} \sum_{i \in M} d(Y_i,\mathcal{C}(X_i)) \, .
$$
The worst-case absolute miscoverage magnitude (WAMM) is
$$
\mathrm{WAMM} := \max_{i \in M} d(Y_i,\mathcal{C}(X_i)).
$$

These metrics quantify ordinal risk under miscoverage, measuring how far the true label lies from the prediction set rather than merely whether it is excluded.

Another highly relevant metric, which evaluates the trade-off between interval efficiency and distance-based error, is the average interval score loss (AISL)~\cite{gneiting2007strictly}, recently applied in conformal prediction for regression~\cite{DBLP:conf/uai/CabezasSRI25}:
\begin{align*}
\mathrm{AISL}
:= \frac{1}{|\mathcal{D}_\mathrm{test}|} \sum_{i \in \mathcal{D}_\mathrm{test}} 
\Bigg[
&(u_i - l_i)
+ \frac{2}{\alpha} (l_i - Y_i) \, \mathds{1}\{Y_i < l_i\}\\
&+ \frac{2}{\alpha} (Y_i - u_i) \, \mathds{1}\{Y_i > u_i\} 
\Bigg].
\end{align*}

AISL simultaneously accounts for interval width and miscoverage magnitude: the first term $(u_i - l_i)$ penalizes wide intervals, encouraging efficiency, while the second and third terms penalize distance-based errors outside the interval. The penalties are scaled by $2/\alpha$, so stricter coverage requirements (smaller $\alpha$) amplify the cost of miscoverage. By combining these factors, AISL provides an interpretable metric capturing both compactness and ordinal risk, making it particularly suitable for evaluating conformal prediction methods in ordinal settings.



\subsection{Experiments on Ordinal Datasets}

\begin{figure*}[!htbp]
    \centering
    \includegraphics[width=\linewidth]{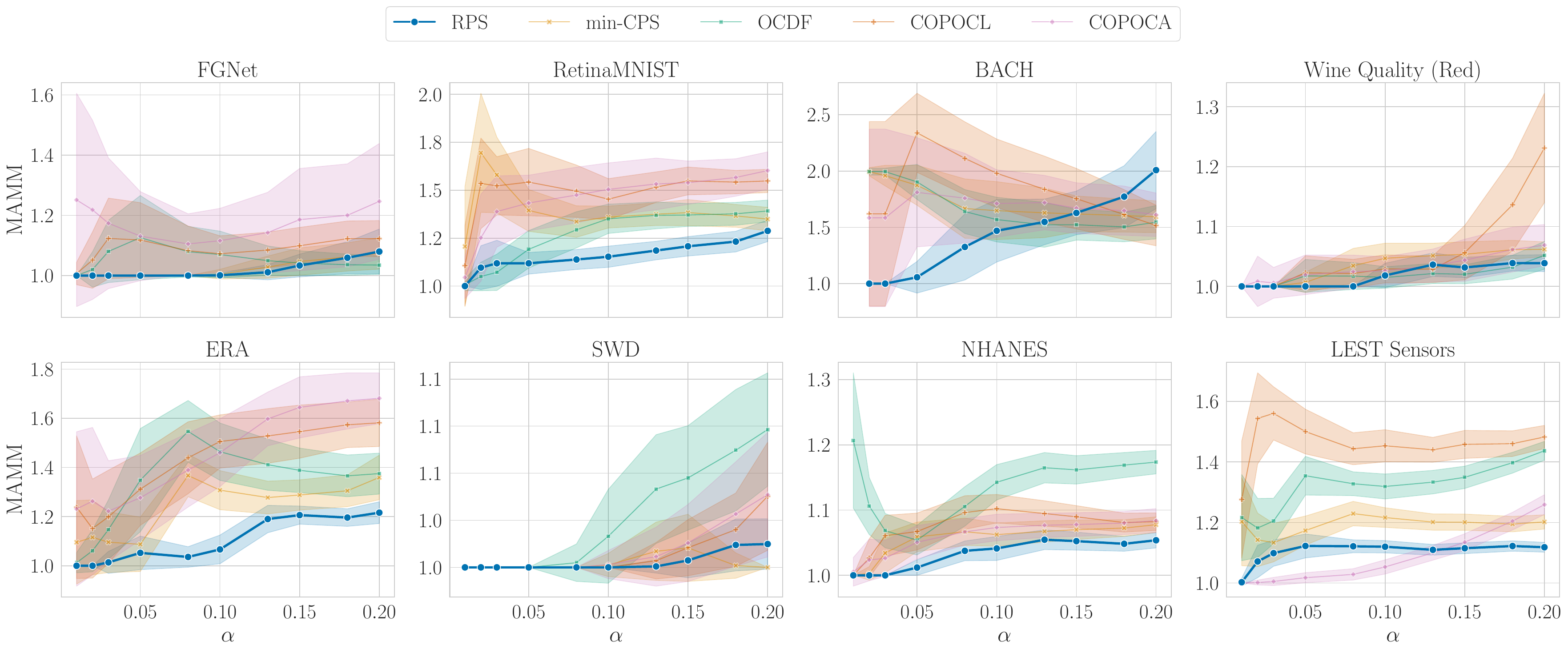}
    \caption{Comparison of prediction sets at $\alpha = \{0.01, 0.02, 0.03, 0.05, 0.08, 0.1, 0.13, 0.15, 0.18, 0.2\}$ across methods and datasets using the MAMM metric. Shaded regions indicate standard deviation over 50 trials.}
    \label{fig:mamm}
\end{figure*}

\begin{figure*}[!htbp]
    \centering
    \includegraphics[width=\linewidth]{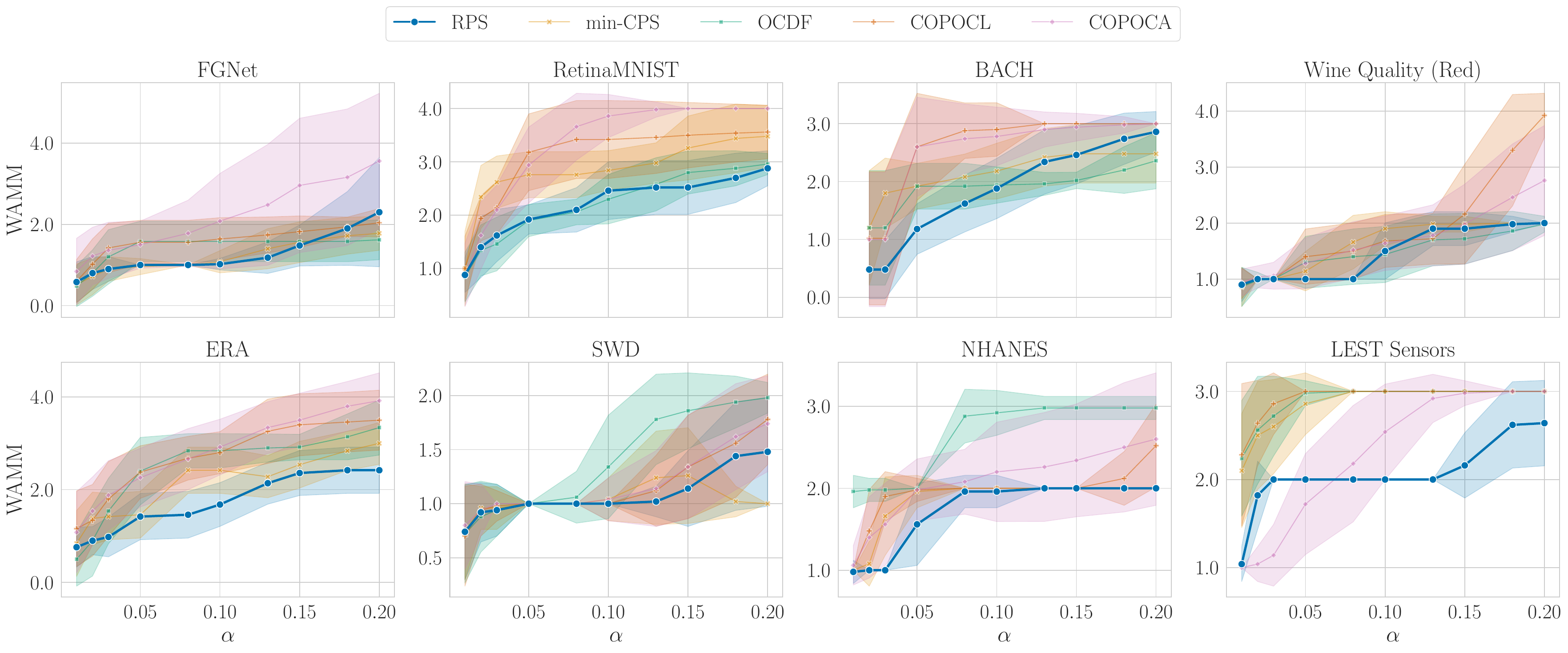}
    \caption{Comparison of prediction sets at $\alpha = \{0.01, 0.02, 0.03, 0.05, 0.08, 0.1, 0.13, 0.15, 0.18, 0.2\}$ across methods and datasets using the WAMM metric. Shaded regions indicate standard deviation over 50 trials.}
    \label{fig:wamm}
\end{figure*}
\begin{figure*}[!htbp]
    \centering
    \includegraphics[width=\linewidth]{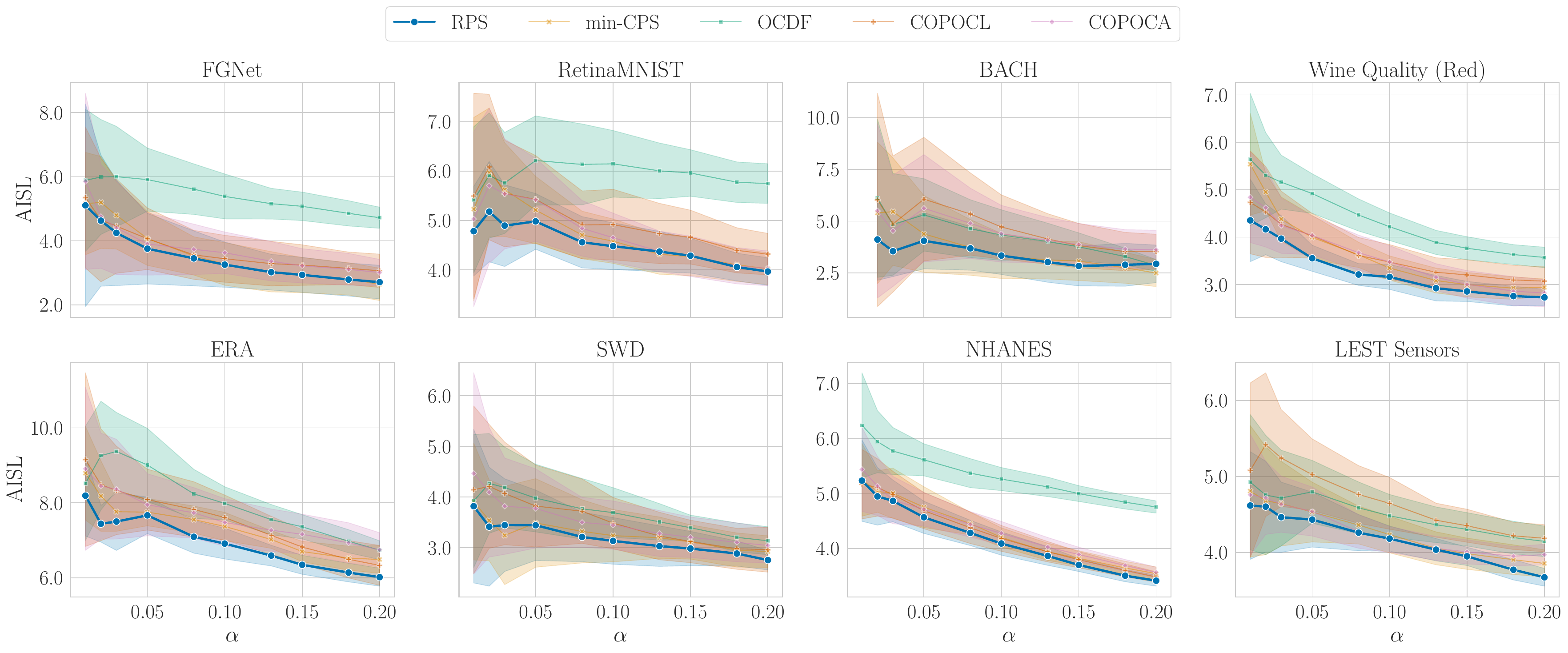}
    \caption{Comparison of prediction sets at $\alpha = \{0.01, 0.02, 0.03, 0.05, 0.08, 0.1, 0.13, 0.15, 0.18, 0.2\}$ across methods and datasets using the AISL metric. Shaded regions indicate standard deviation over 50 trials.}
    \label{fig:aisl}
\end{figure*}

We evaluate RPS-based prediction sets alongside several baseline methods on two medical image datasets (BACH~\cite{aresta2019bach} and RetinaMNIST~\cite{medmnistv2}) and an age-estimation dataset (FGNet~\cite{lanitis2002toward,panis2016overview}). Additionally, we include multiple ordinal tabular benchmark datasets~\cite{ayllon2025toc}.  

Due to space constraints, we focus on metrics that reflect ordinal miscoverage directly, consistent with our argument that evaluation in COP should account for the severity of missed predictions rather than treating all miscoverage events equally. Specifically, MAMM and WAMM quantify miscoverage magnitude, while AISL jointly captures prediction set size and ordinal miscoverage in a single score. 
LAC and APS violate the contiguity requirement for ordinal prediction sets and are therefore excluded from metrics that assume contiguous sets, such as MAMM, WAMM, and AISL.  
Additional experimental details and results over all datasets and metrics are provided in Appendix~\ref{sec:experiments_details} and~\ref{sec:experiments_ordinal_datasets}.

While RPS-based sets do not always achieve the highest prediction set efficiency, with min-CPS~\cite{zhang2025provably} showing particularly strong efficiency on this metric (see Appendix~\ref{sec:experiments_details} and~\ref{sec:experiments_ordinal_datasets}), they tend to achieve lower ordinal miscoverage, as reflected in MAMM (Figure~\ref{fig:mamm}) and WAMM (Figure~\ref{fig:wamm}). In contrast, mode-centered methods tend to underestimate ordinal risk relative to RPS-based sets. These results provide empirical support for our theoretical claim that RPS-based sets optimize for ordinal risk rather than set size (Theorem~\ref{theo:ordrisk}).

Furthermore, when considering the trade-off between efficiency and miscoverage magnitude via AISL (Figure~\ref{fig:aisl}), RPS-based sets achieve a favorable balance, demonstrating their practical advantage for real-world ordinal prediction tasks.

\section{Conclusion \& Discussion}

We have demonstrated that ranked probability score (RPS)-based conformal sets are, by construction, contiguous and median-centered, providing robust prediction sets for ordinal classification that minimize ordinal risk under oracle conditional
coverage, defined as the set-based $l_1$ error. These sets effectively balance efficiency and error reduction, a critical consideration in high-stakes applications.  
Specifically, RPS-based sets achieve a favorable trade-off between interval width and miscoverage magnitude while guaranteeing marginal coverage. They are highly competitive with existing ordinal conformal prediction methods, do not depend on specific model architectures, and can be applied to any underlying predictive model. Importantly, RPS-based sets produce meaningful contiguous intervals regardless of the data distribution, making them a practical and versatile solution for ordinal prediction tasks.

Building on this RPS-based framework for COP, a natural direction for future work is to incorporate a delineation of uncertainty into epistemic and aleatoric components~\cite{hullermeier2021aleatoric}. This line of research has recently attracted attention in both OC~\cite{DBLP:journals/corr/abs-2507-00733} and CP~\cite{sale2025aleatoric,javanmardi2025optimal}.

\begin{acknowledgements} 
Alireza Javanmardi gratefully acknowledges funding by the Klaus Tschira Stiftung (project 00.019.2024). 

\end{acknowledgements}


\bibliography{uai2026-template}

\newpage

\onecolumn



\appendix

\section{Proof of Theorem~\ref{theo:cont}}
\label{sec:proof_cont}
\begin{theorem*}[Contiguity of RPS-based sets]
Let $\mathcal{Y} = \{y_1 \prec y_2 \prec \dots \prec y_K\}$ be a set of ordered labels, and let $s_{\mathrm{RPS}}$ denote the RPS-based nonconformity score.  
Then, for any input $X$ and any miscoverage level $\alpha \in (0,1)$, the conformal prediction set
$
\mathcal{C}^\mathrm{RPS}_\alpha(X) = \{ y \in \mathcal{Y} : s_{\mathrm{RPS}}(X,y) \le \hat q_{1-\alpha} \}
$
forms a contiguous interval of labels centered at the median $m$, i.e., there exist integers $1 \le l \le m \le u \le K$ such that
$
\mathcal{C}_\alpha(X) = \{y_l, \dots, y_m, \dots y_u\}.
$
\end{theorem*}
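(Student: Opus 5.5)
Fix $X$ and write $F(k)=F_X(k)$ for $k=0,\dots,K$, so $F(0)=0$ and $F(K)=1$. Let $g(\ell) := s_{\mathrm{RPS}}(X,y_\ell)$ for $\ell=1,\dots,K$. The plan is to show that $g$ is a discrete convex (in fact unimodal, decreasing-then-increasing) function of $\ell$ whose minimum is attained at a median index $m$. Then every sublevel set $\{\ell : g(\ell)\le \hat q_{1-\alpha}\}$ is an integer interval, and it contains $m$ whenever it is nonempty. The proof therefore reduces to computing the first differences of $g$.

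\textbf{Step 1 (recurrence).} For a candidate label $y_\ell$, the cumulative indicator $\mathds{1}\{y_\ell\le y_k\}$ equals $0$ for $k<\ell$ and $1$ for $k\ge \ell$. Hence
\[
(K-1)\,g(\ell)=\sum_{k=1}^{\ell-1}F(k)^2+\sum_{k=\ell}^{K-1}\bigl(1-F(k)\bigr)^2 .
\]
Passing from $\ell$ to $\ell+1$ changes only the $k=\ell$ term, which goes from $(1-F(\ell))^2$ to $F(\ell)^2$. This gives the recurrence already quoted in the complexity section:
\[
g(\ell+1)-g(\ell)=\frac{F(\ell)^2-(1-F(\ell))^2}{K-1}=\frac{2F(\ell)-1}{K-1}, \qquad \ell=1,\dots,K-1 .
\]

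\textbf{Step 2 (monotonicity of differences).} Since $F$ is nondecreasing in $\ell$, the increments $\Delta(\ell)=(2F(\ell)-1)/(K-1)$ are nondecreasing. Hence $g$ is discretely convex.

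Now define the median index $m:=\min\{k : F(k)\ge 1/2\}$, which exists because $F(K)=1$.
\begin{itemize}
\item For $\ell<m$ we have $F(\ell)<1/2$, so $\Delta(\ell)<0$ and $g$ is strictly decreasing on $\{1,\dots,m\}$.
\item For $\ell\ge m$ we have $F(\ell)\ge 1/2$, so $\Delta(\ell)\ge 0$ and $g$ is nondecreasing on $\{m,\dots,K\}$.
\end{itemize}
Thus $m$ minimizes $g$. Ties, which occur when $F(m)=1/2$, correspond to any other median attaining the same minimum.

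\textbf{Step 3 (sublevel sets).} Let $t=\hat q_{1-\alpha}$. Suppose $i<j<k$ with $g(i)\le t$ and $g(k)\le t$. If $j\le m$, then $g(j)\le g(i)$ because $g$ is decreasing up to $m$. If $j\ge m$, then $g(j)\le g(k)$ because $g$ is nondecreasing from $m$. Either way $g(j)\le t$, so the set is contiguous. If the set is nonempty, it contains some $\ell$ with $g(m)\le g(\ell)\le t$, so it contains $m$. This yields integers $l\le m\le u$ with $\mathcal{C}_\alpha(X)=\{y_l,\dots,y_u\}$.

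\textbf{Main obstacle.} The only delicate point is the degenerate case of an empty set, which can occur when $t<g(m)$. I expect to handle this either by noting that it is excluded in the regime of interest (for instance, by the usual convention that the quantile is taken so that sets are nonempty, or by $\alpha$ small enough that $t\ge g(m)$), or by stating that the theorem is understood for nonempty sets. I also need to be careful that "centered at the median" means containing $m$ rather than being symmetric about it. The algebra in Step 1 is routine.
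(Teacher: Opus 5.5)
Your proposal is correct and follows the paper's proof essentially step for step: you derive the first-difference recurrence $\Delta_\ell=(2F(\ell)-1)/(K-1)$, use monotonicity of $F$ to get a V-shaped score minimized at $m=\min\{\ell:F(\ell)\ge 1/2\}$, and conclude that sublevel sets are contiguous. Your explicit three-point argument for contiguity is more careful than the paper's, which only asserts that sublevel sets of a V-shaped map are intervals. So is your remark that the set is empty when $\hat q_{1-\alpha}<s_{\mathrm{RPS}}(X,y_m)$, meaning $l\le m\le u$ can only hold for nonempty sets, a case the paper passes over.
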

\begin{proof}
Fix an input $X$. For a candidate label $y_\ell$, the RPS nonconformity score is
$$
s_{\mathrm{RPS}}(X, y_\ell) = \frac{1}{K-1} \sum_{k=1}^{K-1} \bigl(F_X(k) - \mathds{1}\{k \ge \ell\}\bigr)^2,
$$
where $k$ indexes the cumulative sums $F_X(k)$ and $\ell$ indexes the candidate labels.

\textbf{(1) Difference between consecutive labels.} Consider
$$
\Delta_\ell := s_{\mathrm{RPS}}(X, y_{\ell+1}) - s_{\mathrm{RPS}}(X, y_\ell).
$$
The step function $\mathds{1}\{k \ge \ell\}$ changes only at index $k = \ell$ when moving from $y_\ell$ to $y_{\ell+1}$; for all other $k$, the indicator remains the same. Therefore, the difference $\Delta_\ell$ between two adjacent labels ($y_\ell$ and $y_{\ell+1}$)  reduces to
$$
\Delta_\ell = \frac{1}{K-1}\left[(F(\ell) - 0)^2 - (F(\ell) - 1)^2\right] = \frac{2 F(\ell) - 1}{K-1} .
$$

\textbf{(2) Single minimum.}
Since the cumulative distribution $F(\ell)$ is non-decreasing in $\ell$ and satisfies $0 \le F(\ell) \le 1$,
the consecutive differences
$$
\Delta_\ell := s_{\mathrm{RPS}}(X,y_{\ell+1}) - s_{\mathrm{RPS}}(X,y_\ell)
= \frac{2 F(\ell) - 1}{K-1}
$$
form a non-decreasing sequence in $\ell$.
Hence, there exists an index
$$
m := \min\{\ell : F(\ell) \ge 1/2\},
$$
corresponding to a (discrete) median of the predictive distribution, such that
$$
\Delta_\ell \le 0 \quad \text{for } \ell < m,
\qquad
\Delta_\ell \ge 0 \quad \text{for } \ell \ge m.
$$

It follows that the sequence $\ell \mapsto s_{\mathrm{RPS}}(X,y_\ell)$, which satisfies the recurrence
$$
s_{\mathrm{RPS}}(X,y_{\ell+1}) = s_{\mathrm{RPS}}(X,y_\ell) + \frac{2 F(\ell) - 1}{K-1},
$$
is non-increasing for $\ell < m$ and non-decreasing for $\ell \ge m$.
Therefore, the RPS score attains a single minimum at the median index $m$.

Consequently, starting from the minimum at $m$, extending the candidate set to the right ($\ell \ge m$)
increases the RPS score by increments $(2F(\ell) - 1)/(K-1) \ge 0$,
while extending it to the left ($\ell < m$) increases the score by increments $(1 - 2F(\ell))/(K-1) \ge 0$.

Thus, the RPS score is V-shaped around the median of the predictive distribution, being non-increasing to the left of the median and non-decreasing to the right.


\textbf{(3) Contiguity of conformal sets.}
Since the mapping $\ell \mapsto s_{\mathrm{RPS}}(X,y_\ell)$ is V-shaped,
its sublevel set
$$
\mathcal{C}^\mathrm{RPS}_\alpha(X) = \{y_\ell : s_{\mathrm{RPS}}(X,y_\ell) \le \hat q_{1-\alpha}\}
$$
forms a contiguous interval along the ordinal axis, expanding from the median index $m$ toward the tails.
Hence, the RPS-based conformal prediction set is contiguous.

\end{proof}

\section{Proof of Theorem~\ref{theo:ordrisk}}
\label{sec:proof_ord_risk}

\begin{theorem*}[Ordinal risk optimality of RPS-based median-grown prediction sets under oracle conditional coverage]
For a fixed input $X$, let $\mathcal{C}_\mathrm{RPS}(X) = \{y_l, \dots, y_m, \dots ,y_u\}$ denote the contiguous RPS-based prediction set, which is grown from a median index $m$ as in Theorem~\ref{theo:cont}.
Define the \emph{ordinal risk} of a set $\mathcal{C}(X)$ as the expected $l_1$-distance of true label from this set: 
\begin{equation}
\label{eq:ord_risk_app}
R(\mathcal{C}(X)) := \sum_{y=1}^K p(y \mid X)\, \min_{c \in \mathcal{C}(X)} |y - c|.
\end{equation}
Let $\mathcal{C}(X)$ be any other \emph{contiguous} conformal set of minimal cardinality satisfying the same coverage constraint as $\mathcal{C}_\mathrm{RPS}(X)$. Then
\begin{equation}
R(\mathcal{C}_\mathrm{RPS}(X)) \le R(\mathcal{C}(X)).
\end{equation}
\end{theorem*}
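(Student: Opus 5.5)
The plan is to turn both objects, the ordinal risk of an interval and the RPS score of a label, into explicit expressions in the predictive CDF $F=F_X$, and then compare them. First, for a contiguous set $\mathcal{C}=[l,u]$ I will use the tail-sum (summation by parts) identity
\begin{equation*}
R([l,u]) \;=\; \sum_{y<l} p(y\mid X)\,(l-y) + \sum_{y>u} p(y\mid X)\,(y-u) \;=\; \sum_{k=1}^{l-1} F(k) \;+\; \sum_{k=u}^{K-1}\bigl(1-F(k)\bigr).
\end{equation*}
This shows that enlarging the set on the right by $y_{u+1}$ lowers the risk by exactly $1-F(u)$. Enlarging it on the left by $y_{l-1}$ lowers the risk by exactly $F(l-1)$. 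The singleton risk $R(\{y_\ell\})$ has increments $R(\{y_{\ell+1}\})-R(\{y_\ell\})=2F(\ell)-1$. These are, up to the factor $1/(K-1)$, the same increments $\Delta_\ell$ as the RPS score in the proof of Theorem~\ref{theo:cont}. Hence $(K-1)\,s_{\mathrm{RPS}}(X,y_\ell)$ and $R(\{y_\ell\})$ differ by a constant independent of $\ell$, so the RPS minimiser, the median $m$, is also the risk-minimising singleton. This is the base case.

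Second, I fix the cardinality $L=u-l+1$ and minimise $G(l):=R([l,l+L-1])$ over the starting index $l$. From the identity, $G(l+1)-G(l)=F(l)+F(l+L-1)-1$. This is non-decreasing in $l$ because $F$ is monotone, so $G$ is discretely convex. The risk-optimal interval of length $L$ is therefore characterised by the balance condition $F(l-1)+F(u)\le 1\le F(l)+F(u+1)$. The intended conclusion is that the RPS sublevel set $[l,u]$ of Theorem~\ref{theo:cont} satisfies this balance condition. Since any competitor $\mathcal{C}$ of minimal cardinality under the same coverage constraint has the same length $L$ as $\mathcal{C}_\mathrm{RPS}(X)$, convexity of $G$ would then give $R(\mathcal{C}_\mathrm{RPS}(X))=\min_l G(l)\le R(\mathcal{C}(X))$. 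For this comparison I will assume that the oracle coverage constraint forces the competitors to the same cardinality as $\mathcal{C}_\mathrm{RPS}(X)$, which is how I read ``minimal cardinality satisfying the same coverage constraint''.

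The main obstacle is the third step: showing that the RPS sublevel set is balanced. The sublevel structure says $s(y_{l-1})>\hat q\ge s(y_u)$ and $s(y_{u+1})>\hat q\ge s(y_l)$. Telescoping the increments from the proof of Theorem~\ref{theo:cont}, this is governed by the signs of sums $\sum_{k=l-1}^{u}(2F(k)-1)$ and $\sum_{k=l}^{u+1}(2F(k)-1)$, which are averages of $F$ over the whole window. The balance condition, by contrast, involves only the two endpoint values $F(l-1)+F(u)$. These agree for short windows, in particular at the step from $\{m\}$ to two labels, and I would verify this explicitly. For longer windows I would first try to prove that the RPS score and the singleton risk agree up to a constant, and that the greedy step ``add the neighbour with the smaller score'' is the step with the larger risk reduction. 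That is, I would show $s(y_{u+1})<s(y_{l-1})$ if and only if $1-F(u)>F(l-1)$, by induction on the growth from $m$, checking it against Tables~\ref{tab:median_mode_step_comparison_umod}--\ref{tab:median_mode_step_comparison_multi}.

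If this equivalence fails in general, I expect it to fail for strongly skewed $F$, where the window average and the endpoint values give different orderings. In that case the fallback is to prove the theorem in its greedy, stepwise form. That form says each RPS expansion step, starting from the risk-optimal singleton at the median, is the adjacent extension with the largest risk decrease among contiguous extensions of the current set. I would state explicitly that this is the sense in which pointwise optimality ``aggregates'' under oracle conditional coverage, and would not claim global optimality over all intervals of length $L$ in that case.
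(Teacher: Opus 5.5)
\textbf{The step you flag as the main obstacle is false, and so is the statement itself.} Your reductions are correct, and they make the failure easy to see. Since $(K-1)\,s_{\mathrm{RPS}}(X,y_\ell)=R(\{y_\ell\})-\sum_{k=1}^{K-1}F(k)(1-F(k))$, the sublevel set grown to $[l,u]$ next adds $y_{u+1}$ rather than $y_{l-1}$ iff $\sum_{k=l-1}^{u}(2F(k)-1)<0$. The risk prefers $y_{u+1}$ iff $F(l-1)+F(u)<1$. The two criteria are forced to agree only when $l=u$.

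Concretely, take $K=5$ and $\vec{p}=(0.1,0.05,0.8,0.02,0.03)$, so $F=(0.1,0.15,0.95,0.97)$ and $m=3$. The singleton risks are $(1.83,1.03,0.33,1.23,2.17)$, so the RPS sublevel sets are $\{y_3\}$, $\{y_2,y_3\}$, $\{y_2,y_3,y_4\}$, $\{y_1,\dots,y_4\}$. From $\{y_2,y_3\}$, RPS adds $y_4$, although adding $y_1$ lowers the risk by $F(1)=0.1>1-F(3)=0.05$. Now set $1-\alpha=0.86$:
\begin{itemize}
\item no contiguous pair covers (the best reaches $0.85$);
\item the oracle RPS set is $\{y_2,y_3,y_4\}$, with coverage $0.87$ and risk $0.13$;
\item $\{y_1,y_2,y_3\}$ is a contiguous set of the same minimal cardinality, with coverage $0.95$ and risk $0.08$.
\end{itemize}
So $R(\mathcal{C}_{\mathrm{RPS}}(X))\le R(\mathcal{C}(X))$ fails. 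Your step three is false in general once $L\ge 3$, and no argument can close it.

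\textbf{Your fallback does not rescue this.} The ``greedy, stepwise form'' is exactly the equivalence $s(y_{u+1})<s(y_{l-1})\iff 1-F(u)>F(l-1)$, which the example violates. It is also not genuinely weaker. Write $\Delta_L(l):=G_L(l+1)-G_L(l)=F(l)+F(l+L-1)-1$. These differences interlace, $\Delta_L(l-1)\le\Delta_{L+1}(l-1)\le\Delta_L(l)$, so one of the two one-label extensions of any risk-optimal interval of length $L$ is risk-optimal at length $L+1$. Locally optimal growth from the median would therefore already give global optimality at every length.

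What does survive:
\begin{itemize}
\item optimality of the RPS sublevel sets of sizes one and two;
\item the full statement for the risk-greedy rule ``add $y_{l-1}$ iff $F(l-1)\ge 1-F(u)$'', using nesting plus monotonicity of $R$ under inclusion. This rule is not the RPS sublevel set.
\end{itemize}

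The paper's own proof has the same hole. Its step (2) compares the increments $(1-2F(l-1))$ and $(2F(u)-1)$ as if both started from a common base value, ignoring $s(y_u)-s(y_l)$; this is precisely the window-versus-endpoint discrepancy you identified. Its step (3) claims that median-centered intervals are risk-optimal, which the example refutes: $\{y_2,y_3,y_4\}$ is centered at $m$ and is beaten.

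Two smaller slips in your proposal:
\begin{itemize}
\item Your balance condition is shifted by one index; it should read $F(l-1)+F(u-1)\le 1\le F(l)+F(u)$.
\item The oracle RPS set need not have minimal cardinality. With $1-\alpha=0.9$ above it is $\{y_1,\dots,y_4\}$, while $s^\star=3$. So equal lengths cannot simply be assumed; the paper handles this by passing to the nested size-$s^\star$ sublevel set and using monotonicity of $R$ under inclusion.
\end{itemize}
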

\begin{proof}
\textbf{(1) Singleton case.}  
For a singleton set $\{c\}$, the ordinal risk is
$$
R(\{c\}) = \sum_{y=1}^K p(y\mid X)\,|y-c|.
$$
It is well known that this expected absolute deviation is minimized at a (discrete) median $m$ of $p(\cdot\mid X)$. Hence, starting with $\{m\}$, as done by RPS-based sets (see Theorem~\ref{theo:cont}), is optimal among all sets of size $1$. Any discrete median suffices.

\textbf{(2) Adding adjacent labels.}  
We initialize the contiguous set $\mathcal C$ at the singleton median of the predictive distribution, 
$$
m := \min\{\ell : F(\ell) \ge 1/2\}, 
\qquad \mathcal C = \{m\},
$$ 
and then expand it by adding adjacent labels toward the side with larger tail probability.

Let $\mathcal C = \{l, \dots, u\}$ be a contiguous set of labels, and let
$$
F(y) := \sum_{j \le y} p(y_j \mid X)
$$
be the cumulative probability function.

\paragraph{Ordinal risk reduction.}  
For any label $y \le l-1$, we have
$$
\min_{c \in \mathcal C} |y - c| = l - y, 
\qquad
\min_{c \in \mathcal C \cup \{l-1\}} |y - c| = (l-1) - y,
$$
so the distance decreases by $1$, while for $y \ge l$ the distance is unchanged. Hence, the reduction in ordinal risk from adding $l-1$ is
$$
R(\mathcal C) - R(\mathcal C \cup \{l-1\}) = \sum_{y \le l-1} p(y \mid X) = F(l-1).
$$

Similarly, if $u < K$, adding $u+1$ reduces the ordinal risk by
$$
R(\mathcal C) - R(\mathcal C \cup \{u+1\}) = \sum_{y \ge u+1} p(y \mid X) = 1 - F(u).
$$

Thus, among the two possible single-step contiguous extensions of $\mathcal C$, the one yielding the largest reduction in ordinal risk is toward the side with larger tail probability outside $\mathcal C$:
\begin{equation}
\label{eq:ord_risk_minimization}
\text{add left if } F(l-1) \ge 1 - F(u), \quad \text{otherwise add right}.
\end{equation}

\paragraph{Connection to RPS sublevel sets.}  

By Theorem~\ref{theo:cont}, extending the candidate set to the right $(u+1)$
increases the RPS score by increments $(2F(u) - 1)/(K-1) \ge 0$,
while extending it to the left $(l-1)$ increases the score by increments $(1 - 2F(l-1))/(K-1) \ge 0$.

The conformal prediction procedure selects the next label that yields the \emph{smallest increase in the RPS score} \eqref{eq:quantile}, so the greedy expansion chooses
$$
\text{add left if } \frac{1 - 2F(l-1)}{K-1} \le \frac{2F(u) - 1}{K-1}, \quad \text{otherwise add right},
$$
which is algebraically equivalent to \eqref{eq:ord_risk_minimization}:

\begin{align*}
1 - 2F(l-1) \le 2F(u) - 1
&\iff 2 \le 2F(u) + 2F(l-1) \\
&\iff 1 \le F(u) + F(l-1) \\
&\iff F(l-1) \ge 1 - F(u).
\end{align*}

\textbf{(3) Risk optimality among contiguous intervals of fixed length.}  
Fix a length $s \ge 1$ and consider all contiguous sets
$\mathcal C = \{l,\dots,l+s-1\}$ of size $s$. For such sets, the ordinal risk
$$
R(\mathcal C(X)) = \sum_{y=1}^K p(y\mid X) \min_{c\in\mathcal C} |y-c|
$$
is the expected $l_1$ distance from $Y$ to the set $\mathcal C$. For fixed $s$, this risk is minimized when the interval is (in a discrete sense) centered at a median index $m$ of $p(\cdot\mid X)$: moving the interval one step away from $m$ increases the expected absolute distance. Consequently, among all contiguous sets of size $s$, those whose center is closest to $m$ have minimal ordinal risk.

By Theorem~\ref{theo:cont}, for each cardinality $s$, the RPS sublevel set expands around $m$ in a way that keeps the tail imbalance $|F(\ell) - 1/2|$ as small as possible. Consequently, for each $s$, the RPS-based set is (discretely) centered at the median, which minimizes the ordinal risk among all contiguous sets of size $s$.

\textbf{(4) Risk dominance over minimal-cardinality coverage sets.}
Let $\mathcal C(X)$ be any contiguous minimal-cardinality set satisfying coverage
$\sum_{y\in \mathcal C} p(y\mid X) \ge 1-\alpha$, with size $s^\star$. Since
$\mathcal C_{\textrm{RPS}}(X)$ also satisfies coverage (Proposition~\ref{prop:rps_cov}),
$|\mathcal C_{\textrm{RPS}}(X)| \ge s^\star$, so the size-$s^\star$ sublevel set
$\mathcal C^{s^\star}_{\textrm{RPS}}(X) \subseteq \mathcal C_{\textrm{RPS}}(X)$ exists
(Proposition~\ref{prop:nested}). By (3) it minimizes ordinal risk among contiguous size-$s^\star$
sets, and by monotonicity of $R$ under inclusion,
$$
R(\mathcal{C}_\textrm{RPS}(X)) \le R(\mathcal C^{s^\star}_{\textrm{RPS}}(X)) \le R(\mathcal{C}(X)).
$$
\end{proof}

\section{Additional Details for Experiments on ordinal image datasets}
\label{sec:experiments_details}

This section provides additional details for the experiments on ordinal image datasets.

\paragraph{Baseline Nonconformity Scores.}
Least ambiguous set-valued classifier (LAC) score~\cite{sadinle2019least},
$$
s_{\mathrm{LAC}}(X,y) := 1 - p(y \mid X).
$$
Adaptive prediction set (APS) score~\cite{DBLP:conf/nips/RomanoSC20},
$$
s_{\mathrm{APS}}(X,y) := \sum_{y' :\, p(y' \mid X) \ge p(y \mid X)} p(y' \mid X).
$$

\paragraph{Performance Metrics.}
Empirical coverage (COV),
$$
\mathrm{COV} := \frac{1}{|\mathcal{D}_\mathrm{test}|} \sum_{i \in \mathcal{D}_\mathrm{test}} \mathds{1} \{Y_i \in \mathcal{C}(X_i)\}.
$$
Average prediction set size (PS),
$$
\mathrm{PS} :=  \frac{1}{|\mathcal{D}_\mathrm{test}|} \sum_{i \in \mathcal{D}_\mathrm{test}} |\mathcal{C}(X_i)|.
$$
Mean interval width (MW),
$$
\mathrm{MW} := \frac{1}{|\mathcal{D}_\mathrm{test}|} \sum_{i \in \mathcal{D}_\mathrm{test}} (u_i - l_i),
$$
where $\mathcal{C}(X_i) = [l_i,u_i]$ denotes the contiguous prediction interval for the $i$-th test point.

To measure ordinal error distance over the entire test set $\mathcal{D}_\mathrm{test}$, we also report the mean absolute interval error (MAIE),
$$
\mathrm{MAIE} := \frac{1}{|\mathcal{D}_\mathrm{test}|} 
\sum_{i \in \mathcal{D}_\mathrm{test}}
\begin{cases}
l_i - Y_i & \text{if } Y_i < l_i\\[1mm]
Y_i - u_i & \text{if } Y_i > u_i\\
0 & \text{otherwise}
\end{cases} \, .
$$

\paragraph{Model implementation.}
All models are implemented using \texttt{skorch}~\cite{skorch}, a \texttt{scikit-learn}~\cite{scikit-learn}-compatible wrapper for \texttt{PyTorch}~\cite{paszke2019pytorch}, and are conformalized with \texttt{MAPIE}~\cite{Cordier_Flexible_and_Systematic_2023}. 
The \texttt{dlordinal}~\cite{DBLP:journals/ijon/BerchezMorenoAYGHFG25} package is used to implement ordinal-specific methodologies, such as COPOC~\cite{DBLP:conf/nips/DeyMK23}.

For image datasets, we employ a computationally efficient \texttt{ResNet-18}~\cite{DBLP:conf/cvpr/HeZRS16} model pretrained on ImageNet~\cite{deng2009imagenet}, as our primary objective is not to maximize predictive performance but to evaluate the proposed methodology. Nonetheless, ResNet-18 is widely used as a backbone in image-based ordinal classification research, where it achieves reasonable performance~\cite{DBLP:journals/ijon/BerchezMorenoAYGHFG25}.

\paragraph{Loss functions.}
We train all models using the standard cross-entropy (CE) loss, which is a proper scoring rule and has been shown to yield unbiased predictive probability distributions, including in ordinal classification~\cite{DBLP:journals/ijar/HaasH25,DBLP:journals/corr/abs-2507-00733}.
$$
l_{\mathrm{CE}}(\vec{p}, y) = - \sum_{k=1}^K \mathds{1}\{y = y_k\} \, \log(p_k),
$$
where $\vec{p} = (p_1, \dots, p_K)$ is the predicted probability distribution over the $K$ classes, and $y$ is the true label.
In addition, we consider the non-parametric conformal prediction sets for ordinal classification (COPOC) approach proposed by~\cite{DBLP:conf/nips/DeyMK23}, which enforces unimodality in the predictive probability distribution. This also serves as an exemplary ordinal-specific loss, encouraging predictions to respect the natural ordering of the labels.

\paragraph{BACH Dataset}

The BACH (BreAst Cancer Histology) dataset~\cite{aresta2019bach} is a benchmark for breast cancer histopathological image classification, originally introduced as part of the ICIAR 2018 Grand Challenge on Breast Cancer Histology Images. It consists of hematoxylin and eosin (H\&E) stained microscopy images of breast tissue, annotated by expert pathologists. The dataset contains 400 high-resolution images (2048$\times$1536 pixels) with 100 images per class, representing four ordinal classes that follow the natural progression of breast cancer:
\begin{enumerate}
    \item \textbf{Normal} -- healthy breast tissue
    \item \textbf{Benign} -- non-cancerous abnormal tissue
    \item \textbf{In situ carcinoma} -- cancerous cells that have not invaded surrounding tissue
    \item \textbf{Invasive carcinoma} -- cancerous cells that have spread to surrounding tissue
\end{enumerate}

The ordinal structure of these classes reflects increasing disease severity, making BACH particularly well-suited for evaluating ordinal classification methods in medical imaging. The dataset is perfectly balanced with an imbalance ratio (IR) of 1.0.
For our experiments, we resize all images to 224$\times$224 pixels and apply standard normalization with mean and standard deviation of 0.5 across all channels. During training, we augment the data with random rotations (up to 10 degrees) and random horizontal flips to improve model generalization. We use a ResNet-18 model pretrained on ImageNet and fine-tune it for our task.
We use two thirds of the data to fine-tune a ResNet-18 model and split the remaining one third equally into calibration and test sets. This random split is repeated over 50 trials.
BACH is widely used in medical imaging research and provides an important testbed for uncertainty quantification methods, as reliable predictions with well-calibrated confidence are critical in clinical decision-making for cancer diagnosis.

\begin{figure*}[htb!]
 \centering
 \begin{minipage}[b]{0.20\textwidth}
        \centering
\includegraphics[width=\linewidth]{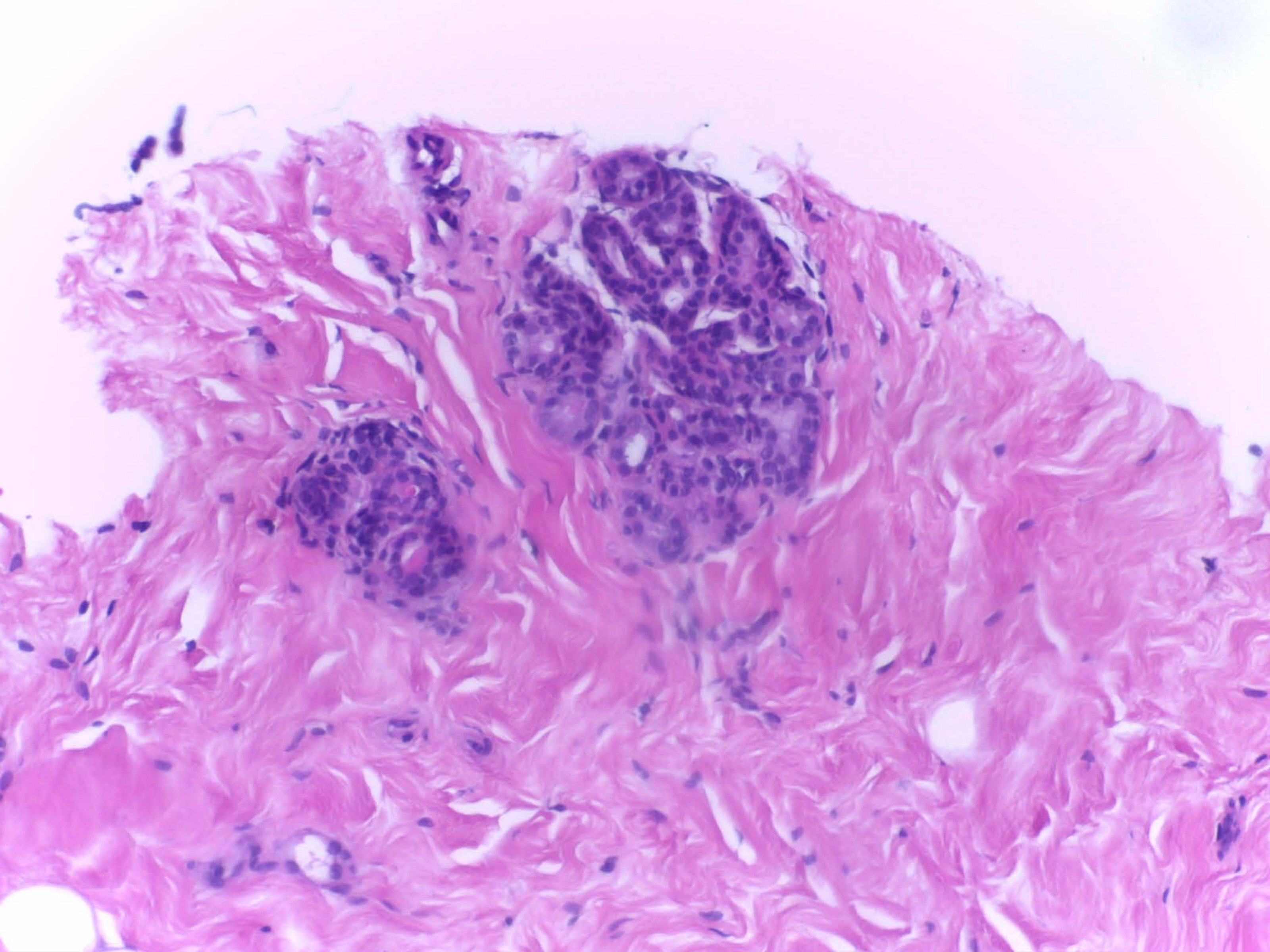}
 \end{minipage}
 \begin{minipage}[b]{0.20\textwidth}
        \centering
\includegraphics[width=\linewidth]{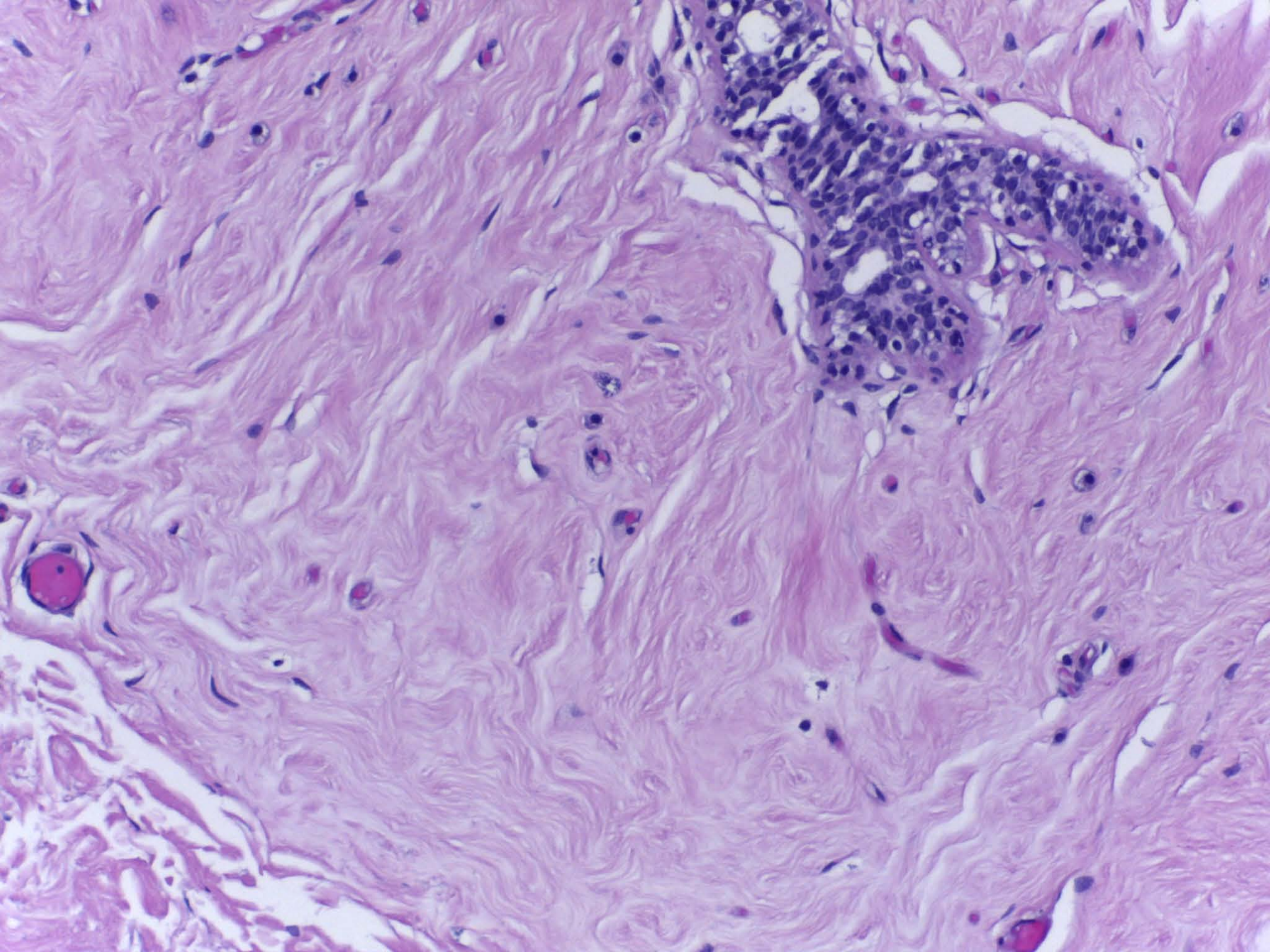}
 \end{minipage}
  \begin{minipage}[b]{0.20\textwidth}
        \centering
\includegraphics[width=\linewidth]{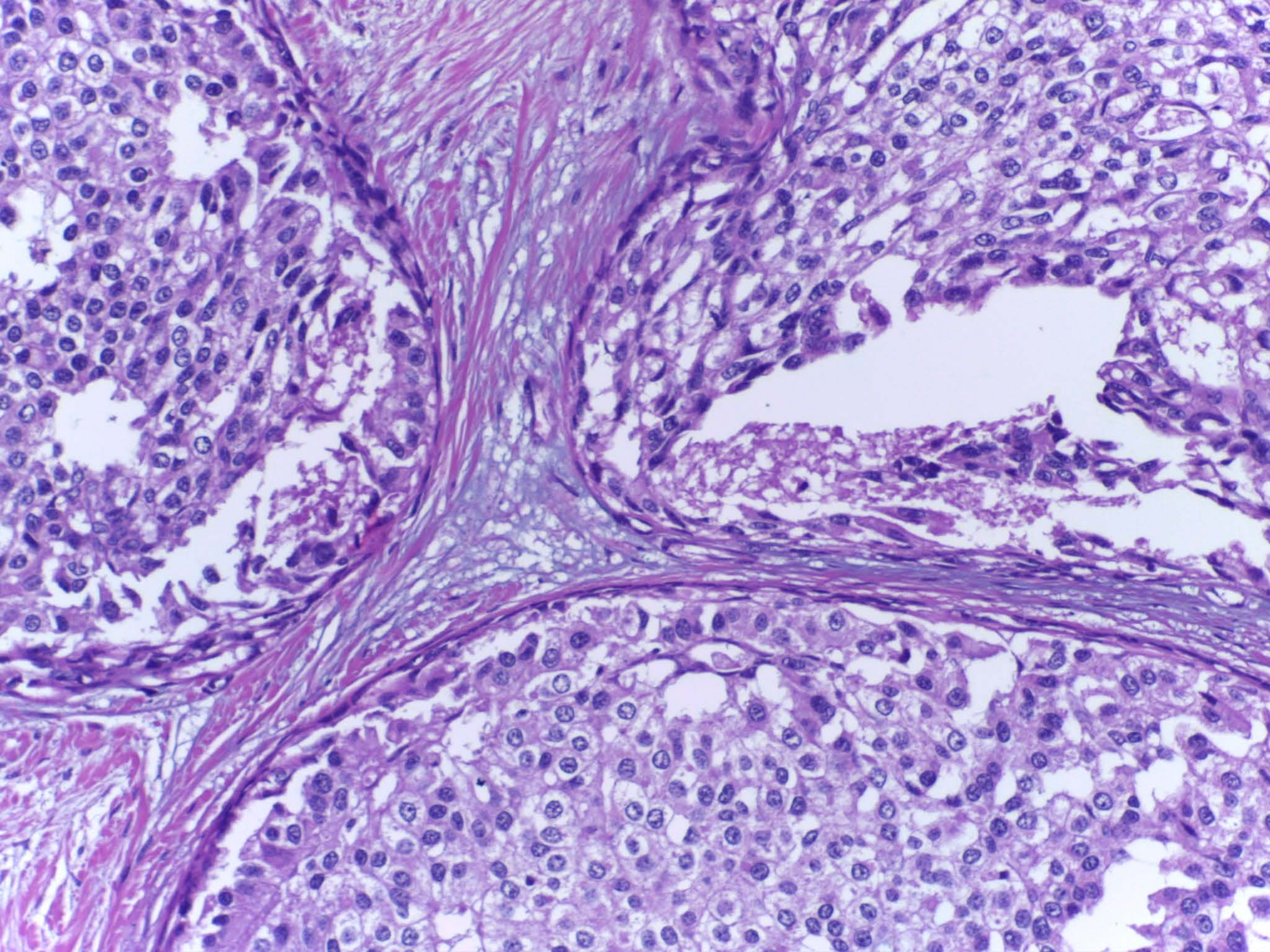}
 \end{minipage}
  \begin{minipage}[b]{0.20\textwidth}
        \centering
\includegraphics[width=\linewidth]{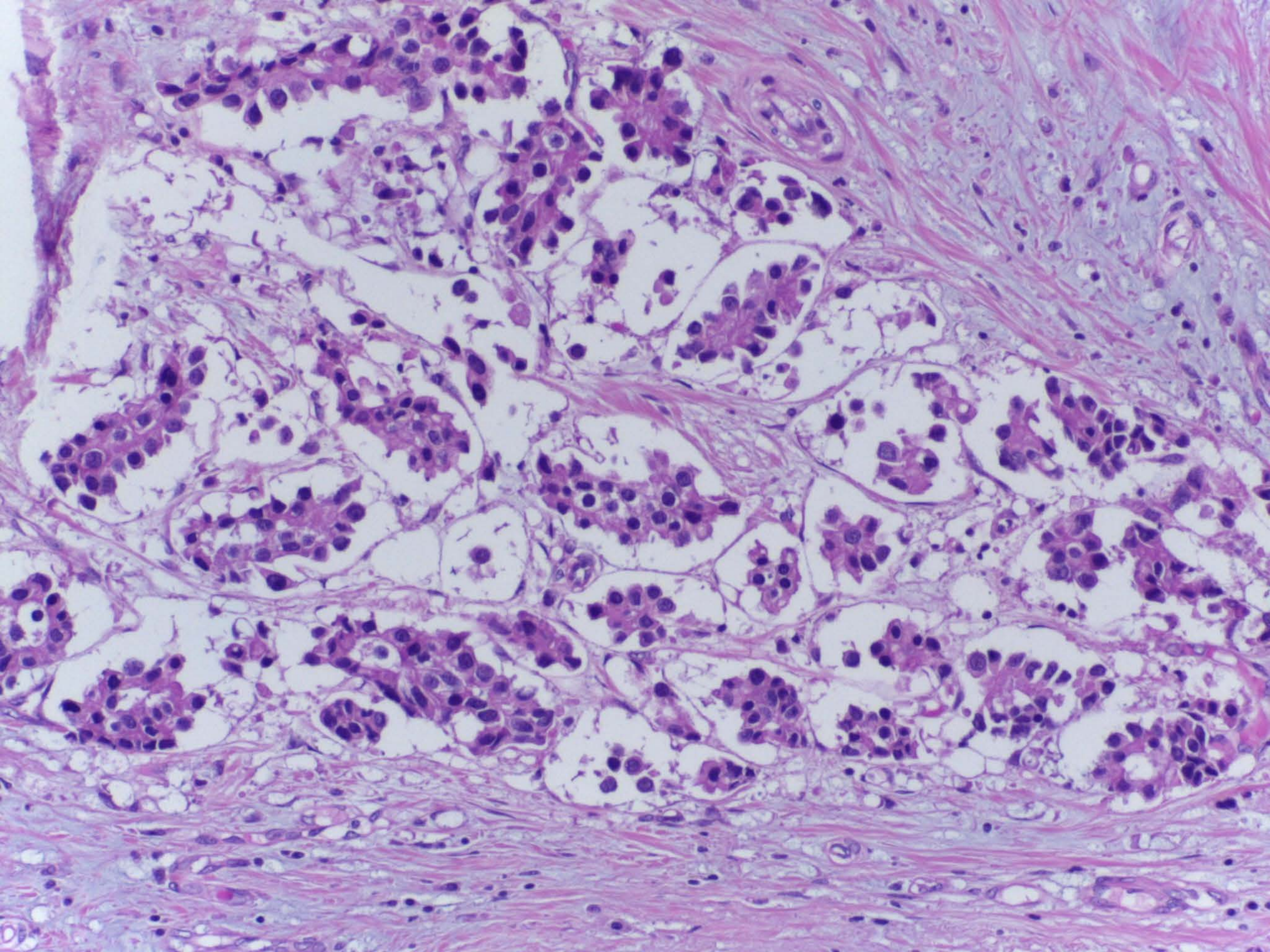}
 \end{minipage}
     \caption{Example images from the BACH dataset~\cite{aresta2019bach}.}
\end{figure*}

\paragraph{RetinaMNIST Dataset}
RetinaMNIST is a benchmark dataset of retinal fundus images from the MedMNIST~\cite{medmnistv2} collection, consisting of 1{,}600 $28\times 28$ grayscale images labeled with diabetic retinopathy severity levels. The labels form an ordered set of five classes (\emph{No DR}, \emph{Mild}, \emph{Moderate}, \emph{Severe}, \emph{Proliferative DR}) reflecting increasing disease severity, which makes RetinaMNIST a common choice in image-based ordinal classification research, e.g.,~\cite{DBLP:conf/nips/DeyMK23}. This ordered structure makes RetinaMNIST well-suited for evaluating ordinal classification and uncertainty estimation methods in medical image analysis.
In our experiments, we use a ResNet-18 backbone pretrained on ImageNet and adapted to the RetinaMNIST resolution. ResNet-18 provides a good trade-off between performance and computational efficiency for this task.
We use the dedicated training set to train the model, while the original validation and test sets are merged and then randomly split equally into calibration and test sets, again repeated over 50 trials.

\begin{figure*}[htb!]
 \centering
 \begin{minipage}[b]{0.10\textwidth}
        \centering
\includegraphics[width=\linewidth]{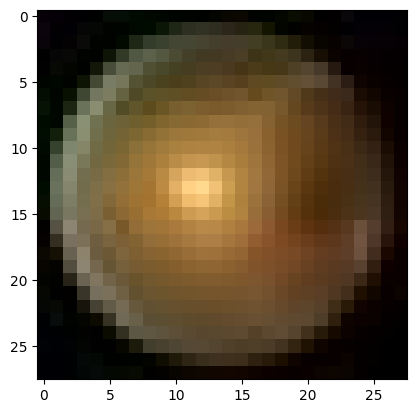}
 \end{minipage}
 \begin{minipage}[b]{0.10\textwidth}
        \centering
\includegraphics[width=\linewidth]{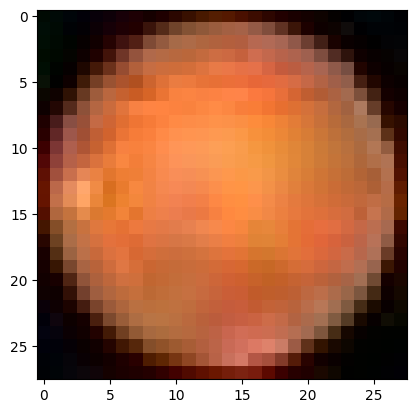}
 \end{minipage}
  \begin{minipage}[b]{0.10\textwidth}
        \centering
\includegraphics[width=\linewidth]{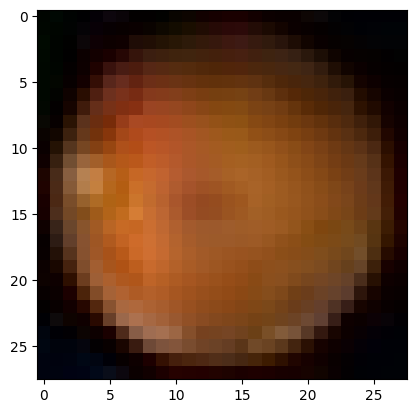}
 \end{minipage}
  \begin{minipage}[b]{0.10\textwidth}
        \centering
\includegraphics[width=\linewidth]{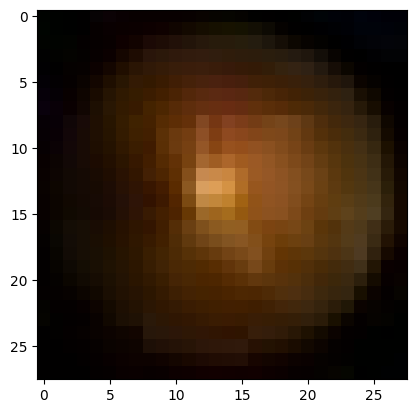}
 \end{minipage}
   \begin{minipage}[b]{0.10\textwidth}
        \centering
\includegraphics[width=\linewidth]{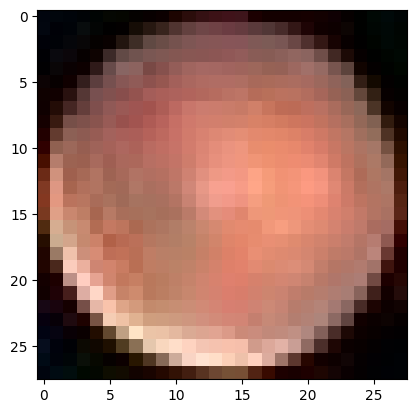}
 \end{minipage}
   \begin{minipage}[b]{0.10\textwidth}
        \centering
\includegraphics[width=\linewidth]{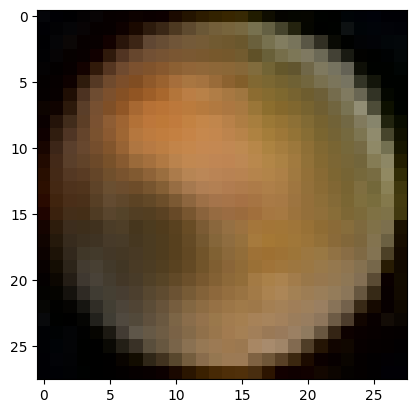}
 \end{minipage}
   \begin{minipage}[b]{0.10\textwidth}
        \centering
\includegraphics[width=\linewidth]{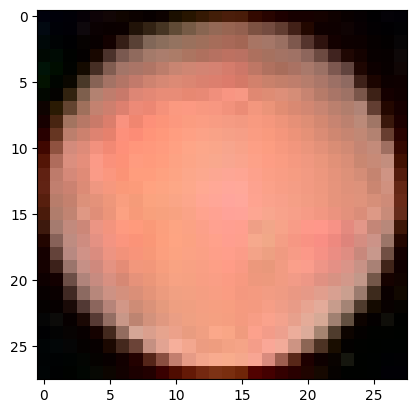}
 \end{minipage}
    \begin{minipage}[b]{0.10\textwidth}
        \centering
\includegraphics[width=\linewidth]{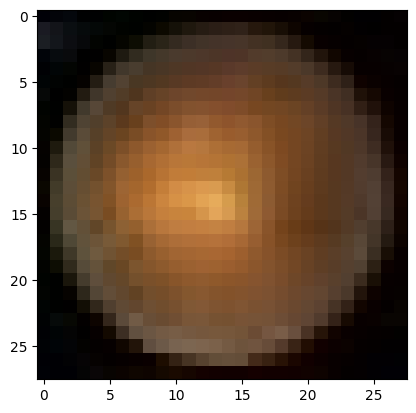}
 \end{minipage}
 \caption{Example images from the RetinaMNIST dataset~\cite{medmnistv2}}
\end{figure*}

\paragraph{FGNet Dataset}
FGNet~\cite{lanitis2002toward} is a widely used benchmark for age estimation from facial images, containing 1,002 images of 82 subjects spanning ages 0–69 years. For our experiments, we group ages into six ordinal classes to evaluate coarse age prediction performance, which aligns with the natural ordering of ages and allows ordinal classification evaluation.  
We preprocess images by resizing them to 256 pixels on the smaller side, followed by a random resized crop to $224\times224$ (scale 0.85–1.0). Data augmentation includes random horizontal flips (50\% probability), color jitter (brightness, contrast, saturation, and hue variations), and random rotations up to 10 degrees. Images are converted to tensors and then normalized using the ImageNet channel-wise mean [0.485, 0.456, 0.406] and standard deviation [0.229, 0.224, 0.225].
We use a ResNet-18 backbone pretrained on ImageNet and fine-tune it on FGNet for the six-class age classification task, leveraging the ordinal structure of the labels to evaluate our ordinal classification methods.
We use the dedicated training set to train the model, and split the test set into calibration and test sets, also repeated over 50 trials.

\begin{figure*}[htb!]
 \centering
 \begin{minipage}[b]{0.12\textwidth}
        \centering
\includegraphics[width=\linewidth]{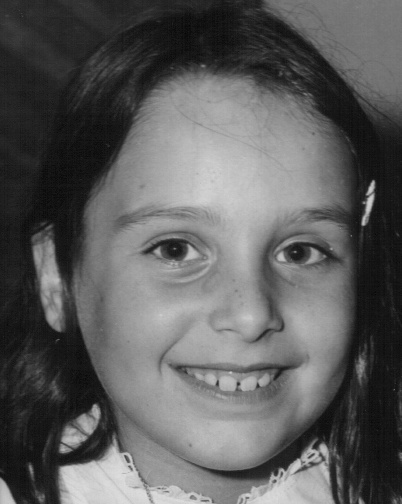}
 \end{minipage}
 \begin{minipage}[b]{0.12\textwidth}
        \centering
\includegraphics[width=\linewidth]{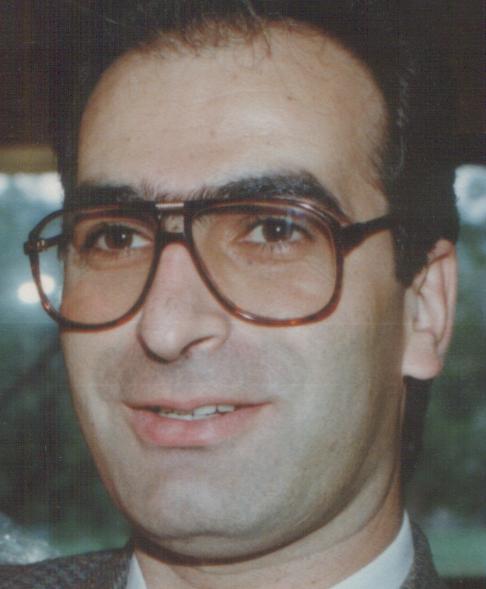}
 \end{minipage}
  \begin{minipage}[b]{0.12\textwidth}
        \centering
\includegraphics[width=\linewidth]{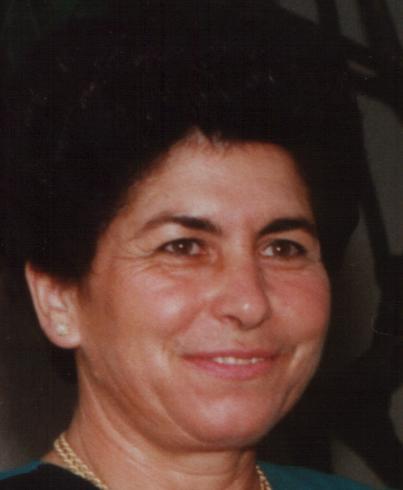}
 \end{minipage}
  \begin{minipage}[b]{0.12\textwidth}
        \centering
\includegraphics[width=\linewidth]{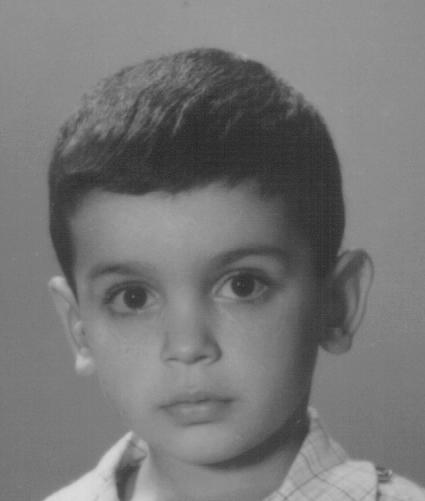}
 \end{minipage}
 \caption{Example images from the FGNet dataset~\cite{lanitis2002toward}}
\end{figure*}

\paragraph{Predictive Performance.}
Table~\ref{tab:results_ord_img_tab} depicts the predictive performance obtained using COPOC and cross-entropy (CE) loss across the different image datasets. Overall, CE achieves the best performance on FGNet and BACH, consistently improving ACC, MAE, MSE, and QWK. On RetinaMNIST, CE attains higher ACC and lower error metrics (MAE and MSE), while COPOC slightly outperforms in terms of 1-OFF accuracy and QWK. These results indicate that although CE generally provides stronger overall classification accuracy, COPOC remains competitive, particularly with respect to ordinal consistency metrics.
Furthermore, COPOC ensures unimodality across all datasets, as indicated by its degree of unimodality (UMOD) being equal to one.

\begin{table}[ht]
\centering
\small
\caption{Predictive performance on the image datasets. Results are reported as mean $\pm$ stddev. Metrics: ACC (Accuracy), 1-OFF (1-Off Accuracy), MAE (Mean Absolute Error), MSE (Mean Squared Error), QWK (Quadratic Weighted Kappa), UMOD (Degree of Unimodality). For ACC, 1-OFF, and QWK, higher is better; for MAE and MSE, lower is better. Bold values indicate the best performance for each metric within each dataset.}
\label{tab:results_ord_img_tab}
\begin{tabular}{@{}lccccccc@{}}
\toprule
\textbf{Dataset} & \textbf{Loss} & \textbf{ACC} ($\uparrow$) & \textbf{1-OFF} ($\uparrow$) & \textbf{MAE} ($\downarrow$) & \textbf{MSE} ($\downarrow$) & \textbf{QWK} ($\uparrow$) & \textbf{UMOD} \\
\midrule
\multirow{2}{*}{BACH} 
& COPOC & 0.662 $\pm$ 0.041 & 0.839 $\pm$ 0.030 & 0.549 $\pm$ 0.070 & 1.073 $\pm$ 0.166 & 0.549 $\pm$ 0.072 & 1.000 $\pm$ 0.000 \\
& CE 
& \textbf{0.839} $\pm$ 0.033 
& \textbf{0.912} $\pm$ 0.026 
& \textbf{0.256} $\pm$ 0.057 
& \textbf{0.462} $\pm$ 0.118 
& \textbf{0.806} $\pm$ 0.051 & 0.325 $\pm$ 0.031 \\
\midrule
\multirow{2}{*}{RetinaMNIST} & COPOC & 0.481 $\pm$ 0.018 & \textbf{0.759} $\pm$ 0.016 & 0.867 $\pm$ 0.040 & 1.803 $\pm$ 0.123 & \textbf{0.504} $\pm$ 0.033 & 1.000 $\pm$ 0.000\\
 & CE & \textbf{0.519} $\pm$ 0.019 & 0.757 $\pm$ 0.017 & \textbf{0.826} $\pm$ 0.038 & \textbf{1.741} $\pm$ 0.108 & 0.460 $\pm$ 0.032 & 0.208 $\pm$ 0.018 \\
\midrule
\multirow{2}{*}{FGNet}  & COPOC & 0.502 $\pm$ 0.032 & 0.945 $\pm$ 0.013 & 0.563 $\pm$ 0.040 & 0.717 $\pm$ 0.080 & 0.821 $\pm$ 0.021 & 1.000 $\pm$ 0.000  \\
 & CE & \textbf{0.579} $\pm$ 0.035 & \textbf{0.952} $\pm$ 0.013 & \textbf{0.474} $\pm$ 0.043 & \textbf{0.591} $\pm$ 0.072 & \textbf{0.860} $\pm$ 0.018 & 0.131 $\pm$ 0.023 \\
\bottomrule
\end{tabular}
\end{table}

\paragraph{Detailed Experimental Results.}

Results over 50 trials for $\alpha = \{0.01, 0.02, 0.03, 0.05, 0.08, 0.1, 0.13, 0.15, 0.18, 0.2\}$ on the BACH, RetinaMNIST, and FGNet datasets, covering the considered CP methods and metrics, are shown in Figures~\ref{fig:bach}, \ref{fig:retina}, and \ref{fig:fgnet}, respectively, and summarized in Table~\ref{tab:image_results} for $\alpha = \{0.02, 0.05, 0.1\}$.
In all cases, as expected, LAC and APS violate the contiguity of prediction sets (CV) and are therefore excluded from measures that require contiguous sets, i.e., MW, MAMM, WAMM, AISL, and MAIE.
All ordinal methods (min-CPS, COPOCL, COPOCA, OCDF, and RPS), however, output purely contiguous prediction sets.
Considering the trade-off between efficiency, as indicated by MW, and ordinal errors, as indicated by MAMM, WAMM, and MAIE, RPS-based sets strike a favorable balance, which is also reflected by the AISL metric that combines both aspects into a single score.
Compared to the other ordinal methods (min-CPS, COPOCL, COPOCA, and OCDF), RPS-based sets are reliable in the sense that risk is neither underestimated nor overestimated.
In contrast, min-CPS, COPOCL, and COPOCA tend to underestimate risk, as indicated by higher MAMM, WAMM, and MAIE values induced by overly small PS and MW, whereas OCDF tends to produce very large PS and MW values, particularly for the RetinaMNIST and FGNet datasets (see Figure~\ref{fig:retina}).
The claim that RPS-based sets strike a favorable balance between ordinal error awareness and efficient, small intervals is further supported by the tabular experiments reported in Appendix~\ref{sec:experiments_ordinal_datasets}.
These experiments also support our earlier claim that mode-centered set construction does not accurately capture uncertainty, whereas RPS-based sets faithfully account for the full ordinal structure and the associated risk.

\begin{table*}[!h]
\tiny
\centering
\caption{Comparison of the different non-conformity measures over the BACH, RetinaMNIST and FGNet datasets at $\alpha=0.02$, $\alpha=0.05$ and $\alpha=0.1$.}
\label{tab:image_results}
\begin{tabular}{lllcccccccc}
\toprule
Dataset & $\alpha$ & Method & {COV} & {PS} ($\downarrow$) & {CV} ($\downarrow$) & {MW} ($\downarrow$) & {MAMM} ($\downarrow$) & {WAMM} ($\downarrow$) & {MAIE} ($\downarrow$) & {AISL} ($\downarrow$) \\
\midrule
 \multirow{21}{*}{BACH}& \multirow{7}{*}{0.02}
                      & APS     & 0.982 $ \pm $ 0.021 & 2.475 $ \pm $ 0.603 & 0.240 $ \pm $ 0.060 & - & - & - & - & - \\
                      &         & COPOCA  & 0.985 $ \pm $ 0.021 & 3.593 $ \pm $ 0.380 & 0.000 $ \pm $ 0.000 & 2.593 $ \pm $ 0.380 & 1.584 $ \pm $ 0.788 & 1.000 $ \pm $ 1.161 & 0.029 $ \pm $ 0.046 & 5.502 $ \pm $ 4.206 \\
                      &         & COPOCL  & 0.983 $ \pm $ 0.022 & 3.539 $ \pm $ 0.459 & 0.000 $ \pm $ 0.000 & 2.539 $ \pm $ 0.459 & 1.620 $ \pm $ 0.820 & 1.020 $ \pm $ 1.152 & 0.035 $ \pm $ 0.056 & 6.024 $ \pm $ 5.145 \\
                      &         & LAC     & 0.981 $ \pm $ 0.020 & \textbf{2.417} $ \pm $ 0.658 & 0.210 $ \pm $ 0.059 & - & - & - & - & - \\
                      &         & OCDF    & 0.981 $ \pm $ 0.021 & 3.319 $ \pm $ 0.320 & 0.000 $ \pm $ 0.000 & 2.319 $ \pm $ 0.320 & 1.994 $ \pm $ 0.030 & 1.200 $ \pm $ 0.990 & 0.038 $ \pm $ 0.041 & 6.107 $ \pm $ 3.818 \\
                      &         & RPS     & 0.983 $ \pm $ 0.022 & 3.419 $ \pm $ 0.286 & 0.000 $ \pm $ 0.000 & 2.419 $ \pm $ 0.286 & \textbf{1.000} $ \pm $ 0.000 & \textbf{0.480} $ \pm $ 0.505 & \textbf{0.017} $ \pm $ 0.022 & \textbf{4.116} $ \pm $ 1.971 \\
                      &         & min-CPS & 0.981 $ \pm $ 0.020 & 2.671 $ \pm $ 0.594 & 0.000 $ \pm $ 0.000 & \textbf{1.671} $ \pm $ 0.594 & 1.993 $ \pm $ 0.037 & 1.200 $ \pm $ 0.990 & 0.037 $ \pm $ 0.039 & 5.398 $ \pm $ 3.416 \\
\cmidrule(lr){2-11}
 & \multirow{7}{*}{0.05} & APS     & 0.949 $ \pm $ 0.030 & 1.773 $ \pm $ 0.188 & 0.235 $ \pm $ 0.052 & - & - & - & - & - \\
                      & $-$& COPOCA  & 0.943 $ \pm $ 0.041 & 2.612 $ \pm $ 0.310 & 0.000 $ \pm $ 0.000 & 1.612 $ \pm $ 0.310 & 1.811 $ \pm $ 0.485 & 2.600 $ \pm $ 0.857 & 0.100 $ \pm $ 0.071 & 5.624 $ \pm $ 2.586 \\
                      & $-$& COPOCL  & 0.949 $ \pm $ 0.038 & 2.505 $ \pm $ 0.340 & 0.000 $ \pm $ 0.000 & 1.505 $ \pm $ 0.340 & 2.339 $ \pm $ 0.351 & 2.600 $ \pm $ 0.926 & 0.114 $ \pm $ 0.083 & 6.062 $ \pm $ 2.983 \\
                      & $-$& LAC     & 0.955 $ \pm $ 0.029 & \textbf{1.767} $ \pm $ 0.148 & 0.220 $ \pm $ 0.049 & - & - & - & - & - \\
                      & $-$& OCDF    & 0.954 $ \pm $ 0.029 & 2.910 $ \pm $ 0.164 & 0.000 $ \pm $ 0.000 & 1.910 $ \pm $ 0.164 & 1.903 $ \pm $ 0.155 & 1.920 $ \pm $ 0.396 & 0.085 $ \pm $ 0.047 & 5.304 $ \pm $ 1.748 \\
                      & $-$& RPS     & 0.951 $ \pm $ 0.029 & 2.860 $ \pm $ 0.320 & 0.000 $ \pm $ 0.000 & 1.860 $ \pm $ 0.320 & \textbf{1.057} $ \pm $ 0.140 & \textbf{1.180} $ \pm $ 0.438 & \textbf{0.055} $ \pm $ 0.041 & \textbf{4.054} $ \pm $ 1.356 \\
                      & $-$& min-CPS & 0.952 $ \pm $ 0.033 & 1.979 $ \pm $ 0.198 & 0.000 $ \pm $ 0.000 & \textbf{0.979} $ \pm $ 0.198 & 1.873 $ \pm $ 0.177 & 1.920 $ \pm $ 0.396 & 0.085 $ \pm $ 0.050 & 4.385 $ \pm $ 1.854 \\
\cmidrule(lr){2-11}
 & \multirow{7}{*}{0.1} & APS     & 0.913 $ \pm $ 0.043 & 1.468 $ \pm $ 0.205 & 0.165 $ \pm $ 0.056 & - & - & - & - & - \\
                      & $-$& COPOCA  & 0.905 $ \pm $ 0.043 & 2.158 $ \pm $ 0.230 & 0.000 $ \pm $ 0.000 & 1.158 $ \pm $ 0.230 & 1.712 $ \pm $ 0.299 & 2.780 $ \pm $ 0.507 & 0.162 $ \pm $ 0.078 & 4.394 $ \pm $ 1.369 \\
                      & $-$& COPOCL  & 0.900 $ \pm $ 0.056 & 1.927 $ \pm $ 0.205 & 0.000 $ \pm $ 0.000 & 0.927 $ \pm $ 0.205 & 1.980 $ \pm $ 0.304 & 2.900 $ \pm $ 0.463 & 0.190 $ \pm $ 0.086 & 4.721 $ \pm $ 1.550 \\
                      & $-$& LAC     & 0.916 $ \pm $ 0.040 & \textbf{1.451} $ \pm $ 0.223 & 0.153 $ \pm $ 0.059 & - & - & - & - & - \\
                      & $-$& OCDF    & 0.912 $ \pm $ 0.049 & 2.592 $ \pm $ 0.226 & 0.000 $ \pm $ 0.000 & 1.592 $ \pm $ 0.226 & 1.568 $ \pm $ 0.198 & 1.940 $ \pm $ 0.314 & 0.137 $ \pm $ 0.071 & 4.331 $ \pm $ 1.220 \\
                      & $-$& RPS     & 0.917 $ \pm $ 0.042 & 1.788 $ \pm $ 0.614 & 0.000 $ \pm $ 0.000 & 0.788 $ \pm $ 0.614 & \textbf{1.469} $ \pm $ 0.277 & \textbf{1.880} $ \pm $ 0.521 & \textbf{0.128} $ \pm $ 0.071 & 3.340 $ \pm $ 0.911 \\
                      & $-$& min-CPS & 0.917 $ \pm $ 0.040 & 1.613 $ \pm $ 0.293 & 0.000 $ \pm $ 0.000 & \textbf{0.613} $ \pm $ 0.293 & 1.648 $ \pm $ 0.257 & 2.180 $ \pm $ 0.482 & 0.135 $ \pm $ 0.067 & \textbf{3.322} $ \pm $ 1.088 \\
                      \midrule
\multirow{21}{*}{RetinMNIST} & \multirow{7}{*}{0.02}
                            & APS     & 0.984 $ \pm $ 0.010 & 4.080 $ \pm $ 0.156 & 0.057 $ \pm $ 0.020 & - & - & - & - & - \\
                            &         & COPOCA  & 0.982 $ \pm $ 0.011 & 4.266 $ \pm $ 0.190 & 0.000 $ \pm $ 0.000 & 3.266 $ \pm $ 0.190 & 1.252 $ \pm $ 0.227 & 1.620 $ \pm $ 0.635 & 0.024 $ \pm $ 0.017 & 5.705 $ \pm $ 1.570 \\
                            &         & COPOCL  & 0.981 $ \pm $ 0.011 & 4.222 $ \pm $ 0.156 & 0.000 $ \pm $ 0.000 & 3.222 $ \pm $ 0.156 & 1.535 $ \pm $ 0.236 & 1.940 $ \pm $ 0.424 & 0.029 $ \pm $ 0.016 & 6.083 $ \pm $ 1.473 \\
                            &         & LAC     & 0.982 $ \pm $ 0.010 & \textbf{4.025} $ \pm $ 0.086 & 0.046 $ \pm $ 0.013 & - & - & - & - & - \\
                            &         & OCDF    & 0.981 $ \pm $ 0.012 & 4.848 $ \pm $ 0.047 & 0.000 $ \pm $ 0.000 & 3.848 $ \pm $ 0.047 & \textbf{1.050} $ \pm $ 0.074 & \textbf{1.340} $ \pm $ 0.479 & 0.021 $ \pm $ 0.013 & 5.910 $ \pm $ 1.267 \\
                            &         & RPS     & 0.982 $ \pm $ 0.010 & 4.172 $ \pm $ 0.100 & 0.000 $ \pm $ 0.000 & 3.172 $ \pm $ 0.100 & 1.096 $ \pm $ 0.114 & 1.400 $ \pm $ 0.571 & \textbf{0.020} $ \pm $ 0.011 & \textbf{5.180} $ \pm $ 1.016 \\
                            &         & min-CPS & 0.983 $ \pm $ 0.008 & 4.032 $ \pm $ 0.090 & 0.000 $ \pm $ 0.000 & \textbf{3.032} $ \pm $ 0.090 & 1.695 $ \pm $ 0.310 & 2.340 $ \pm $ 0.593 & 0.029 $ \pm $ 0.014 & 5.971 $ \pm $ 1.306 \\
\cmidrule(lr){2-11}               
 & \multirow{7}{*}{0.05} & APS     & 0.951 $ \pm $ 0.014 & 3.600 $ \pm $ 0.146 & 0.121 $ \pm $ 0.025 & - & - & - & - & - \\
                            & $-$& COPOCA  & 0.950 $ \pm $ 0.016 & 3.620 $ \pm $ 0.149 & 0.000 $ \pm $ 0.000 & 2.620 $ \pm $ 0.149 & 1.433 $ \pm $ 0.145 & 2.940 $ \pm $ 0.712 & 0.072 $ \pm $ 0.026 & 5.420 $ \pm $ 0.840 \\
                            & $-$& COPOCL  & 0.953 $ \pm $ 0.016 & 3.527 $ \pm $ 0.201 & 0.000 $ \pm $ 0.000 & 2.527 $ \pm $ 0.201 & 1.542 $ \pm $ 0.176 & 3.180 $ \pm $ 0.720 & 0.073 $ \pm $ 0.027 & 5.428 $ \pm $ 0.887 \\
                            & $-$& LAC     & 0.952 $ \pm $ 0.015 & 3.639 $ \pm $ 0.216 & 0.095 $ \pm $ 0.030 & - & - & - & - & - \\
                            & $-$& OCDF    & 0.947 $ \pm $ 0.019 & 4.660 $ \pm $ 0.075 & 0.000 $ \pm $ 0.000 & 3.660 $ \pm $ 0.075 & 1.192 $ \pm $ 0.098 & \textbf{1.900} $ \pm $ 0.303 & 0.064 $ \pm $ 0.024 & 6.214 $ \pm $ 0.903 \\
                            & $-$& RPS     & 0.951 $ \pm $ 0.015 & 3.804 $ \pm $ 0.137 & 0.000 $ \pm $ 0.000 & 2.804 $ \pm $ 0.137 & \textbf{1.119} $ \pm $ 0.056 & 1.920 $ \pm $ 0.274 & \textbf{0.054} $ \pm $ 0.017 & \textbf{4.983} $ \pm $ 0.564 \\
                            & $-$& min-CPS & 0.950 $ \pm $ 0.016 & \textbf{3.476} $ \pm $ 0.143 & 0.000 $ \pm $ 0.000 & \textbf{2.476} $ \pm $ 0.143 & 1.394 $ \pm $ 0.109 & 2.760 $ \pm $ 0.431 & 0.068 $ \pm $ 0.020 & 5.212 $ \pm $ 0.681 \\
\cmidrule(lr){2-11}
 & \multirow{7}{*}{0.1} & APS     & 0.901 $ \pm $ 0.024 & 2.958 $ \pm $ 0.144 & 0.249 $ \pm $ 0.034 & - & - & - & - & - \\
                            & $-$& COPOCA  & 0.901 $ \pm $ 0.025 & 3.135 $ \pm $ 0.137 & 0.000 $ \pm $ 0.000 & 2.141 $ \pm $ 0.135 & 1.504 $ \pm $ 0.138 & 3.860 $ \pm $ 0.405 & 0.149 $ \pm $ 0.040 & 4.653 $ \pm $ 0.497 \\
                            & $-$& COPOCL  & 0.899 $ \pm $ 0.027 & 2.990 $ \pm $ 0.134 & 0.000 $ \pm $ 0.000 & 1.990 $ \pm $ 0.134 & 1.453 $ \pm $ 0.107 & 3.420 $ \pm $ 0.731 & 0.146 $ \pm $ 0.041 & 4.917 $ \pm $ 0.715 \\
                            & $-$& LAC     & 0.902 $ \pm $ 0.018 & \textbf{2.783} $ \pm $ 0.127 & 0.203 $ \pm $ 0.024 & - & - & - & - & - \\
                            & $-$& OCDF    & 0.899 $ \pm $ 0.026 & 4.413 $ \pm $ 0.100 & 0.000 $ \pm $ 0.000 & 3.413 $ \pm $ 0.100 & 1.351 $ \pm $ 0.076 & \textbf{2.300} $ \pm $ 0.463 & 0.137 $ \pm $ 0.038 & 6.147 $ \pm $ 0.674 \\
                            & $-$& RPS     & 0.898 $ \pm $ 0.025 & 3.128 $ \pm $ 0.097 & 0.000 $ \pm $ 0.000 & 2.128 $ \pm $ 0.097 & \textbf{1.153} $ \pm $ 0.055 & 2.460 $ \pm $ 0.542 & \textbf{0.118} $ \pm $ 0.028 & \textbf{4.480} $ \pm $ 0.472 \\
                            & $-$& min-CPS & 0.900 $ \pm $ 0.020 & 2.849 $ \pm $ 0.119 & 0.000 $ \pm $ 0.000 & \textbf{1.849} $ \pm $ 0.119 & 1.360 $ \pm $ 0.057 & 2.840 $ \pm $ 0.370 & 0.137 $ \pm $ 0.028 & 4.582 $ \pm $ 0.449 \\
\midrule
\multirow{21}{*}{FGNet} & \multirow{7}{*}{0.02}
                       & APS     & 0.984 $ \pm $ 0.019 & 5.393 $ \pm $ 0.450 & 0.315 $ \pm $ 0.137 & - & - & - & - & - \\
                       &        & COPOCA  & 0.980 $ \pm $ 0.016 & 3.417 $ \pm $ 0.306 & 0.000 $ \pm $ 0.000 & 2.417 $ \pm $ 0.306 & 1.218 $ \pm $ 0.297 & 1.220 $ \pm $ 0.708 & 0.023 $ \pm $ 0.018 & 4.757 $ \pm $ 1.617 \\
                       &        & COPOCL  & 0.978 $ \pm $ 0.018 & 3.227 $ \pm $ 0.269 & 0.000 $ \pm $ 0.000 & 2.227 $ \pm $ 0.269 & 1.051 $ \pm $ 0.093 & 1.020 $ \pm $ 0.622 & 0.024 $ \pm $ 0.021 & 4.647 $ \pm $ 1.930 \\
                       &        & LAC     & 0.984 $ \pm $ 0.017 & 5.175 $ \pm $ 0.321 & 0.214 $ \pm $ 0.041 & - & - & - & - & - \\
                       &        & OCDF    & 0.981 $ \pm $ 0.017 & 5.050 $ \pm $ 0.143 & 0.000 $ \pm $ 0.000 & 4.050 $ \pm $ 0.143 & 1.020 $ \pm $ 0.059 & 0.800 $ \pm $ 0.571 & \textbf{0.019} $ \pm $ 0.019 & 5.990 $ \pm $ 1.792 \\
                       &        & RPS     & 0.975 $ \pm $ 0.023 & \textbf{3.148} $ \pm $ 0.274 & 0.000 $ \pm $ 0.000 & \textbf{2.148} $ \pm $ 0.274 & \textbf{1.000} $ \pm $ 0.000 & 0.800 $ \pm $ 0.404 & 0.025 $ \pm $ 0.023 & \textbf{4.628} $ \pm $ 2.046 \\
                       &        & min-CPS & 0.980 $ \pm $ 0.018 & 4.236 $ \pm $ 0.400 & 0.000 $ \pm $ 0.000 & 3.236 $ \pm $ 0.400 &  \textbf{1.000} $ \pm $ 0.000 & \textbf{0.720} $ \pm $ 0.454 & 0.020 $ \pm $ 0.018 & 5.196 $ \pm $ 1.434 \\
\cmidrule(lr){2-11}
 & \multirow{7}{*}{0.05} & APS     & 0.950 $ \pm $ 0.030 & 3.875 $ \pm $ 0.661 & 0.446 $ \pm $ 0.094 & - & - & - & - & - \\
                       & $-$& COPOCA  & 0.957 $ \pm $ 0.026 & 2.964 $ \pm $ 0.216 & 0.000 $ \pm $ 0.000 & 1.964 $ \pm $ 0.216 & 1.131 $ \pm $ 0.148 & 1.500 $ \pm $ 0.580 & 0.048 $ \pm $ 0.029 & 3.900 $ \pm $ 0.972 \\
                       & $-$& COPOCL  & 0.948 $ \pm $ 0.027 & 2.759 $ \pm $ 0.205 & 0.000 $ \pm $ 0.000 & 1.759 $ \pm $ 0.205 & 1.118 $ \pm $ 0.124 & 1.560 $ \pm $ 0.541 & 0.058 $ \pm $ 0.029 & 4.063 $ \pm $ 0.962 \\
                       & $-$& LAC     & 0.952 $ \pm $ 0.032 & 3.914 $ \pm $ 0.601 & 0.247 $ \pm $ 0.045 & - & - & - & - & - \\
                       & $-$& OCDF    & 0.951 $ \pm $ 0.026 & 4.732 $ \pm $ 0.186 & 0.000 $ \pm $ 0.000 & 3.732 $ \pm $ 0.186 & 1.126 $ \pm $ 0.140 & 1.580 $ \pm $ 0.499 & 0.054 $ \pm $ 0.029 & 5.908 $ \pm $ 0.985 \\
                       & $-$& RPS     & 0.946 $ \pm $ 0.032 & \textbf{2.584} $ \pm $ 0.198 & 0.000 $ \pm $ 0.000 & \textbf{1.584} $ \pm $ 0.198 & \textbf{1.000} $ \pm $ 0.000 & 1.000 $ \pm $ 0.000 & 0.054 $ \pm $ 0.032 & \textbf{3.752} $ \pm $ 1.102 \\
                       & $-$& min-CPS & 0.953 $ \pm $ 0.030 & 3.190 $ \pm $ 0.421 & 0.000 $ \pm $ 0.000 & 2.190 $ \pm $ 0.421 & \textbf{1.000} $ \pm $ 0.000 & \textbf{0.960} $ \pm $ 0.198 & \textbf{0.047} $ \pm $ 0.030 & 4.070 $ \pm $ 0.839 \\
\cmidrule(lr){2-11}
 & \multirow{7}{*}{0.1} & APS     & 0.902 $ \pm $ 0.040 & 2.562 $ \pm $ 0.343 & 0.230 $ \pm $ 0.066 & - & - & - & - & - \\
                       & $-$& COPOCA  & 0.898 $ \pm $ 0.037 & 2.428 $ \pm $ 0.182 & 0.000 $ \pm $ 0.000 & 1.429 $ \pm $ 0.181 & 1.116 $ \pm $ 0.107 & 2.080 $ \pm $ 1.175 & 0.115 $ \pm $ 0.047 & 3.625 $ \pm $ 0.647 \\
                       & $-$& COPOCL  & 0.900 $ \pm $ 0.040 & 2.294 $ \pm $ 0.147 & 0.000 $ \pm $ 0.000 & 1.294 $ \pm $ 0.147 & 1.072 $ \pm $ 0.060 & 1.640 $ \pm $ 0.525 & 0.107 $ \pm $ 0.043 & 3.438 $ \pm $ 0.733 \\
                       & $-$& LAC     & 0.908 $ \pm $ 0.035 & 2.558 $ \pm $ 0.332 & 0.170 $ \pm $ 0.039 & - & - & - & - & - \\
                       & $-$& OCDF    & 0.905 $ \pm $ 0.042 & 4.359 $ \pm $ 0.181 & 0.000 $ \pm $ 0.000 & 3.359 $ \pm $ 0.181 & 1.070 $ \pm $ 0.078 & 1.580 $ \pm $ 0.499 & 0.101 $ \pm $ 0.044 & 5.383 $ \pm $ 0.701 \\
                       & $-$& RPS     & 0.897 $ \pm $ 0.043 & \textbf{2.182} $ \pm $ 0.172 & 0.000 $ \pm $ 0.000 & \textbf{1.182} $ \pm $ 0.172 & \textbf{1.001} $ \pm $ 0.007 & \textbf{1.020} $ \pm $ 0.141 & 0.104 $ \pm $ 0.043 & \textbf{3.254} $ \pm $ 0.705 \\
                       & $-$& min-CPS & 0.902 $ \pm $ 0.044 & 2.303 $ \pm $ 0.293 & 0.000 $ \pm $ 0.000 & 1.303 $ \pm $ 0.293 & 1.005 $ \pm $ 0.016 & 1.080 $ \pm $ 0.274 & \textbf{0.099} $ \pm $ 0.046 & 3.275 $ \pm $ 0.665 \\
\bottomrule
\end{tabular}
\end{table*}

\begin{figure*}[!htb]
    \centering
    \includegraphics[width=\linewidth]{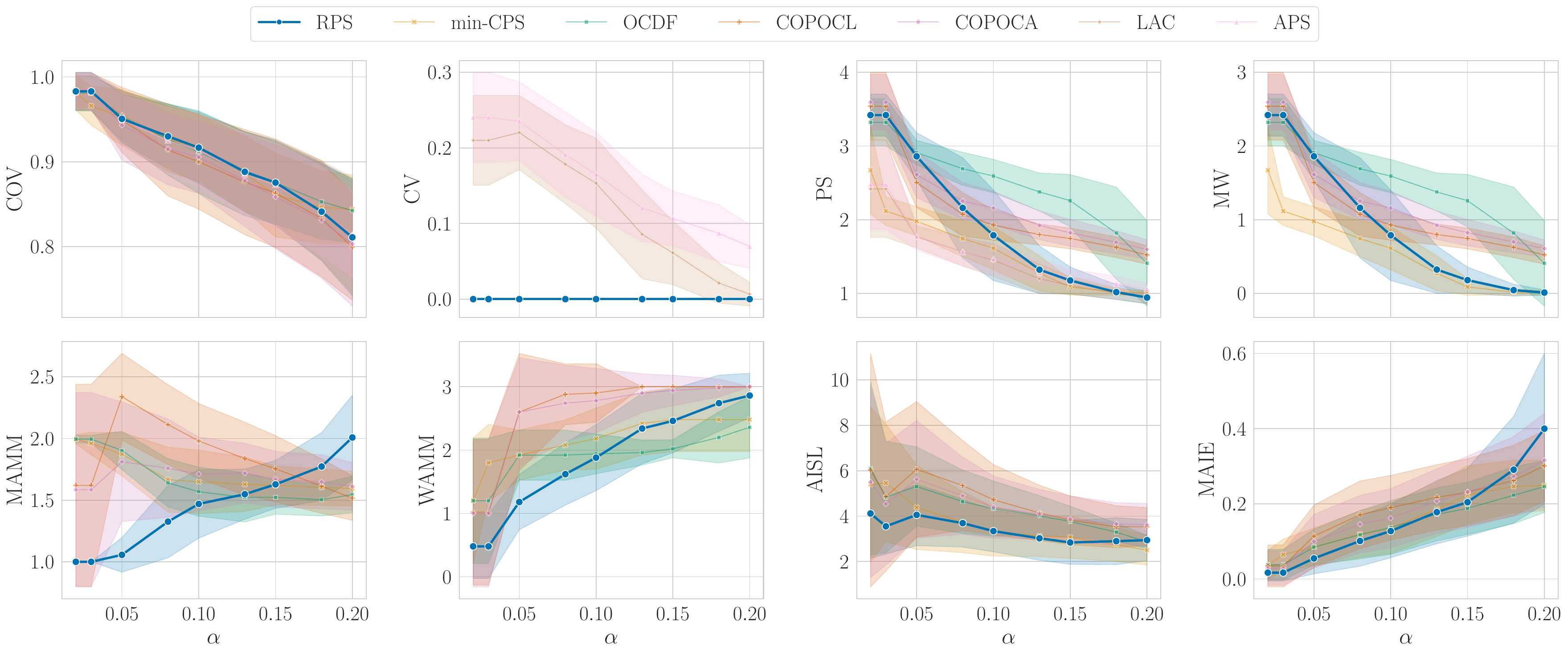}
    \caption{Comparison of prediction sets at $\alpha = \{0.01, 0.02, 0.03, 0.05, 0.08, 0.1, 0.13, 0.15, 0.18, 0.2\}$ across methods on the BACH dataset~\cite{aresta2019bach}. Shaded regions indicate standard deviation over 50 trials.}
    \label{fig:bach}
\end{figure*}

  \begin{figure*}[!htb]
     \centering
    \includegraphics[width=\linewidth]{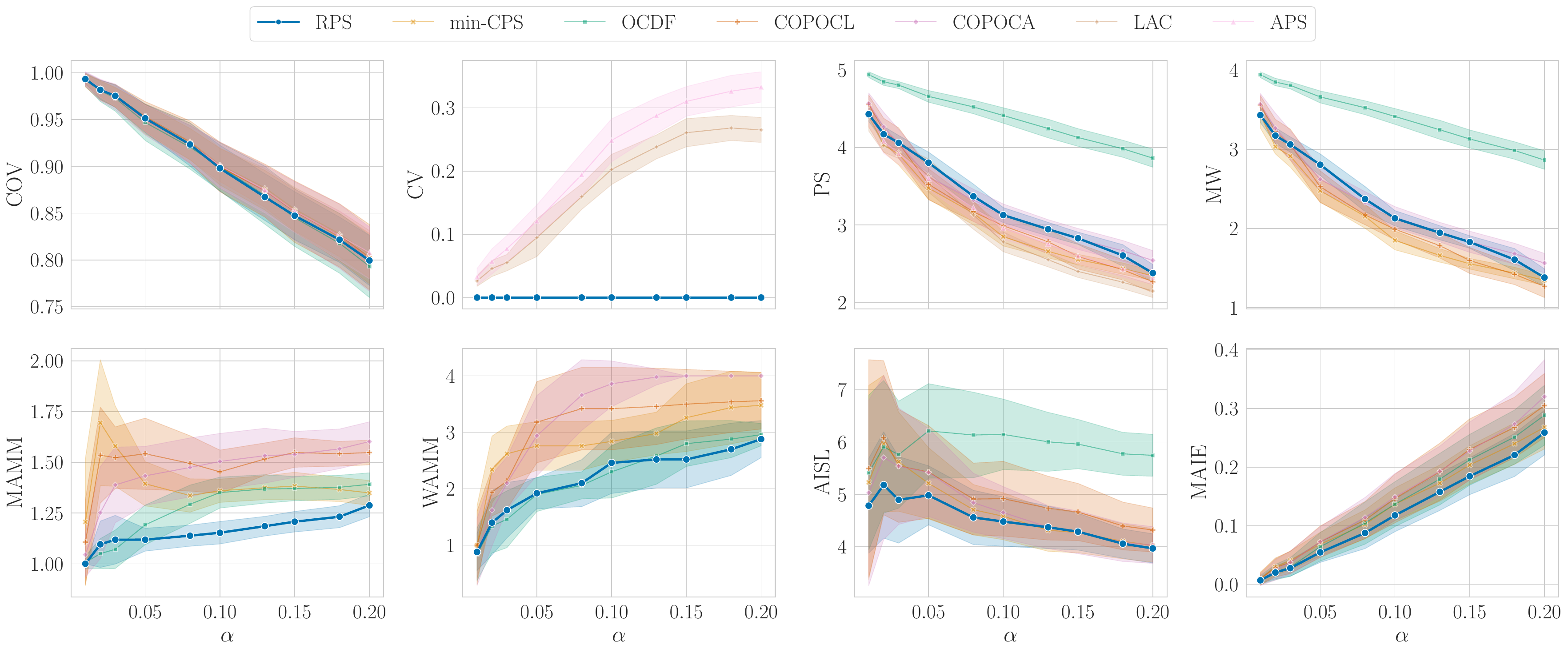}
       \caption{Comparison of prediction sets across methods on the RetinaMNIST dataset~\cite{medmnistv2}. Shaded regions indicate standard deviation.}
       \label{fig:retina}
 \end{figure*}

   \begin{figure*}[!htb]
     \centering
    \includegraphics[width=\linewidth]{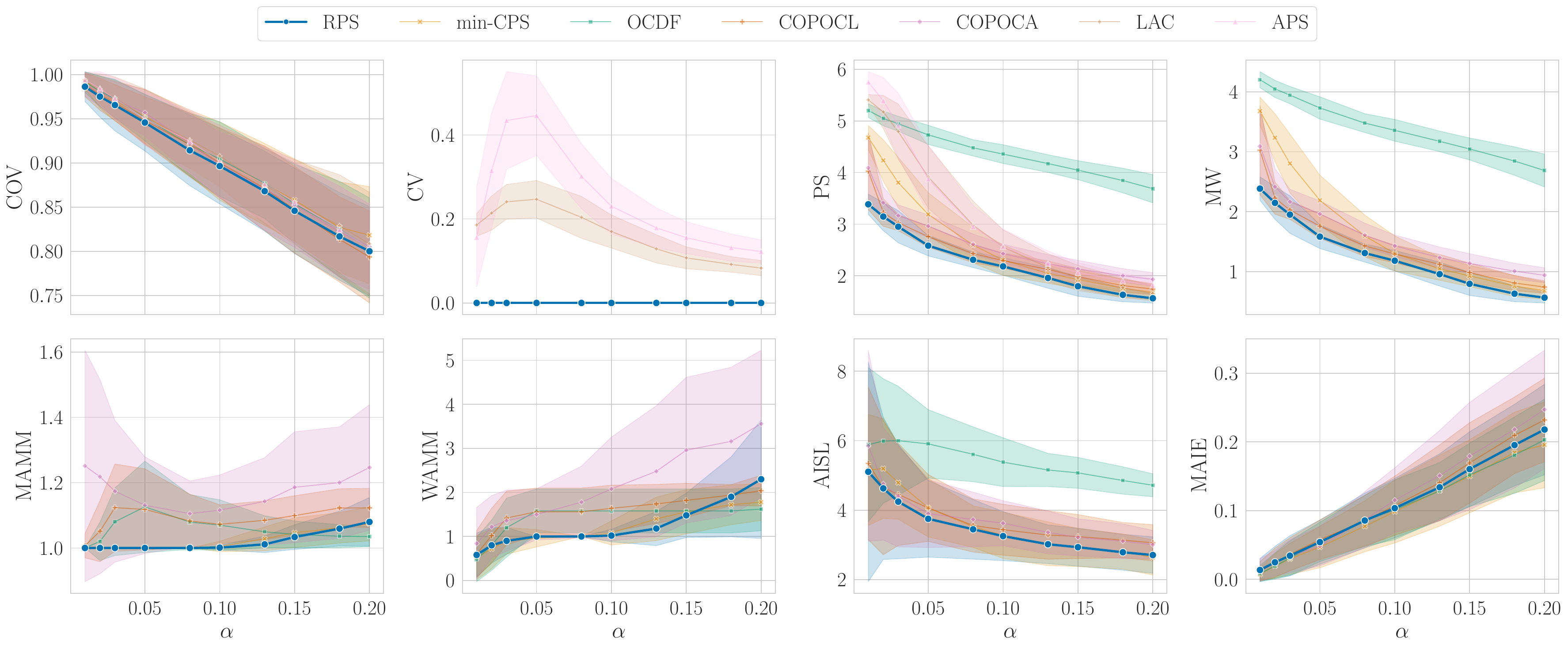}
       \caption{Comparison of prediction sets across methods on the FGNet dataset~\cite{lanitis2002toward}. Shaded regions indicate standard deviation.}
       \label{fig:fgnet}
 \end{figure*}

\section{Additional Experiments on Tabular Ordinal Datasets}
\label{sec:experiments_ordinal_datasets}

\paragraph{Model implementation.}
We conduct additional experiments on tabular datasets using multilayer perceptrons (MLPs), which allow straightforward integration of the COPOC architecture~\cite{DBLP:conf/nips/DeyMK23}.
We use a simple MLP with a single hidden layer of 64 units to maintain consistency across datasets and isolate the effect of the conformal prediction method. Our focus is on evaluating conformal prediction performance rather than optimizing base model accuracy; accordingly, we do not perform hyperparameter tuning. Nonetheless, the model achieves reasonably close-to-standard predictive performance on the datasets and successfully captures the ordinal structure of the labels.
See Table~\ref{tab:results_ord_tab} for the predictive performance of COPOC and CE loss on the considered datasets. 

\paragraph{Tabular Datasets.}
The tabular ordinal datasets are obtained from the TOC-UCO repository~\cite{ayllon2025toc} (see Table~\ref{tab:datasets} for details). 
All features are already numeric and are then standardized using standard scaling.
For evaluation, approximately 60\% of each dataset is used for training, with the remaining 40\% split evenly between calibration and test sets. The results are averaged over 50 random splits between calibration and test sets, with the training data held constant. We focus on several larger datasets exhibiting diverse class distributions, which induce predictive distributions ranging from bimodal to unimodal, as reflected by the mean predicted probabilities (MP) (see Figure~\ref{fig:mps_tabular_datasets}).

\paragraph{Experimental Results.}
Again, we exclude LAC and APS, which may produce non-contiguous prediction sets, from metrics other than COV, CV, and PS.
See the figures below for CP results across several $\alpha$ values ($\alpha=\{0.01,0.02,0.03,0.05,0.08,0.1,0.13,0.15,0.18,0.2\}$) and Table~\ref{tab:cp_performance_tab_data} for detailed results at $\alpha=0.1$.
Consistent with previous findings, RPS-based conformal prediction tends to reduce ordinal error magnitudes while often striking a favorable balance between interval width and miscoverage magnitude, as indicated by AISL. COPOCA remains a strong competitor, particularly on the LESTSensors and LEVXSensors datasets, which exhibit bimodal predictive distributions; however, this advantage comes at the cost of larger prediction sets and wider intervals compared to RPS.
Because COPOCA enforces unimodal predictive distributions, bimodal distributions are effectively centered between the extreme classes, thereby reducing ordinal risk in a manner similar to RPS and improving over min-CPS.
In general, LAC and its unimodal COPOCL variant tend to produce the most efficient sets, as indicated by PS and MW. However, this often comes at the cost of underestimating the ordinal risk indicated by MAMM, WAMM, and MAIE.

\begin{table}[ht]
\small
\centering
\caption{Summary of tabular ordinal datasets used in our experiments~\cite{ayllon2025toc}. }
\label{tab:datasets}
\begin{tabular}{lcccccc}
\toprule
Dataset & \#Samples & \#Features & \#Classes & Class distribution & IR \\
\midrule
LESTSensors    & 5,112 & 6  & 4 & (0.16, 0.14, 0.23, 0.46) & 0.98 \\
LEVXSensors    & 5,112 & 6  & 4 & (0.33, 0.08, 0.10, 0.48) & 1.45 \\
nhanes         & 5,223 & 30 & 5 & (0.11, 0.28, 0.40, 0.18, 0.04) & 1.72 \\
swd            & 1,000 & 10 & 4 & (0.03, 0.35, 0.40, 0.22) & 2.33 \\
winequalityRed & 1,599 & 11 & 5 & (0.04, 0.43, 0.40, 0.12, 0.01) & 4.88 \\
insurance         & 1,338  & 9  & 9  & (0.28, 0.19, 0.16, 0.12, 0.06, 0.05, 0.03, 0.03, 0.08) & 1.55 \\
melbourneAirbnb   & 20,036 & 48 & 10 & (0.08, 0.09, 0.10, 0.09, 0.05, 0.12, 0.11, 0.09, 0.09, 0.18) & 1.01 \\
cancerDeathRate   & 3,047  & 29 & 10 & (0.10, 0.09, 0.10, 0.11, 0.12, 0.13, 0.11, 0.08, 0.06, 0.10) & 0.95 \\
era             & 1,000 & 4  & 9  & (0.09, 0.14, 0.18, 0.17, 0.16, 0.12, 0.09, 0.03, 0.02) & 1.66 \\
lev             & 1,000 & 4  & 5  & (0.09, 0.28, 0.40, 0.20, 0.03)             & 2.16 \\
\bottomrule
\end{tabular}
\end{table}

\begin{table}[ht]
\scriptsize
\centering
\caption{Predictive performance on the datasets. Results are reported as mean $\pm$ stddev. Metrics: ACC (Accuracy), 1-OFF (1-Off Accuracy), MAE (Mean Absolute Error), MSE (Mean Squared Error), QWK (Quadratic Weighted Kappa), UMOD (Degree of Unimodality). For ACC, 1-OFF, and QWK, higher is better; for MAE and MSE, lower is better. Bold values indicate the best performance for each metric within each dataset.}
\label{tab:results_ord_tab}
\begin{tabular}{@{}lccccccc@{}}
\toprule
\textbf{Dataset} & \textbf{Loss} & \textbf{ACC} ($\uparrow$) & \textbf{1-OFF} ($\uparrow$) & \textbf{MAE} ($\downarrow$) & \textbf{MSE} ($\downarrow$) & \textbf{QWK} ($\uparrow$) & \textbf{UMOD} \\
\midrule
\multirow{2}{*}{LESTSensors} 
  & COPOC & $0.4851 \pm 0.0077$ & $\mathbf{0.7589} \pm 0.0043$ & $\mathbf{0.8590} \pm 0.0104$ & $\mathbf{1.7536} \pm 0.0265$ & $0.1119 \pm 0.0113$ & 1.000 $\pm$ 0.000\\
  & CE    & $\mathbf{0.4928} \pm 0.0091$ & $0.7552 \pm 0.0082$ & $0.8597 \pm 0.0191$ & $1.7799 \pm 0.0534$ & $\mathbf{0.2579} \pm 0.0207$ & 0.471 $\pm$ 0.013\\
\midrule
\multirow{2}{*}{LEVXSensors} 
  & COPOC & $0.5220 \pm 0.0078$ & $0.6740 \pm 0.0085$ & $1.0011 \pm 0.0215$ & $2.4413 \pm 0.0656$ & $0.3253 \pm 0.0187$ & 1.000 $\pm$ 0.000\\
  & CE    & $\mathbf{0.6091} \pm 0.0091$ & $\mathbf{0.7106} \pm 0.0101$ & $\mathbf{0.8894} \pm 0.0278$ & $\mathbf{2.3046} \pm 0.0835$ & $\mathbf{0.4368} \pm 0.0204$ & 0.0278 $\pm$ 0.004 \\
\midrule
\multirow{2}{*}{nhanes} 
  & COPOC & $0.4098 \pm 0.0093$ & $0.8884 \pm 0.0056$ & $0.7073 \pm 0.0121$ & $0.9526 \pm 0.0224$ & $\mathbf{0.3198} \pm 0.0168$ & 1.000 $\pm$ 0.000 \\
  & CE    & $\mathbf{0.4225} \pm 0.0078$ & $\mathbf{0.8944} \pm 0.0054$ & $\mathbf{0.6870} \pm 0.0097$ & $\mathbf{0.9140} \pm 0.0206$ & $0.2944 \pm 0.0179$ & 0.941 $\pm$ 0.005 \\
\midrule
\multirow{2}{*}{SWD} 
  & COPOC & $0.4730 \pm 0.0267$ & $0.8827 \pm 0.0141$ & $0.6443 \pm 0.0341$ & $0.8789 \pm 0.0571$ & $0.4249 \pm 0.0364$ & 1.000 $\pm$ 0.000\\
  & CE    & $\mathbf{0.5596} \pm 0.0208$ & $\mathbf{0.9648} \pm 0.0091$ & $\mathbf{0.4756} \pm 0.0236$ & $\mathbf{0.5460} \pm 0.0361$ & $\mathbf{0.5031} \pm 0.0327$ & 1.000 $\pm$ 0.000 \\
\midrule
\multirow{2}{*}{winequalityRed} 
  & COPOC & $0.5298 \pm 0.0235$ & $0.9672 \pm 0.0060$ & $0.5030 \pm 0.0248$ & $0.5686 \pm 0.0309$ & $\mathbf{0.4927} \pm 0.0304$ & 1.000 $\pm$ 0.000\\
  & CE    & $\mathbf{0.5693} \pm 0.0156$ & $\mathbf{0.9724} \pm 0.0057$ & $\mathbf{0.4583} \pm 0.0152$ & $\mathbf{0.5136} \pm 0.0201$ & $0.4379 \pm 0.0204$ & 1.000 $\pm$ 0.000 \\
  \midrule
\multirow{2}{*}{insurance} 
  & COPOC & $\mathbf{0.5605} \pm 0.0163$ & $\mathbf{0.7562} \pm 0.0144$ & $\mathbf{1.1058} \pm 0.0507$ & $\mathbf{4.2700} \pm 0.3290$ & $\mathbf{0.6236} \pm 0.0320$ & 1.000 $\pm$ 0.000\\
  & CE    & $0.4943 \pm 0.0151$ & $0.7527 \pm 0.0105$ & $1.2496 \pm 0.0541$ & $5.0466 \pm 0.3768$ & $0.6021 \pm 0.0345$ & 0.000 $\pm$ 0.000 \\
  \midrule
  \multirow{2}{*}{melbourneAirbnb} 
  & COPOC & $0.3447 \pm 0.0054$ & $\mathbf{0.6779} \pm 0.0058$ & $\mathbf{1.2867} \pm 0.0168$ & $\mathbf{3.7262} \pm 0.0873$ & $\mathbf{0.8007} \pm 0.0046$ & 1.000 $\pm$ 0.000\\
  & CE    & $\mathbf{0.3493} \pm 0.0046$ & $0.6483 \pm 0.0056$ & $1.3835 \pm 0.0184$ & $4.4412 \pm 0.1079$ & $0.7740 \pm 0.0054$ & 0.197 $\pm$ 0.005 \\
  \midrule
\multirow{2}{*}{cancerDeathRate} 
  & COPOC & $0.2675 \pm 0.0127$ & $\mathbf{0.6067} \pm 0.0146$ & $\mathbf{1.5839} \pm 0.0532$ & $\mathbf{5.0903} \pm 0.3254$ & $\mathbf{0.6973} \pm 0.0192$ & 1.000 $\pm$ 0.000\\
  & CE    & $\mathbf{0.2685} \pm 0.0123$ & $0.5339 \pm 0.0146$ & $1.8201 \pm 0.0569$ & $6.6766 \pm 0.3864$ & $0.6432 \pm 0.0202$ & 0.083 $\pm$ 0.008 \\
\midrule
\multirow{2}{*}{era} 
  & COPOC & $0.2171 \pm 0.0223$ & $\mathbf{0.6362} \pm 0.0258$ & $\mathbf{1.3670} \pm 0.0615$ & $3.1862 \pm 0.2774$ & $\mathbf{0.5692} \pm 0.0374$ & 1.000 $\pm$ 0.000\\
  & CE    & $\mathbf{0.2312} \pm 0.0184$ & $0.5846 \pm 0.0214$ & $1.3874 \pm 0.0389$ & $\mathbf{3.1610} \pm 0.1495$ & $0.3379 \pm 0.0306$ & 0.166 $\pm$ 0.015\\
  \midrule
\multirow{2}{*}{lev} 
  & COPOC & $\mathbf{0.5737} \pm 0.0223$ & $\mathbf{0.9573} \pm 0.0095$ & $\mathbf{0.4690} \pm 0.0249$ & $\mathbf{0.5544} \pm 0.0377$ & $\mathbf{0.6668} \pm 0.0245$ & 1.000 $\pm$ 0.000\\
  & CE    & $0.4633 \pm 0.0186$ & $0.9500 \pm 0.0082$ & $0.5867 \pm 0.0193$ & $0.6867 \pm 0.0289$ & $0.4009 \pm 0.0272$ & 0.777 $\pm$ 0.012 \\
\bottomrule
\end{tabular}
\end{table}

\begin{table}[ht]
\tiny
\centering
\caption{Performance comparison of the different CP methods on the tabular datasets at $\alpha=0.1$.}
\label{tab:cp_performance_tab_data}
\begin{tabular}{llcccccccc}
\toprule
Dataset & Method & COV & PS ($\downarrow$) & CV ($\downarrow$) & MW ($\downarrow$) & MAMM ($\downarrow$) & WAMM ($\downarrow$) & MAIE ($\downarrow$) & AISL ($\downarrow$) \\
\midrule
 \multirow{7}{*}{LESTSensors} 
 & APS      & $0.904 \pm 0.012$ & $2.932 \pm 0.063$ & $0.343 \pm 0.015$ & $-$ & $-$ & $-$ & $-$ & $-$ \\
 & COPOCA   & $0.901 \pm 0.009$ & $3.08 \pm 0.049$  & $0.0 \pm 0.0$     & $2.081 \pm 0.049$ & $\mathbf{1.053} \pm 0.023$ & $2.54 \pm 0.542$ & $\mathbf{0.104} \pm 0.01$  & $\mathbf{4.17} \pm 0.171$  \\
 & COPOCL   & $0.902 \pm 0.012$ & $2.803 \pm 0.065$ & $0.0 \pm 0.0$     & $\mathbf{1.803} \pm 0.065$ & $1.453 \pm 0.053$ & $3.0 \pm 0.0$    & $0.142 \pm 0.02$  & $4.646 \pm 0.341$ \\
 & LAC      & $0.901 \pm 0.01$  & $\mathbf{2.755} \pm 0.049$ & $0.232 \pm 0.016$ & $-$ & $-$ & $-$ & $-$ & $-$ \\
 & OCDF     & $0.901 \pm 0.012$ & $2.871 \pm 0.048$ & $0.0 \pm 0.0$     & $1.871 \pm 0.048$ & $1.319 \pm 0.041$ & $3.0 \pm 0.0$    & $0.131 \pm 0.016$ & $4.482 \pm 0.281$ \\
 & RPS      & $0.901 \pm 0.008$ & $2.966 \pm 0.028$ & $0.0 \pm 0.0$     & $1.966 \pm 0.028$ & $1.12 \pm 0.02$   & $\mathbf{2.0} \pm 0.0$    & $0.111 \pm 0.009$ & $4.182 \pm 0.166$ \\
 & min-CPS  & $0.901 \pm 0.01$  & $2.823 \pm 0.036$ & $0.0 \pm 0.0$     & $1.823 \pm 0.036$ & $1.216 \pm 0.032$ & $3.0 \pm 0.0$    & $0.121 \pm 0.014$ & $4.235 \pm 0.241$ \\
\midrule
 \multirow{7}{*}{LEVXSensors} 
 & APS      & $0.897 \pm 0.009$ & $2.487 \pm 0.051$ & $0.78 \pm 0.01$  & $-$ & $-$ & $-$ & $-$ & $-$ \\
 & COPOCA   & $0.897 \pm 0.012$ & $3.414 \pm 0.051$ & $0.0 \pm 0.0$    & $2.414 \pm 0.051$ & $\mathbf{1.027} \pm 0.016$ & $2.06 \pm 0.424$ & $\mathbf{0.106} \pm 0.012$ & $\mathbf{4.525} \pm 0.208$ \\
 & COPOCL   & $0.899 \pm 0.014$ & $2.927 \pm 0.053$ & $0.0 \pm 0.0$    & $1.927 \pm 0.053$ & $1.843 \pm 0.066$ & $3.0 \pm 0.0$    & $0.186 \pm 0.026$ & $5.638 \pm 0.474$ \\
 & LAC      & $0.899 \pm 0.011$ & $\mathbf{2.405} \pm 0.055$ & $0.626 \pm 0.016$ & $-$ & $-$ & $-$ & $-$ & $-$ \\
 & OCDF     & $0.898 \pm 0.012$ & $2.808 \pm 0.057$ & $0.0 \pm 0.0$    & $\mathbf{1.808} \pm 0.057$ & $1.876 \pm 0.059$ & $3.0 \pm 0.0$    & $0.192 \pm 0.027$ & $5.65 \pm 0.483$  \\
 & RPS      & $0.896 \pm 0.014$ & $3.104 \pm 0.042$ & $0.0 \pm 0.0$    & $2.104 \pm 0.042$ & $1.174 \pm 0.029$ & $\mathbf{2.0} \pm 0.0$    & $0.122 \pm 0.017$ & $4.538 \pm 0.295$ \\
 & min-CPS  & $0.896 \pm 0.015$ & $2.874 \pm 0.052$ & $0.0 \pm 0.0$    & $1.874 \pm 0.052$ & $1.458 \pm 0.059$ & $3.0 \pm 0.0$    & $0.152 \pm 0.022$ & $4.911 \pm 0.394$ \\
 \midrule
  \multirow{7}{*}{nhanes} 
 & APS      & $0.899 \pm 0.013$ & $3.129 \pm 0.066$ & $0.03 \pm 0.005$ & $-$ & $-$ & $-$ & $-$ & $-$ \\
 & COPOCA   & $0.899 \pm 0.012$ & $3.107 \pm 0.062$ & $0.0 \pm 0.0$   & $2.107 \pm 0.062$ & $1.073 \pm 0.02$  & $2.2 \pm 0.606$  & $0.109 \pm 0.013$ & $4.279 \pm 0.214$ \\
 & COPOCL   & $0.9 \pm 0.013$   & $2.974 \pm 0.055$ & $0.0 \pm 0.0$   & $\mathbf{1.974} \pm 0.055$ & $1.102 \pm 0.022$ & $2.0 \pm 0.0$    & $0.111 \pm 0.014$ & $4.185 \pm 0.235$ \\
 & LAC      & $0.899 \pm 0.012$ & $\mathbf{2.971} \pm 0.052$ & $0.011 \pm 0.002$ & $-$ & $-$ & $-$ & $-$ & $-$ \\
 & OCDF     & $0.902 \pm 0.01$  & $4.018 \pm 0.045$ & $0.0 \pm 0.0$   & $3.018 \pm 0.045$ & $1.143 \pm 0.027$ & $2.92 \pm 0.274$ & $0.112 \pm 0.013$ & $5.264 \pm 0.209$ \\
 & RPS      & $0.9 \pm 0.012$   & $3.005 \pm 0.052$ & $0.0 \pm 0.0$   & $2.005 \pm 0.052$ & $\mathbf{1.041} \pm 0.018$ & $\mathbf{1.96} \pm 0.198$ & $\mathbf{0.104} \pm 0.013$ & $\mathbf{4.092} \pm 0.212$ \\
 & min-CPS  & $0.899 \pm 0.01$  & $3.064 \pm 0.046$ & $0.0 \pm 0.0$   & $2.064 \pm 0.046$ & $1.062 \pm 0.018$ & $2.0 \pm 0.0$    & $0.107 \pm 0.011$ & $4.203 \pm 0.173$ \\
 \midrule
  \multirow{7}{*}{swd} 
 & APS      & $0.906 \pm 0.025$ & $2.221 \pm 0.086$ & $0.0 \pm 0.0$   & $-$ & $-$ & $-$ & $-$ & $-$ \\
 & COPOCA   & $0.901 \pm 0.027$ & $2.459 \pm 0.098$ & $0.0 \pm 0.0$   & $1.459 \pm 0.098$ & $1.002 \pm 0.007$ & $1.04 \pm 0.198$  & $0.099 \pm 0.028$ & $3.447 \pm 0.476$ \\
 & COPOCL   & $0.899 \pm 0.033$ & $2.464 \pm 0.168$ & $0.0 \pm 0.0$   & $1.464 \pm 0.168$ & $\mathbf{1.0} \pm 0.0$     & $\mathbf{1.0} \pm 0.0$     & $0.101 \pm 0.033$ & $3.486 \pm 0.506$ \\
 & LAC      & $0.903 \pm 0.026$ & $\mathbf{2.131} \pm 0.074$ & $0.0 \pm 0.0$   & $-$ & $-$ & $-$ & $-$ & $-$ \\
 & OCDF     & $0.902 \pm 0.027$ & $2.686 \pm 0.102$ & $0.0 \pm 0.0$   & $1.686 \pm 0.102$ & $1.017 \pm 0.025$ & $1.34 \pm 0.479$  & $0.1 \pm 0.029$   & $3.694 \pm 0.492$ \\
 & RPS      & $0.901 \pm 0.025$ & $2.15 \pm 0.051$  & $0.0 \pm 0.0$   & $\mathbf{1.15} \pm 0.051$  & $\mathbf{1.0} \pm 0.0$     & $\mathbf{1.0} \pm 0.0$     & $0.099 \pm 0.025$ & $\mathbf{3.136} \pm 0.46$  \\
 & min-CPS  & $0.906 \pm 0.028$ & $2.356 \pm 0.07$  & $0.0 \pm 0.0$   & $1.356 \pm 0.07$  & $1.001 \pm 0.006$ & $1.04 \pm 0.198$  & $\mathbf{0.094} \pm 0.029$ & $3.24 \pm 0.516$  \\
 \midrule
  \multirow{7}{*}{winequalityRed} 
 & APS      & $0.905 \pm 0.014$ & $2.383 \pm 0.101$ & $0.0 \pm 0.0$   & $-$ & $-$ & $-$ & $-$ & $-$ \\
 & COPOCA   & $0.903 \pm 0.023$ & $2.481 \pm 0.132$ & $0.0 \pm 0.0$   & $1.481 \pm 0.132$ & $1.026 \pm 0.023$ & $1.64 \pm 0.485$ & $0.1 \pm 0.023$   & $3.478 \pm 0.344$ \\
 & COPOCL   & $0.903 \pm 0.021$ & $2.471 \pm 0.084$ & $0.0 \pm 0.0$   & $1.471 \pm 0.084$ & $1.029 \pm 0.023$ & $1.68 \pm 0.471$ & $0.1 \pm 0.022$   & $3.476 \pm 0.369$ \\
 & LAC      & $0.901 \pm 0.015$ & $2.138 \pm 0.126$ & $0.0 \pm 0.0$   & $-$ & $-$ & $-$ & $-$ & $-$ \\
 & OCDF     & $0.903 \pm 0.021$ & $3.25 \pm 0.14$   & $0.0 \pm 0.0$   & $2.25 \pm 0.14$   & $\mathbf{1.015} \pm 0.018$ & $\mathbf{1.44} \pm 0.501$ & $\mathbf{0.099} \pm 0.021$ & $4.221 \pm 0.291$ \\
 & RPS      & $0.901 \pm 0.016$ & $\mathbf{2.135} \pm 0.092$ & $0.0 \pm 0.0$   & $\mathbf{1.135} \pm 0.092$ & $1.018 \pm 0.02$  & $1.5 \pm 0.505$  & $0.101 \pm 0.017$ & $\mathbf{3.163} \pm 0.264$ \\
 & min-CPS  & $0.902 \pm 0.017$ & $2.296 \pm 0.103$ & $0.0 \pm 0.0$   & $1.296 \pm 0.103$ & $1.047 \pm 0.025$ & $1.9 \pm 0.303$  & $0.103 \pm 0.017$ & $3.35 \pm 0.255$  \\
 \midrule
 \multirow{7}{*}{insurance} 
 & APS      & $0.9 \pm 0.018$   & $4.284 \pm 0.277$ & $0.495 \pm 0.072$ & $-$ & $-$ & $-$ & $-$ & $-$ \\
 & COPOCA   & $0.9 \pm 0.019$   & $4.599 \pm 0.38$  & $0.0 \pm 0.0$     & $3.599 \pm 0.38$  & $1.914 \pm 0.247$ & $4.22 \pm 0.79$  & $0.196 \pm 0.061$ & $7.51 \pm 0.852$  \\
 & COPOCL   & $0.901 \pm 0.02$  & $4.464 \pm 0.241$ & $0.0 \pm 0.0$     & $\mathbf{3.464} \pm 0.241$ & $2.163 \pm 0.177$ & $4.48 \pm 0.646$ & $0.216 \pm 0.056$ & $7.793 \pm 0.891$ \\
 & LAC      & $0.899 \pm 0.018$ & $\mathbf{3.922} \pm 0.241$ & $0.342 \pm 0.043$ & $-$ & $-$ & $-$ & $-$ & $-$ \\
 & OCDF     & $0.901 \pm 0.021$ & $5.631 \pm 0.257$ & $0.0 \pm 0.0$     & $4.631 \pm 0.257$ & $2.259 \pm 0.199$ & $4.9 \pm 0.303$  & $0.223 \pm 0.052$ & $9.088 \pm 0.818$ \\
 & RPS      & $0.901 \pm 0.017$ & $5.954 \pm 0.138$ & $0.0 \pm 0.0$     & $4.954 \pm 0.138$ & $\mathbf{1.231} \pm 0.083$ & $\mathbf{2.58} \pm 0.538$ & $\mathbf{0.123} \pm 0.027$ & $\mathbf{7.416} \pm 0.404$ \\
 & min-CPS  & $0.898 \pm 0.021$ & $4.75 \pm 0.364$  & $0.0 \pm 0.0$     & $3.75 \pm 0.364$  & $1.963 \pm 0.229$ & $4.04 \pm 0.402$ & $0.204 \pm 0.063$ & $7.829 \pm 0.911$ \\
 \midrule
  \multirow{7}{*}{melbourneAirbnb} 
 & APS      & $0.899 \pm 0.008$ & $4.851 \pm 0.069$ & $0.233 \pm 0.005$ & $-$ & $-$ & $-$ & $-$ & $-$ \\
 & COPOCA   & $0.899 \pm 0.007$ & $4.905 \pm 0.066$ & $0.0 \pm 0.0$     & $3.907 \pm 0.066$ & $1.879 \pm 0.06$  & $8.98 \pm 0.141$ & $0.191 \pm 0.015$ & $7.716 \pm 0.248$ \\
 & COPOCL   & $0.898 \pm 0.007$ & $4.741 \pm 0.052$ & $0.0 \pm 0.0$     & $\mathbf{3.741} \pm 0.052$ & $1.924 \pm 0.044$ & $7.82 \pm 0.438$ & $0.196 \pm 0.014$ & $7.653 \pm 0.229$ \\
 & LAC      & $0.899 \pm 0.007$ & $\mathbf{4.725} \pm 0.055$ & $0.163 \pm 0.004$ & $-$ & $-$ & $-$ & $-$ & $-$ \\
 & OCDF     & $0.9 \pm 0.006$   & $6.76 \pm 0.056$  & $0.0 \pm 0.0$     & $5.76 \pm 0.056$  & $1.942 \pm 0.04$  & $7.78 \pm 0.418$ & $0.194 \pm 0.012$ & $9.632 \pm 0.187$ \\
 & RPS      & $0.899 \pm 0.007$ & $4.915 \pm 0.052$ & $0.0 \pm 0.0$     & $3.915 \pm 0.052$ & $\mathbf{1.768} \pm 0.041$ & $\mathbf{6.56} \pm 0.501$ & $\mathbf{0.178} \pm 0.013$ & $\mathbf{7.47} \pm 0.204$  \\
 & min-CPS  & $0.899 \pm 0.009$ & $4.813 \pm 0.061$ & $0.0 \pm 0.0$     & $3.813 \pm 0.061$ & $1.822 \pm 0.039$ & $7.76 \pm 0.431$ & $0.185 \pm 0.016$ & $7.51 \pm 0.255$  \\
 \midrule
  \multirow{7}{*}{cancerDeathRate} 
 & APS      & $0.899 \pm 0.018$ & $6.025 \pm 0.186$ & $0.179 \pm 0.018$ & $-$ & $-$ & $-$ & $-$ & $-$ \\
 & COPOCA   & $0.902 \pm 0.021$ & $5.952 \pm 0.218$ & $0.0 \pm 0.0$     & $\mathbf{4.952} \pm 0.218$ & $1.91 \pm 0.128$  & $5.72 \pm 0.784$ & $0.187 \pm 0.045$ & $8.695 \pm 0.686$ \\
 & COPOCL   & $0.904 \pm 0.021$ & $6.002 \pm 0.221$ & $0.0 \pm 0.0$     & $5.002 \pm 0.221$ & $1.92 \pm 0.106$  & $5.2 \pm 0.404$  & $0.185 \pm 0.043$ & $8.693 \pm 0.659$ \\
 & LAC      & $0.899 \pm 0.016$ & $\mathbf{6.04} \pm 0.168$  & $0.112 \pm 0.016$ & $-$ & $-$ & $-$ & $-$ & $-$ \\
 & OCDF     & $0.902 \pm 0.017$ & $7.213 \pm 0.159$ & $0.0 \pm 0.0$     & $6.213 \pm 0.159$ & $2.278 \pm 0.143$ & $6.52 \pm 0.735$ & $0.223 \pm 0.037$ & $10.677 \pm 0.618$ \\
 & RPS      & $0.9 \pm 0.019$   & $6.093 \pm 0.184$ & $0.0 \pm 0.0$     & $5.093 \pm 0.184$ & $\mathbf{1.638} \pm 0.102$ & $\mathbf{4.88} \pm 0.328$ & $\mathbf{0.163} \pm 0.033$ & $\mathbf{8.354} \pm 0.483$ \\
 & min-CPS  & $0.899 \pm 0.017$ & $5.987 \pm 0.155$ & $0.0 \pm 0.0$     & $4.987 \pm 0.155$ & $1.882 \pm 0.135$ & $5.58 \pm 0.499$ & $0.191 \pm 0.038$ & $8.812 \pm 0.62$  \\
 \midrule
  \multirow{7}{*}{era} 
 & APS      & $0.9 \pm 0.024$   & $5.706 \pm 0.178$ & $0.19 \pm 0.035$ & $-$ & $-$ & $-$ & $-$ & $-$ \\
 & COPOCA   & $0.905 \pm 0.027$ & $5.679 \pm 0.276$ & $0.0 \pm 0.0$   & $4.679 \pm 0.276$ & $1.46 \pm 0.138$  & $2.92 \pm 0.601$ & $0.14 \pm 0.045$  & $7.477 \pm 0.634$ \\
 & COPOCL   & $0.902 \pm 0.024$ & $\mathbf{5.639} \pm 0.218$ & $0.0 \pm 0.0$   & $\mathbf{4.639} \pm 0.218$ & $1.506 \pm 0.108$ & $2.8 \pm 0.452$  & $0.148 \pm 0.04$  & $7.609 \pm 0.598$ \\
 & LAC      & $0.903 \pm 0.022$ & $5.842 \pm 0.234$ & $0.116 \pm 0.044$ & $-$ & $-$ & $-$ & $-$ & $-$ \\
 & OCDF     & $0.908 \pm 0.018$ & $6.289 \pm 0.144$ & $0.0 \pm 0.0$   & $5.289 \pm 0.144$ & $1.464 \pm 0.117$ & $2.84 \pm 0.37$  & $0.135 \pm 0.028$ & $7.985 \pm 0.443$ \\
 & RPS      & $0.905 \pm 0.026$ & $5.877 \pm 0.244$ & $0.0 \pm 0.0$   & $4.877 \pm 0.244$ & $\mathbf{1.066} \pm 0.058$ & $\mathbf{1.68} \pm 0.471$ & $\mathbf{0.102} \pm 0.032$ & $\mathbf{6.913} \pm 0.41$  \\
 & min-CPS  & $0.909 \pm 0.019$ & $6.013 \pm 0.036$ & $0.0 \pm 0.0$   & $5.013 \pm 0.036$ & $1.308 \pm 0.079$ & $2.42 \pm 0.499$ & $0.118 \pm 0.02$  & $7.371 \pm 0.376$ \\
 \midrule
  \multirow{7}{*}{lev} 
 & APS      & $0.902 \pm 0.021$ & $2.824 \pm 0.111$ & $0.014 \pm 0.01$ & $-$ & $-$ & $-$ & $-$ & $-$ \\
 & COPOCA   & $0.902 \pm 0.025$ & $2.675 \pm 0.148$ & $0.0 \pm 0.0$   & $1.675 \pm 0.148$ & $1.066 \pm 0.041$ & $1.88 \pm 0.328$ & $0.105 \pm 0.027$ & $3.771 \pm 0.429$ \\
 & COPOCL   & $0.905 \pm 0.024$ & $\mathbf{2.45} \pm 0.146$  & $0.0 \pm 0.0$   & $\mathbf{1.45} \pm 0.146$  & $1.059 \pm 0.038$ & $1.82 \pm 0.388$ & $0.101 \pm 0.026$ & $\mathbf{3.462} \pm 0.386$ \\
 & LAC      & $0.897 \pm 0.022$ & $2.628 \pm 0.115$ & $0.0 \pm 0.0$   & $-$ & $-$ & $-$ & $-$ & $-$ \\
 & OCDF     & $0.901 \pm 0.026$ & $3.515 \pm 0.106$ & $0.0 \pm 0.0$   & $2.515 \pm 0.106$ & $1.106 \pm 0.052$ & $1.94 \pm 0.24$  & $0.11 \pm 0.029$  & $4.709 \pm 0.49$  \\
 & RPS      & $0.904 \pm 0.014$ & $2.857 \pm 0.073$ & $0.0 \pm 0.0$   & $1.857 \pm 0.073$ & $\mathbf{1.0} \pm 0.0$     & $\mathbf{1.0} \pm 0.0$    & $\mathbf{0.096} \pm 0.014$ & $3.781 \pm 0.216$ \\
 & min-CPS  & $0.902 \pm 0.026$ & $2.812 \pm 0.106$ & $0.0 \pm 0.0$   & $1.812 \pm 0.106$ & $1.057 \pm 0.045$ & $1.78 \pm 0.418$ & $0.102 \pm 0.026$ & $3.862 \pm 0.426$ \\
\bottomrule
\end{tabular}
\end{table}

\begin{figure*}[htb!]
 \centering
 \begin{minipage}[b]{\textwidth}
        \centering
\includegraphics[width=\linewidth]{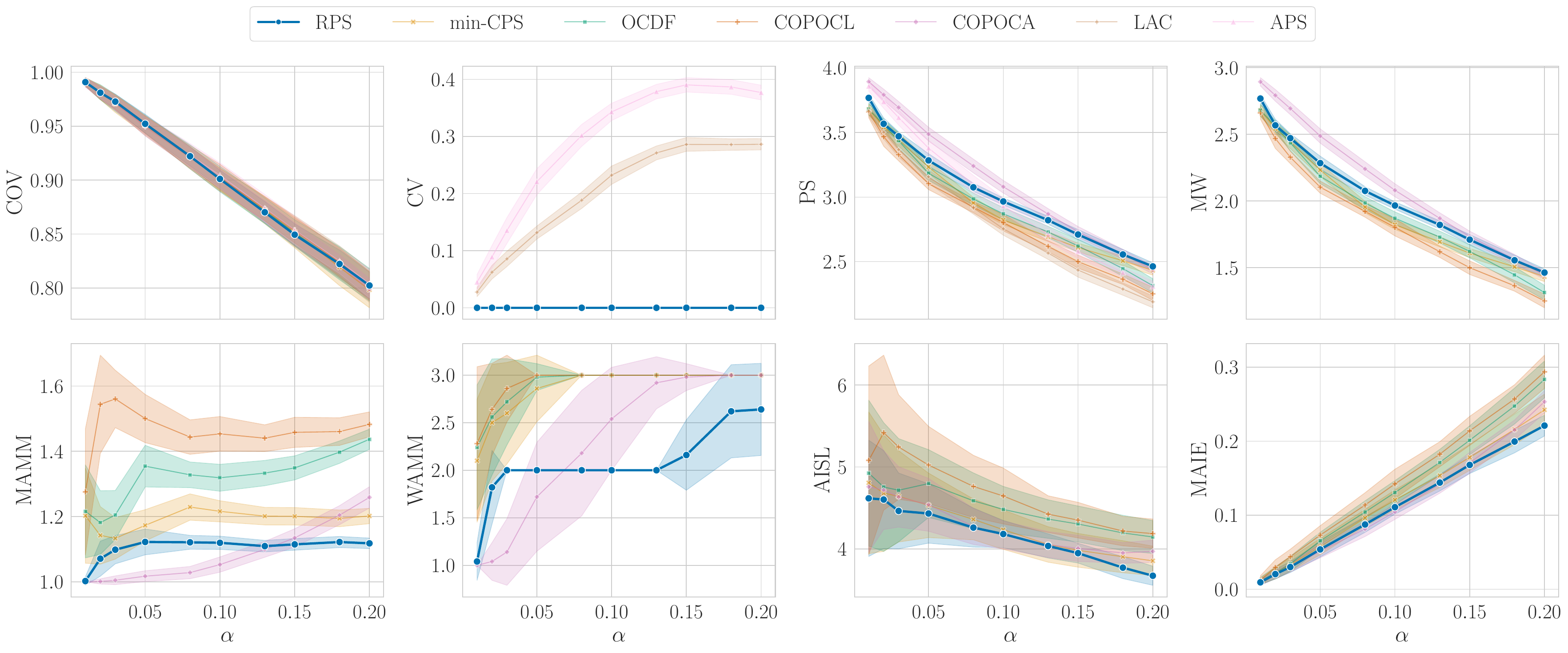}
    \caption{Comparison of prediction sets across methods on LESTSensors dataset. Shaded regions indicate standard deviation.}
 \end{minipage}
 \begin{minipage}[b]{\textwidth}
        \centering
\includegraphics[width=\linewidth]{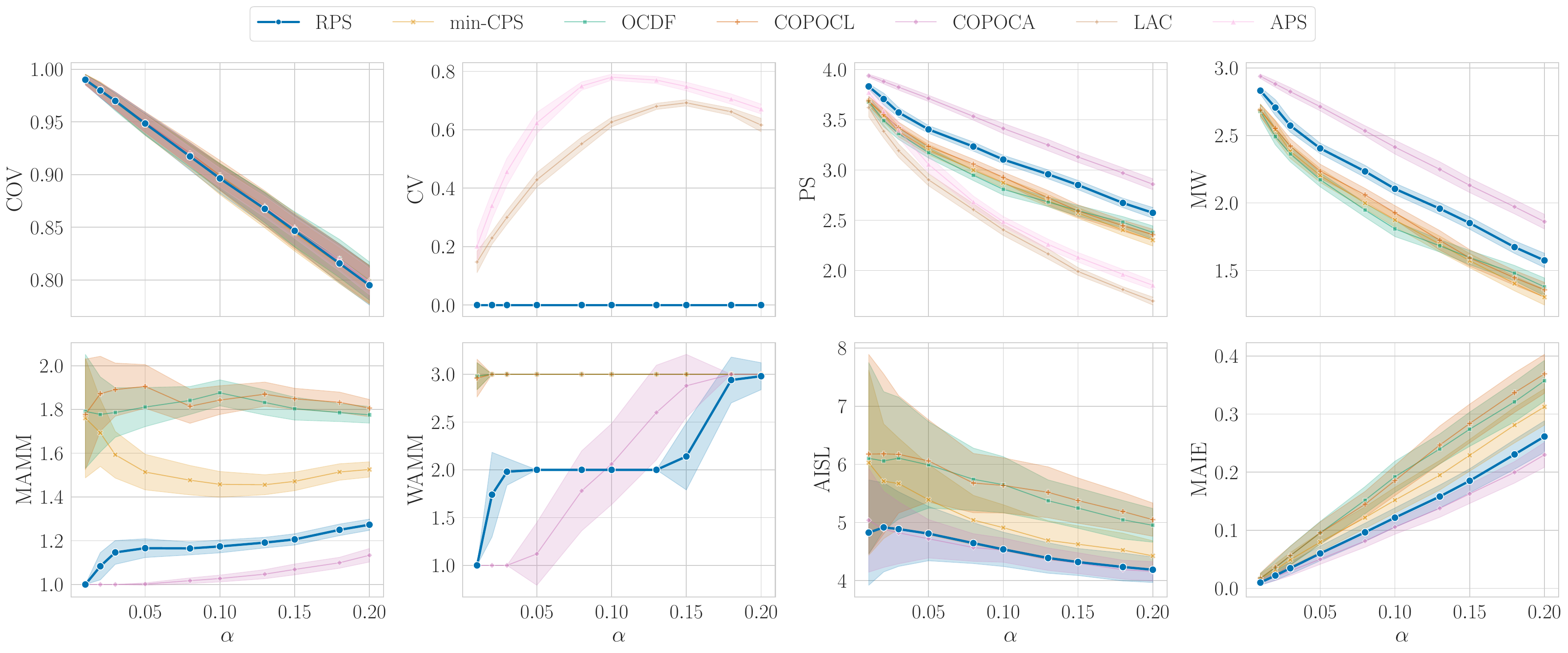}
    \caption{Comparison of prediction sets across methods on LEVXSensors dataset. Shaded regions indicate standard deviation.}
 \end{minipage}
    \begin{minipage}[b]{\textwidth}
        \centering
\includegraphics[width=\linewidth]{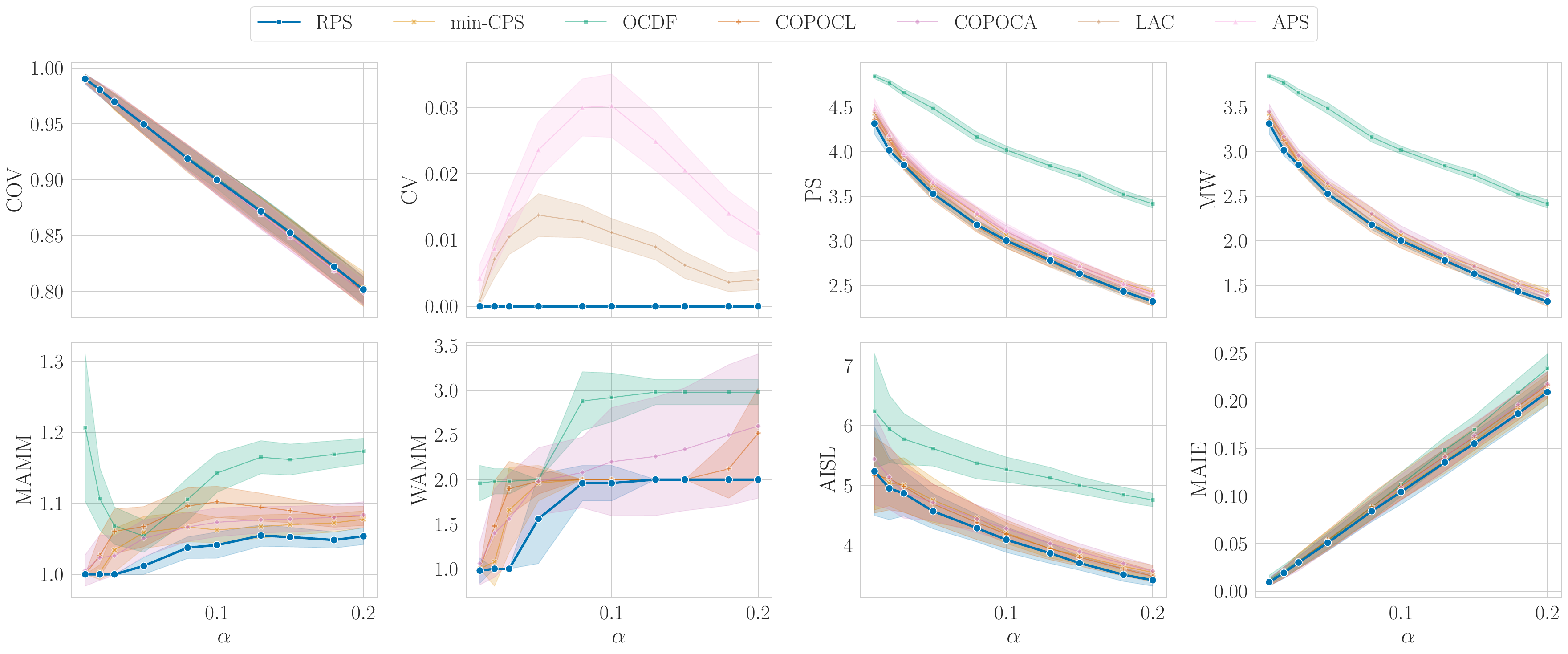}
    \caption{Comparison of prediction sets across methods on nhanes dataset. Shaded regions indicate standard deviation.}
 \end{minipage}
\end{figure*}

\begin{figure*}[htb!]
 \centering
       \begin{minipage}[b]{\textwidth}
        \centering
\includegraphics[width=\linewidth]{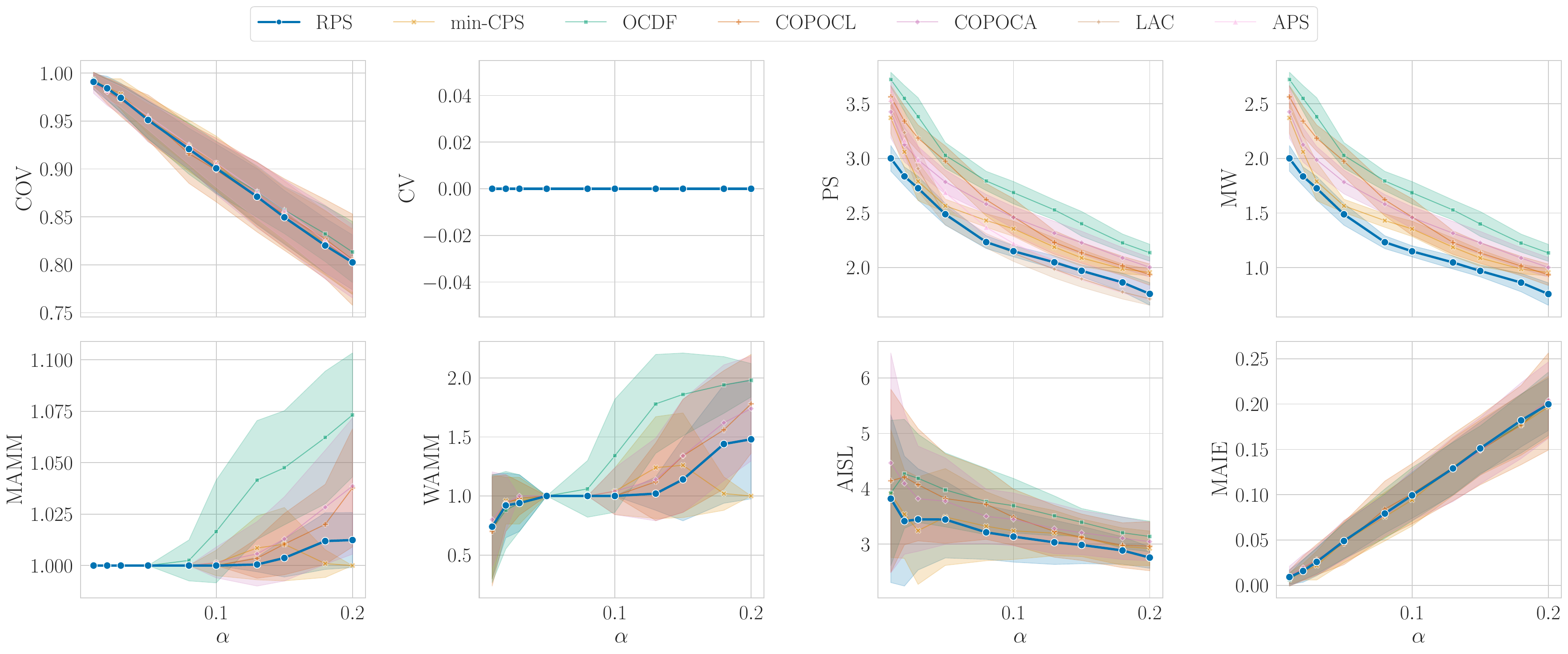}
    \caption{Comparison of prediction sets across methods on swd dataset. Shaded regions indicate standard deviation.}
 \end{minipage}
     \begin{minipage}[b]{\textwidth}
        \centering
\includegraphics[width=\linewidth]{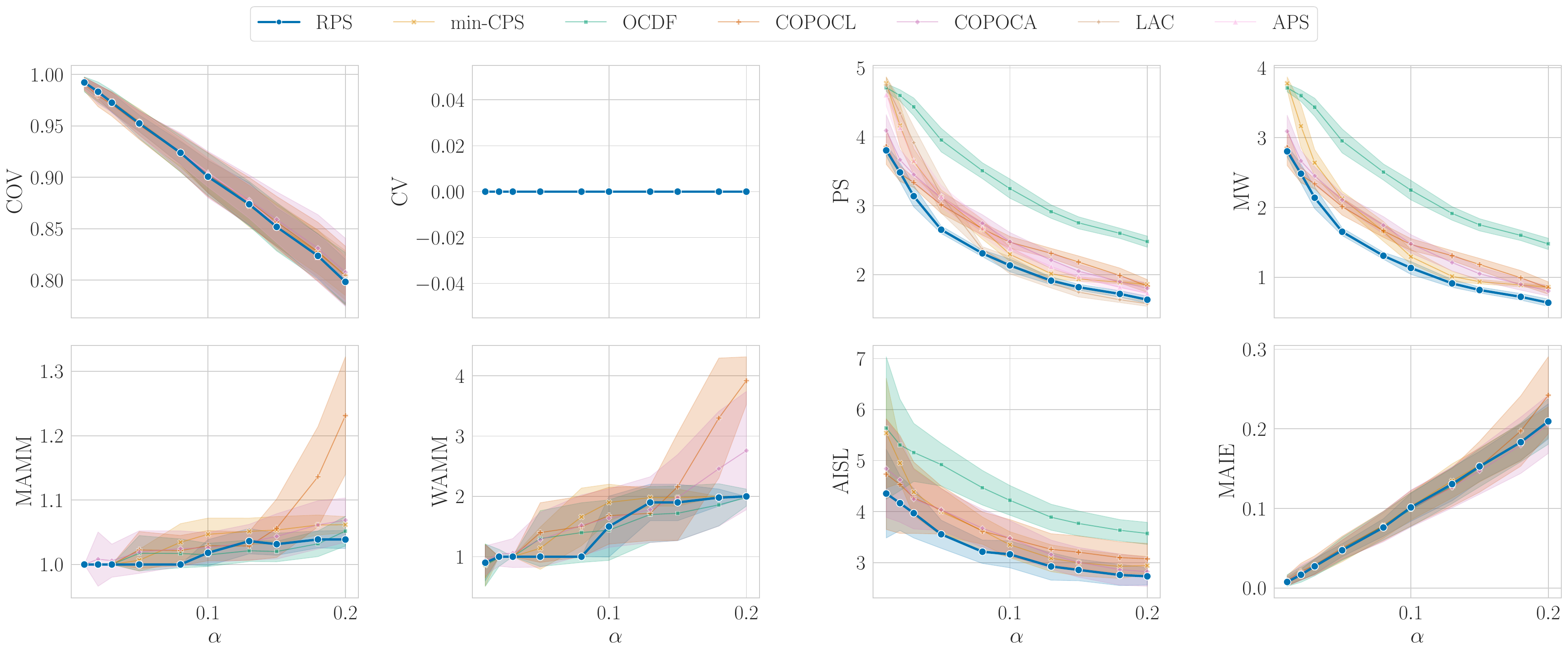}
    \caption{Comparison of prediction sets across methods on winequalityRed dataset. Shaded regions indicate standard deviation.}
    \label{fig:tab_dataset_exps}
 \end{minipage}
  \begin{minipage}[b]{\textwidth}
        \centering
\includegraphics[width=\linewidth]{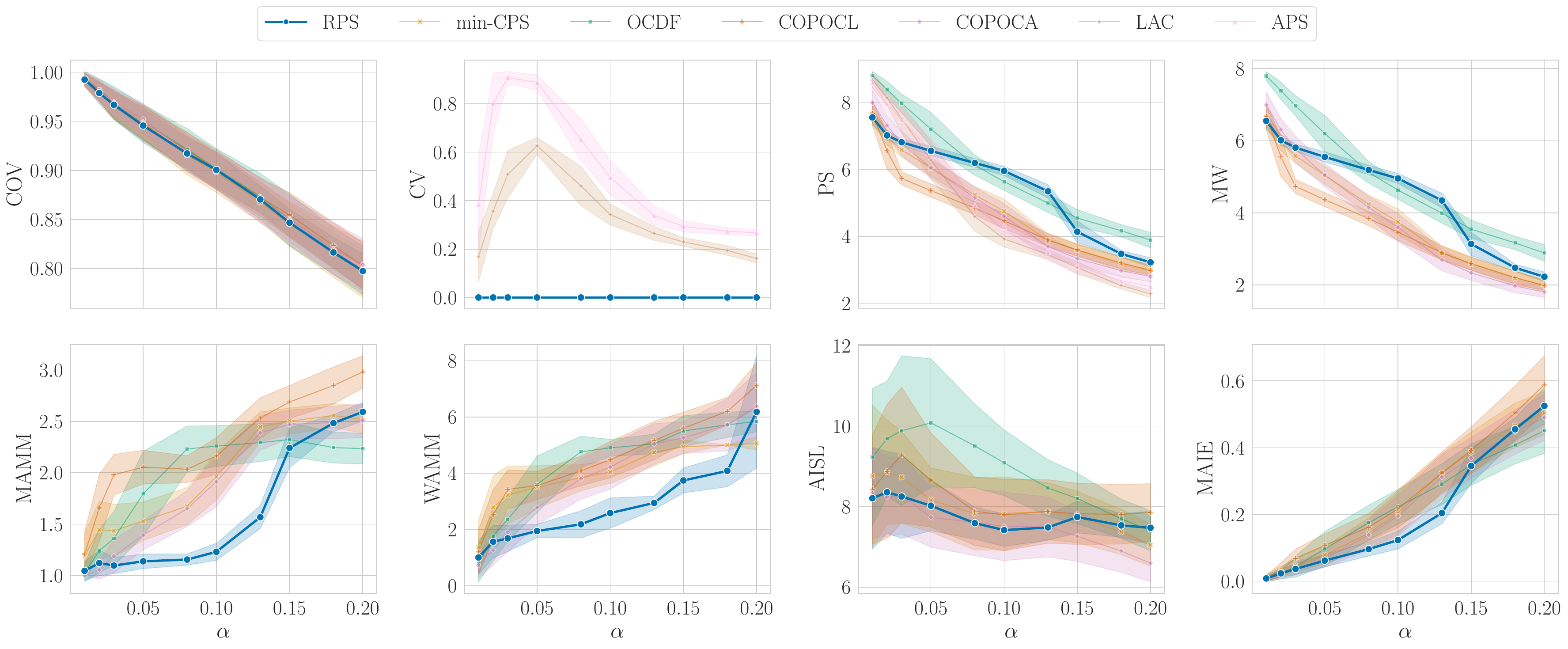}
    \caption{Comparison of prediction sets across methods on insurance dataset. Shaded regions indicate standard deviation.}
 \end{minipage}
\end{figure*}

\begin{figure*}[htb!]
 \centering
 \begin{minipage}[b]{\textwidth}
        \centering
\includegraphics[width=\linewidth]{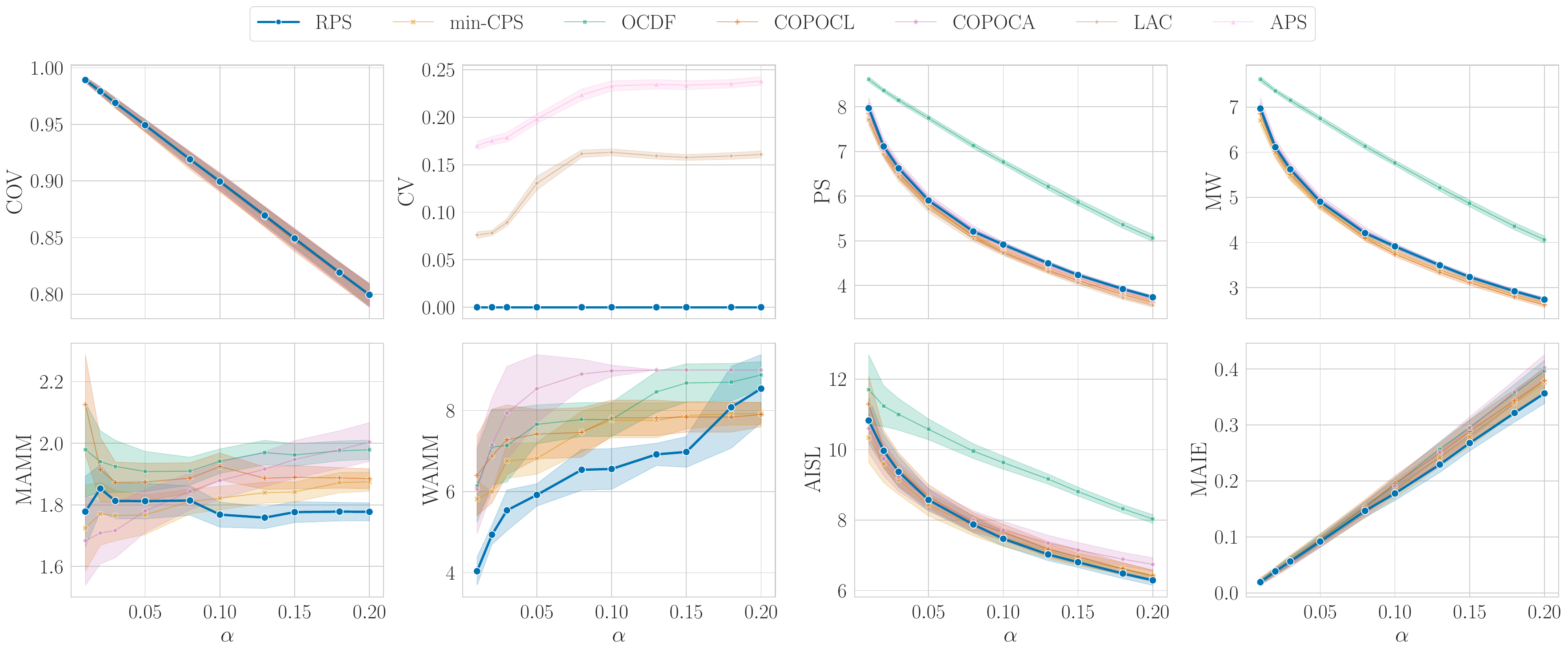}
    \caption{Comparison of prediction sets across methods on melbourneAirbnb dataset. Shaded regions indicate standard deviation.}
 \end{minipage}
    \begin{minipage}[b]{\textwidth}
        \centering
\includegraphics[width=\linewidth]{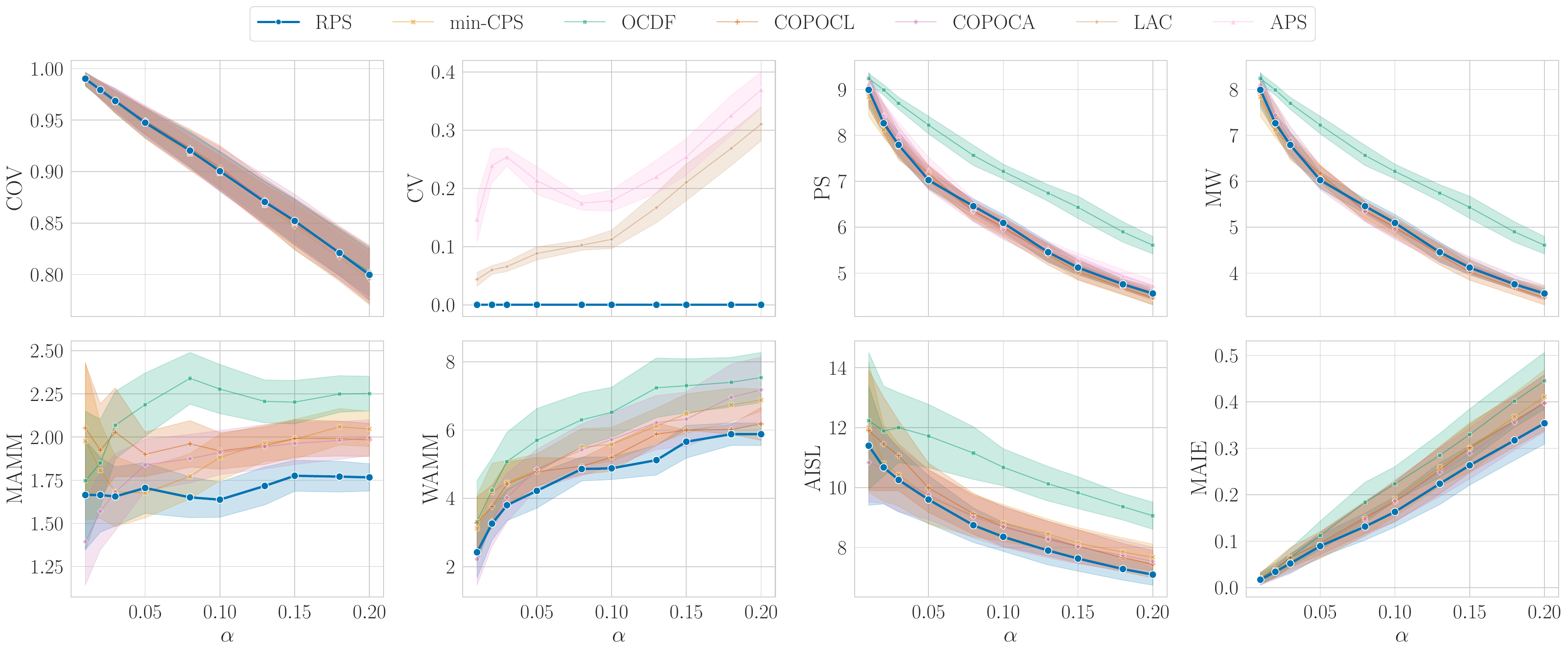}
    \caption{Comparison of prediction sets across methods on cancerDeathRate dataset. Shaded regions indicate standard deviation.}
 \end{minipage}
     \begin{minipage}[b]{\textwidth}
        \centering
\includegraphics[width=\linewidth]{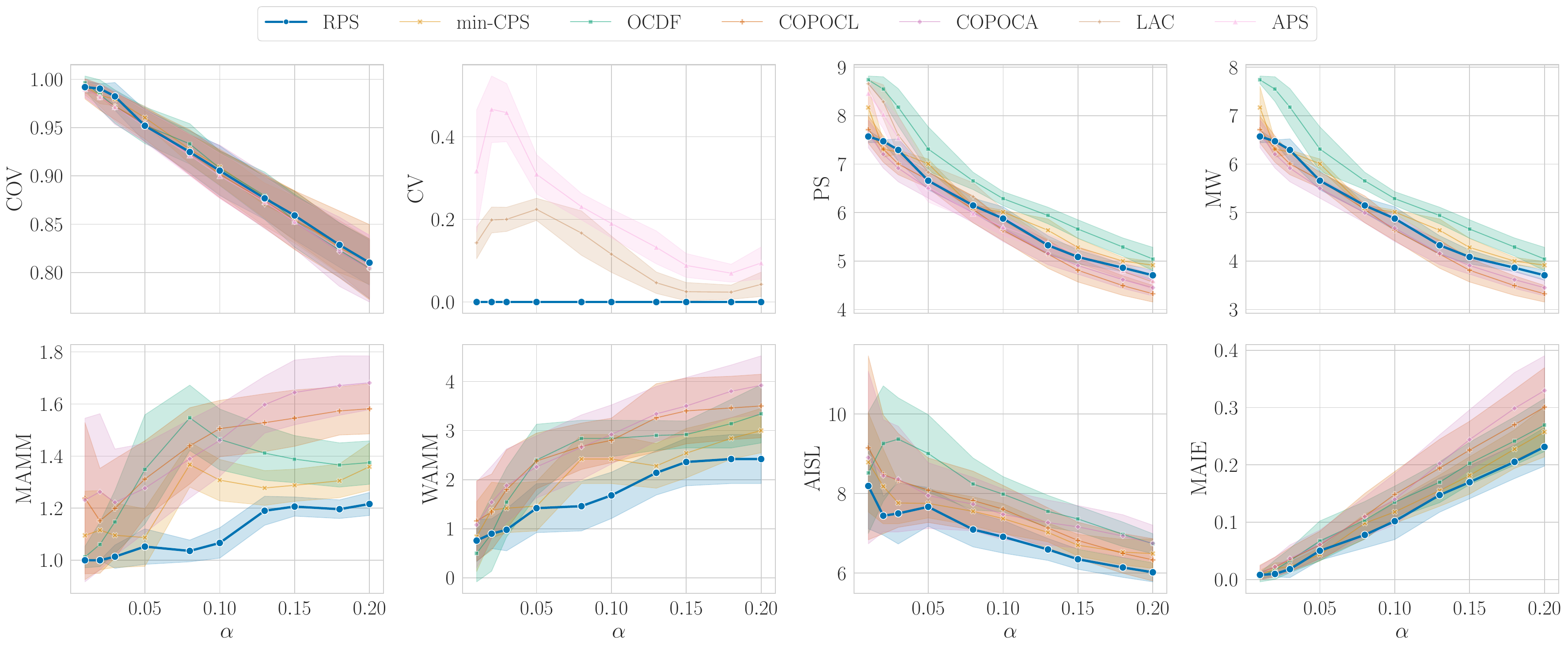}
    \caption{Comparison of prediction sets across methods on era dataset. Shaded regions indicate standard deviation.}
 \end{minipage}
\end{figure*}

\begin{figure*}
     \begin{minipage}[b]{\textwidth}
        \centering
\includegraphics[width=\linewidth]{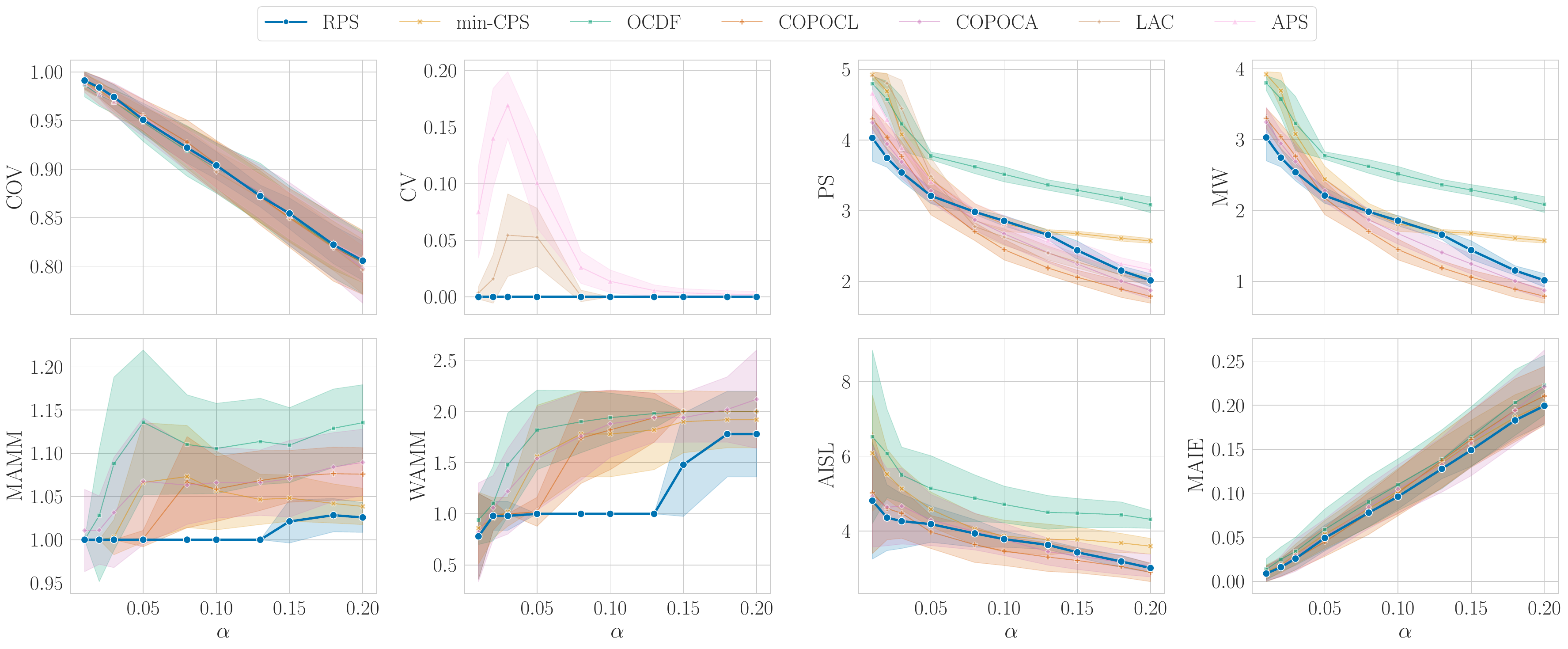}
    \caption{Comparison of prediction sets across methods on lev dataset. Shaded regions indicate standard deviation.}
 \end{minipage}
    \begin{minipage}[b]{0.19\textwidth}
        \centering
        \includegraphics[width=\linewidth]{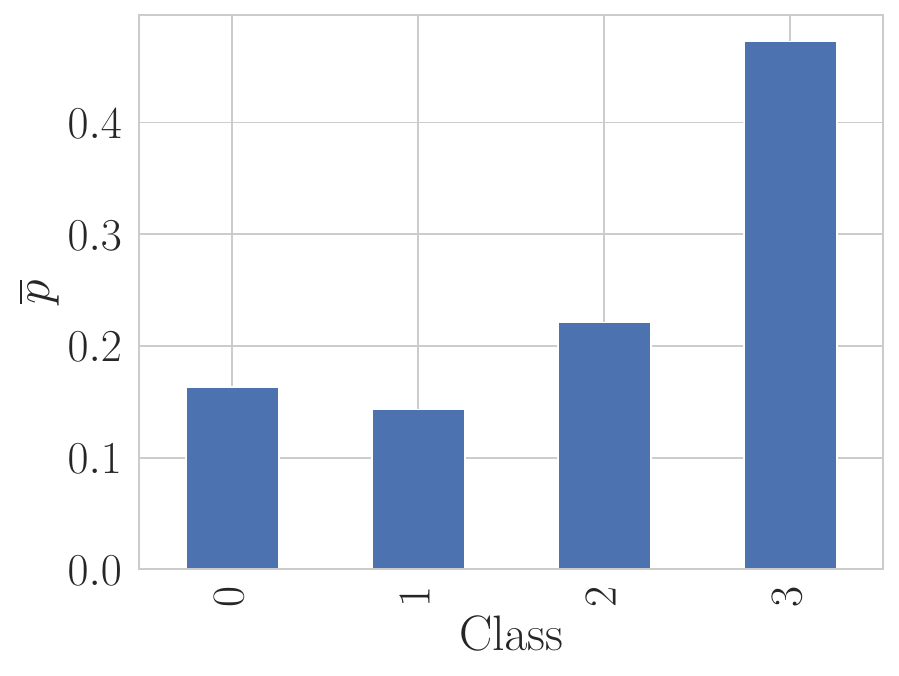}
        \caption*{(a) LESTSensors}
    \end{minipage}
     \begin{minipage}[b]{0.19\textwidth}
        \centering
        \includegraphics[width=\linewidth]{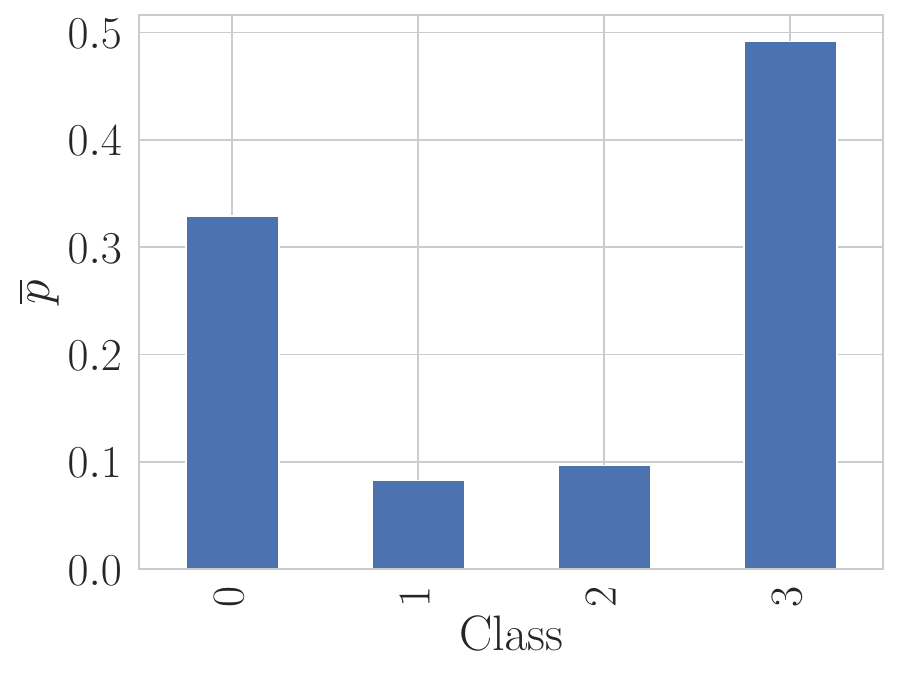}
        \caption*{(b) LEVXSensors}
    \end{minipage}
         \begin{minipage}[b]{0.19\textwidth}
        \centering
        \includegraphics[width=\linewidth]{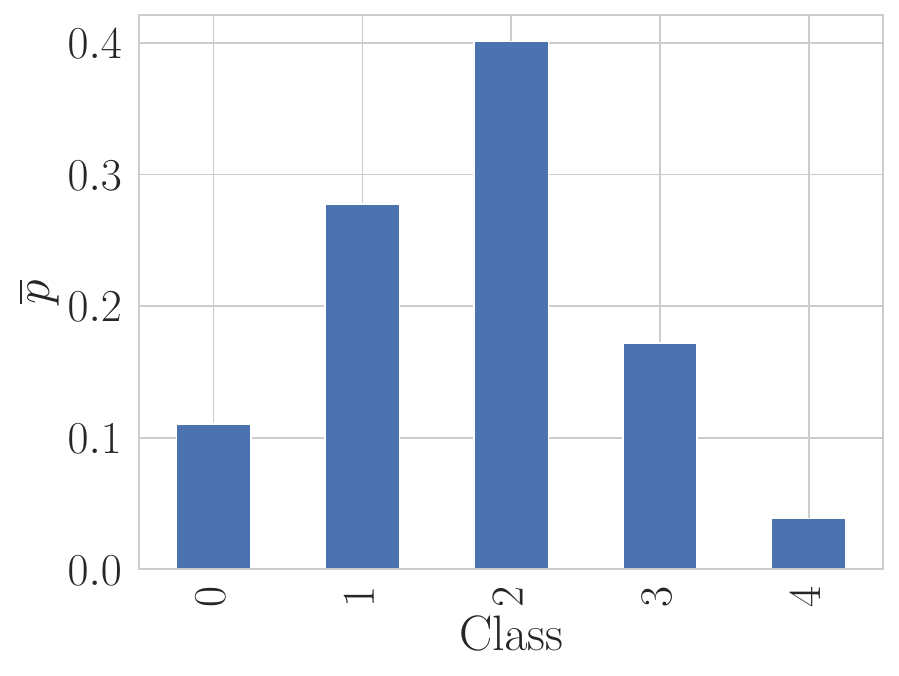}
        \caption*{(c) nhanes}
    \end{minipage}
    \begin{minipage}[b]{0.19\textwidth}
        \centering
        \includegraphics[width=\linewidth]{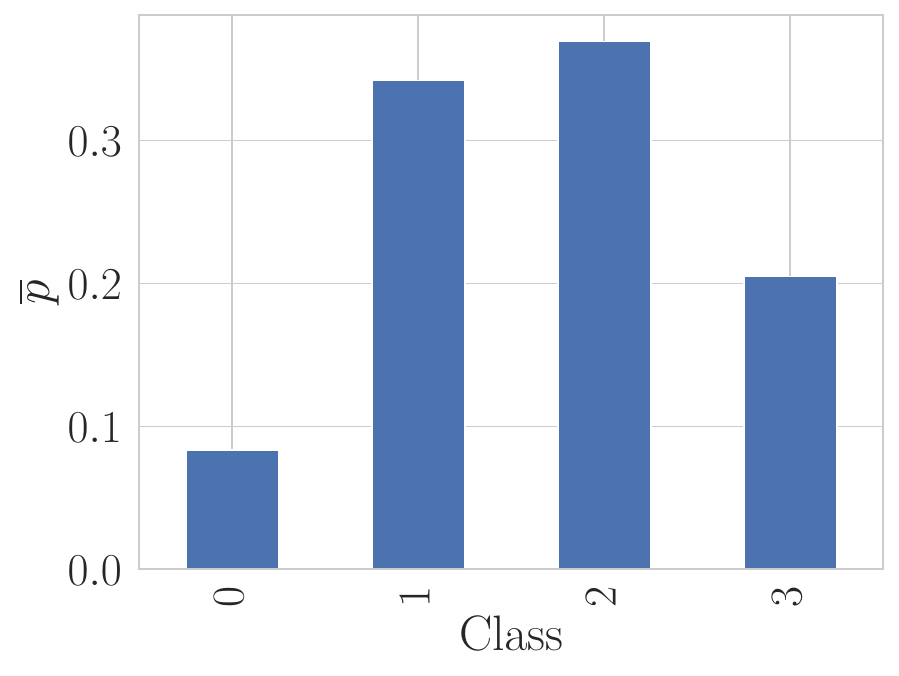}
        \caption*{(d) swd}
    \end{minipage}
                 \begin{minipage}[b]{0.19\textwidth}
        \centering
        \includegraphics[width=\linewidth]{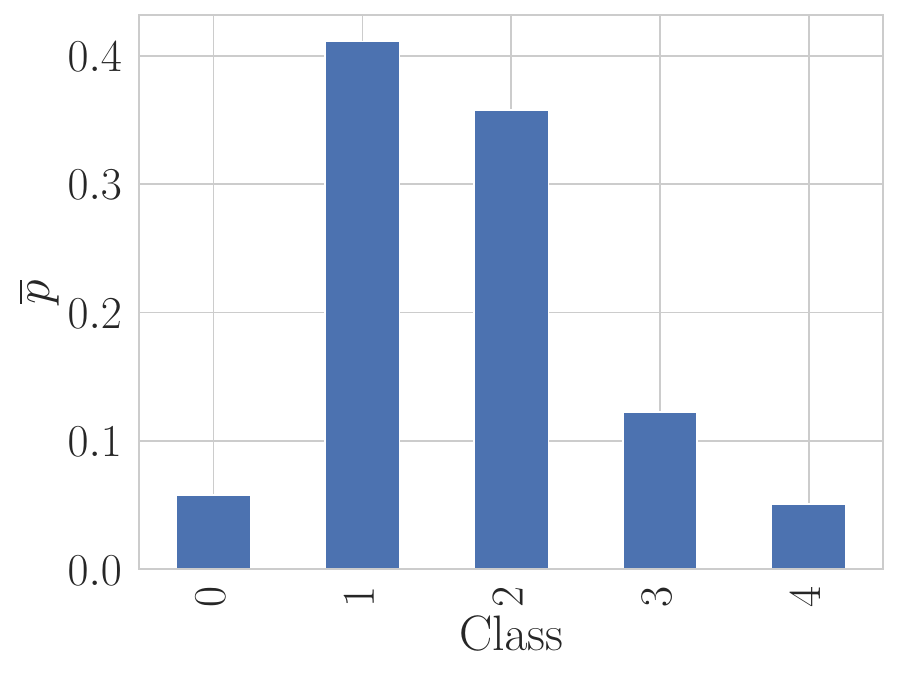}
        \caption*{(e) winequalityRed}
    \end{minipage}
        \begin{minipage}[b]{0.19\textwidth}
        \centering
        \includegraphics[width=\linewidth]{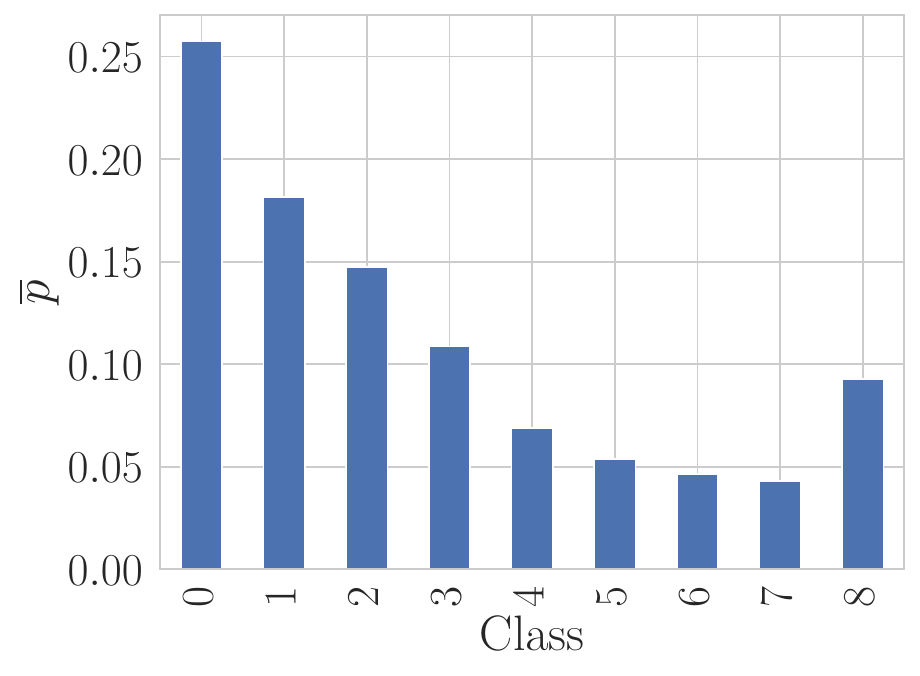}
        \caption*{(f) insurance}
    \end{minipage}
             \begin{minipage}[b]{0.19\textwidth}
        \centering
        \includegraphics[width=\linewidth]{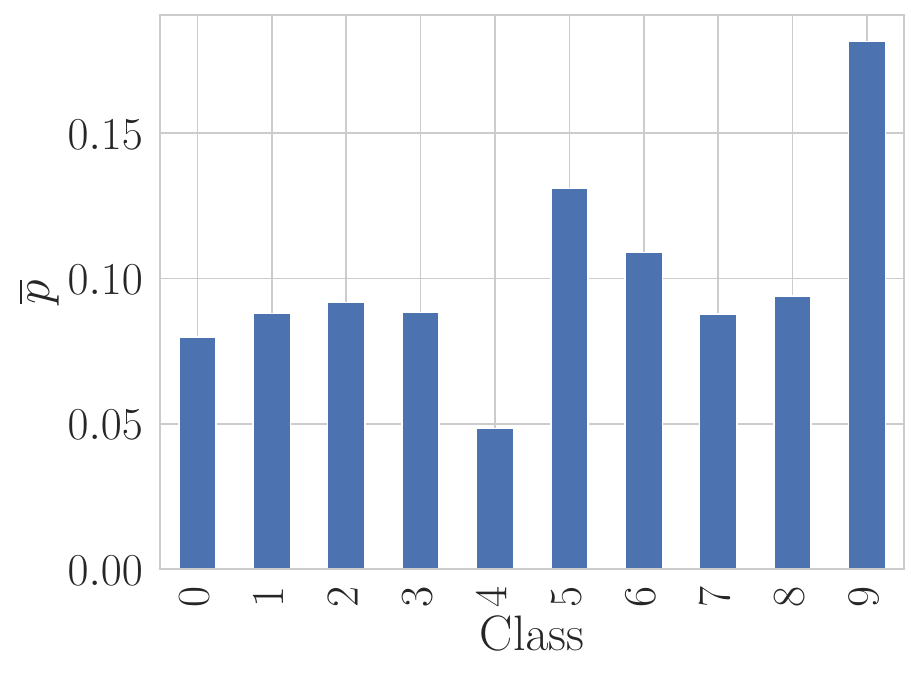}
        \caption*{(g) melbourneAirbnb }
    \end{minipage}
    \begin{minipage}[b]{0.19\textwidth}
        \centering
        \includegraphics[width=\linewidth]{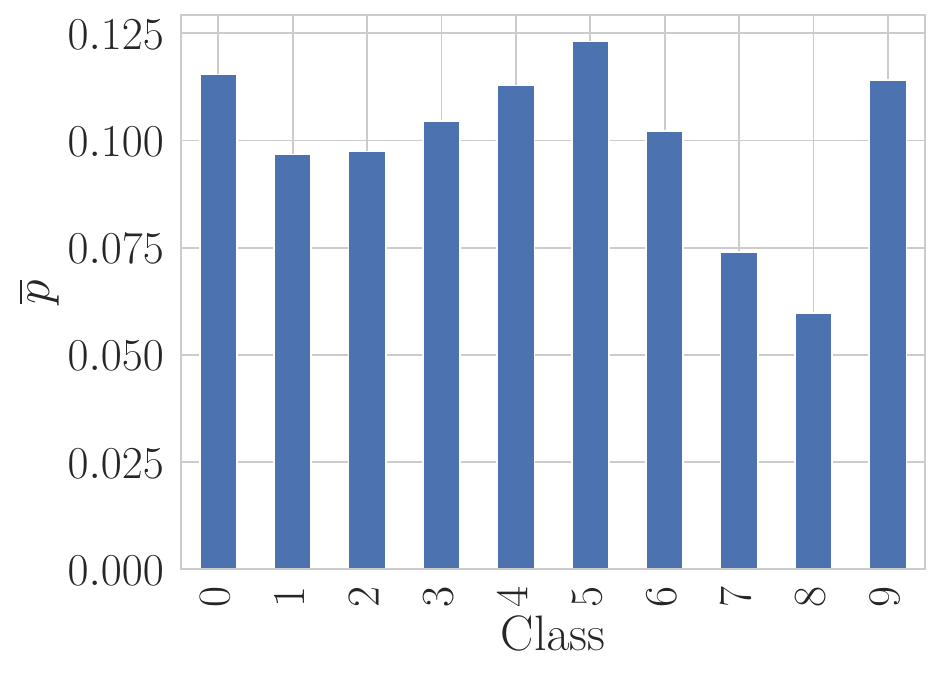}
        \caption*{(h) cancerDeathRate}
    \end{minipage}
                 \begin{minipage}[b]{0.19\textwidth}
        \centering
        \includegraphics[width=\linewidth]{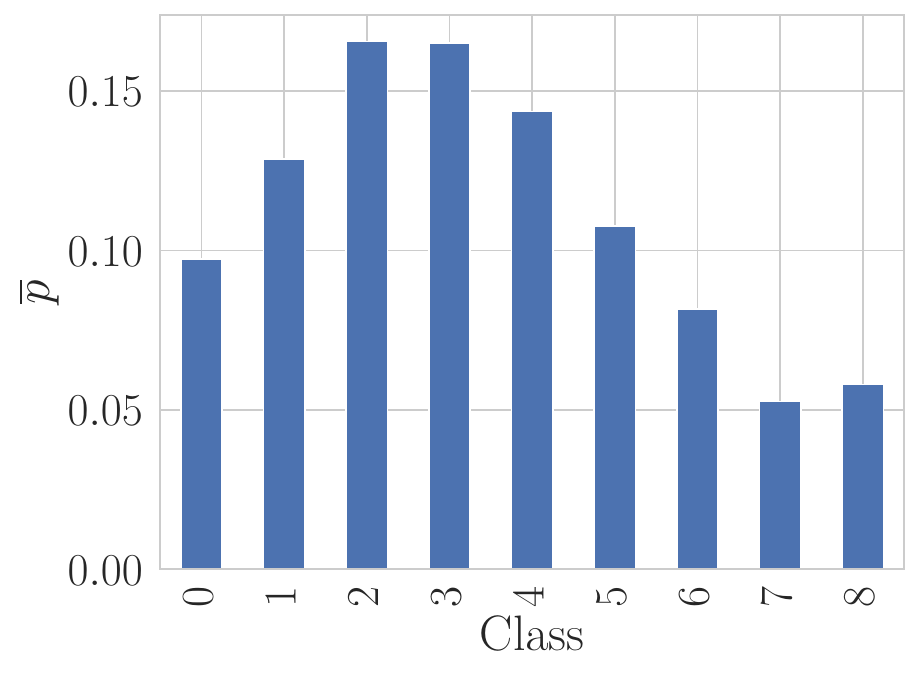}
        \caption*{(i) era }
    \end{minipage}
    \begin{minipage}[b]{0.19\textwidth}
        \centering
        \includegraphics[width=\linewidth]{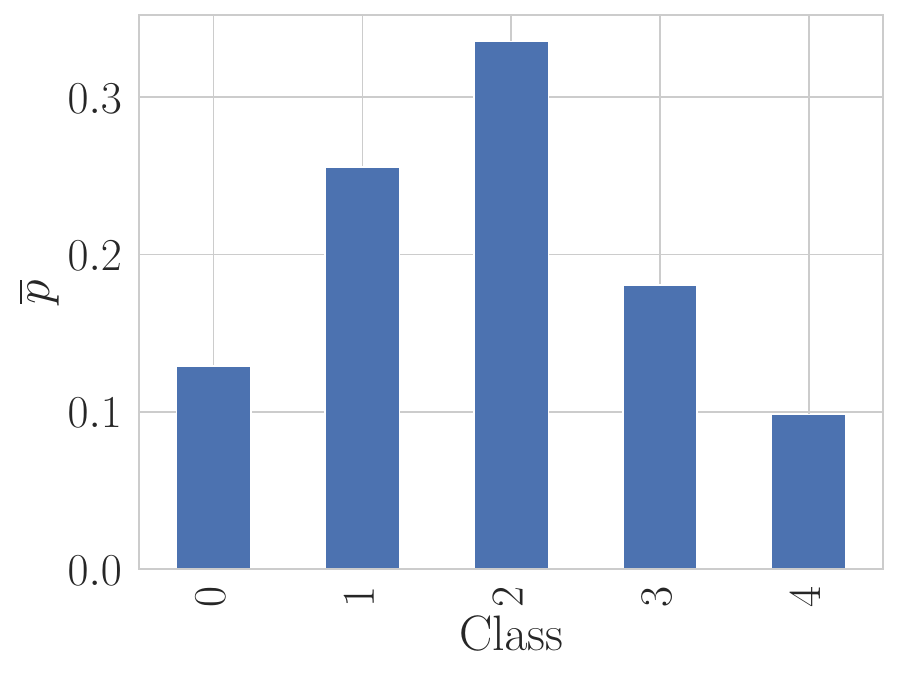}
        \caption*{(j) lev}
    \end{minipage}
     \caption{Mean predictive probabilities (MP) for the different datasets with the CE loss.}
     \label{fig:mps_tabular_datasets}
\end{figure*}

\section{Additional Experiments Using Gradient Boosted Trees}
\label{sec:gbt_experiments}
In this section, we compare the different nonconformity scores using gradient boosted trees (GBTs), a model class of high practical relevance for tabular data~\cite{shwartz2022tabular}.
We select LightGBM~\cite{DBLP:conf/nips/KeMFWCMYL17} as a representative GBT instantiation; other popular implementations such as CatBoost~\cite{DBLP:conf/nips/ProkhorenkovaGV18} or XGBoost~\cite{DBLP:conf/kdd/ChenG16} typically yield similar results.
Among the nonconformity measures, we exclude COPOCA and COPOCL from this comparison, as COPOC~\cite{DBLP:conf/nips/DeyMK23} relies on a specific neural network architecture that is not trivial to replicate with GBTs.
For our experiments, we use six tabular datasets from the TOC-UCO repository~\cite{ayllon2025toc} (Table~\ref{tab:lgbm_datasets}).
The results are reported in Table~\ref{tab:lgbm_results} and, broken down per dataset across all metrics, in the figures below.
They corroborate our earlier findings: RPS remains competitive with min-CPS~\cite{zhang2025provably} and strikes a favorable balance between reducing ordinal miscoverage and maintaining small prediction set sizes.


\begin{table}[ht]
\small
\centering
\caption{Summary of tabular ordinal datasets used in our LightGBM experiments~\cite{ayllon2025toc}.}
\label{tab:lgbm_datasets}
\begin{tabular}{lcccccc}
\toprule
Dataset & \#Samples & \#Features & \#Classes & Class distribution & IR \\
\midrule
heartDisease   & 294   & 13 & 4 & (0.64, 0.13, 0.09, 0.15) & 1.48 \\
mammoexp       & 412   & 5  & 3 & (0.57, 0.25, 0.18)       & 0.92 \\
support        & 730   & 19 & 4 & (0.39, 0.13, 0.08, 0.40) & 1.34 \\
winequalityRed & 1,599 & 11 & 5 & (0.04, 0.43, 0.40, 0.12, 0.01) & 4.88 \\
nhanes         & 5,223 & 30 & 5 & (0.11, 0.28, 0.40, 0.18, 0.04) & 1.72 \\
LEVXSensors    & 5,112 & 6  & 4 & (0.33, 0.08, 0.10, 0.48) & 1.45 \\
\bottomrule
\end{tabular}
\end{table}

\begin{table*}[!h]
\tiny
\centering
\caption{Comparison of the different non-conformity measures using LightGBM over tabular datasets at $\alpha=0.02$, $\alpha=0.05$ and $\alpha=0.1$.}
\label{tab:lgbm_results}
\begin{tabular}{lllcccccccc}
\toprule
Dataset & $\alpha$ & Method & {COV} & {PS} ($\downarrow$) & {CV} ($\downarrow$) & {MW} ($\downarrow$) & {MAMM} ($\downarrow$) & {WAMM} ($\downarrow$) & {MAIE} ($\downarrow$) & {AISL} ($\downarrow$) \\
\midrule
 \multirow{15}{*}{heartDisease} & \multirow{5}{*}{0.02} & APS     & 0.985 $ \pm $ 0.019 & 3.625 $ \pm $ 0.388 & 0.072 $ \pm $ 0.071 & - & - & - & - & - \\
                       &         & LAC     & 0.984 $ \pm $ 0.023 & 3.634 $ \pm $ 0.427 & 0.042 $ \pm $ 0.049 & - & - & - & - & - \\
                       &         & OCDF    & 0.981 $ \pm $ 0.023 & 3.872 $ \pm $ 0.109 & 0.000 $ \pm $ 0.000 & 2.872 $ \pm $ 0.109 & 0.940 $ \pm $ 0.874 & 1.120 $ \pm $ 1.003 & 0.028 $ \pm $ 0.029 & 5.686 $ \pm $ 2.836 \\
                       &         & RPS     & 0.985 $ \pm $ 0.019 & 3.718 $ \pm $ 0.240 & 0.000 $ \pm $ 0.000 & 2.718 $ \pm $ 0.240 & \textbf{0.480} $ \pm $ 0.505 & \textbf{0.480} $ \pm $ 0.505 & \textbf{0.015} $ \pm $ 0.019 & \textbf{4.176} $ \pm $ 1.687 \\
                       &         & min-CPS & 0.984 $ \pm $ 0.022 & \textbf{3.592} $ \pm $ 0.489 & 0.000 $ \pm $ 0.000 & \textbf{2.592} $ \pm $ 0.489 & 1.028 $ \pm $ 1.097 & 1.220 $ \pm $ 1.329 & 0.033 $ \pm $ 0.043 & 5.880 $ \pm $ 3.808 \\
\cmidrule(lr){2-11}
                       & \multirow{5}{*}{0.05} & APS     & 0.967 $ \pm $ 0.031 & 3.033 $ \pm $ 0.407 & 0.149 $ \pm $ 0.056 & - & - & - & - & - \\
                       &         & LAC     & 0.973 $ \pm $ 0.026 & 3.081 $ \pm $ 0.402 & 0.096 $ \pm $ 0.033 & - & - & - & - & - \\
                       &         & OCDF    & 0.966 $ \pm $ 0.034 & 3.774 $ \pm $ 0.109 & 0.000 $ \pm $ 0.000 & 2.774 $ \pm $ 0.109 & 1.101 $ \pm $ 0.696 & 1.320 $ \pm $ 0.844 & 0.048 $ \pm $ 0.049 & 4.686 $ \pm $ 1.871 \\
                       &         & RPS     & 0.967 $ \pm $ 0.027 & 3.363 $ \pm $ 0.307 & 0.000 $ \pm $ 0.000 & 2.363 $ \pm $ 0.307 & \textbf{0.800} $ \pm $ 0.437 & \textbf{0.840} $ \pm $ 0.510 & \textbf{0.035} $ \pm $ 0.031 & \textbf{3.760} $ \pm $ 0.946 \\
                       &         & min-CPS & 0.957 $ \pm $ 0.030 & \textbf{2.693} $ \pm $ 0.204 & 0.000 $ \pm $ 0.000 & \textbf{1.693} $ \pm $ 0.204 & 1.633 $ \pm $ 0.796 & 2.260 $ \pm $ 1.065 & 0.075 $ \pm $ 0.050 & 4.703 $ \pm $ 1.826 \\
\cmidrule(lr){2-11}
                       & \multirow{5}{*}{0.1} & APS     & 0.926 $ \pm $ 0.048 & 2.275 $ \pm $ 0.271 & 0.196 $ \pm $ 0.040 & - & - & - & - & - \\
                       &         & LAC     & 0.918 $ \pm $ 0.051 & 2.226 $ \pm $ 0.275 & 0.152 $ \pm $ 0.036 & - & - & - & - & - \\
                       &         & OCDF    & 0.905 $ \pm $ 0.047 & 3.576 $ \pm $ 0.135 & 0.000 $ \pm $ 0.000 & 2.576 $ \pm $ 0.135 & 1.411 $ \pm $ 0.275 & 2.080 $ \pm $ 0.601 & 0.139 $ \pm $ 0.072 & 5.356 $ \pm $ 1.303 \\
                       &         & RPS     & 0.910 $ \pm $ 0.038 & 2.418 $ \pm $ 0.239 & 0.000 $ \pm $ 0.000 & 1.418 $ \pm $ 0.239 & \textbf{1.345} $ \pm $ 0.172 & \textbf{1.920} $ \pm $ 0.340 & \textbf{0.124} $ \pm $ 0.058 & \textbf{3.893} $ \pm $ 0.955 \\
                       &         & min-CPS & 0.905 $ \pm $ 0.052 & \textbf{2.068} $ \pm $ 0.299 & 0.000 $ \pm $ 0.000 & \textbf{1.068} $ \pm $ 0.299 & 1.542 $ \pm $ 0.513 & 2.640 $ \pm $ 0.802 & 0.149 $ \pm $ 0.073 & 4.051 $ \pm $ 1.195 \\
\midrule
 \multirow{15}{*}{mammoexp} & \multirow{5}{*}{0.02} & APS     & 0.987 $ \pm $ 0.016 & 2.914 $ \pm $ 0.072 & 0.049 $ \pm $ 0.042 & - & - & - & - & - \\
                       &         & LAC     & 0.988 $ \pm $ 0.016 & 2.920 $ \pm $ 0.078 & 0.050 $ \pm $ 0.035 & - & - & - & - & - \\
                       &         & OCDF    & 0.986 $ \pm $ 0.017 & 2.909 $ \pm $ 0.053 & 0.000 $ \pm $ 0.000 & 1.909 $ \pm $ 0.053 & 0.584 $ \pm $ 0.503 & 0.600 $ \pm $ 0.535 & 0.014 $ \pm $ 0.018 & 3.331 $ \pm $ 1.753 \\
                       &         & RPS     & 0.990 $ \pm $ 0.013 & 2.940 $ \pm $ 0.068 & 0.000 $ \pm $ 0.000 & 1.940 $ \pm $ 0.068 & \textbf{0.500} $ \pm $ 0.505 & \textbf{0.500} $ \pm $ 0.505 & \textbf{0.010} $ \pm $ 0.013 & \textbf{2.976} $ \pm $ 1.227 \\
                       &         & min-CPS & 0.990 $ \pm $ 0.012 & \textbf{2.893} $ \pm $ 0.155 & 0.000 $ \pm $ 0.000 & \textbf{1.893} $ \pm $ 0.155 & 0.670 $ \pm $ 0.733 & 0.740 $ \pm $ 0.828 & 0.015 $ \pm $ 0.023 & 3.436 $ \pm $ 2.122 \\
\cmidrule(lr){2-11}
                       & \multirow{5}{*}{0.05} & APS     & 0.948 $ \pm $ 0.032 & 2.667 $ \pm $ 0.143 & 0.147 $ \pm $ 0.045 & - & - & - & - & - \\
                       &         & LAC     & 0.958 $ \pm $ 0.031 & 2.617 $ \pm $ 0.190 & 0.107 $ \pm $ 0.035 & - & - & - & - & - \\
                       &         & OCDF    & 0.961 $ \pm $ 0.029 & 2.801 $ \pm $ 0.069 & 0.000 $ \pm $ 0.000 & 1.801 $ \pm $ 0.069 & 1.120 $ \pm $ 0.347 & 1.480 $ \pm $ 0.614 & 0.046 $ \pm $ 0.032 & 3.633 $ \pm $ 1.206 \\
                       &         & RPS     & 0.957 $ \pm $ 0.028 & 2.739 $ \pm $ 0.092 & 0.000 $ \pm $ 0.000 & 1.739 $ \pm $ 0.092 & \textbf{0.962} $ \pm $ 0.199 & \textbf{0.980} $ \pm $ 0.247 & \textbf{0.044} $ \pm $ 0.028 & \textbf{3.483} $ \pm $ 1.058 \\
                       &         & min-CPS & 0.957 $ \pm $ 0.031 & \textbf{2.551} $ \pm $ 0.081 & 0.000 $ \pm $ 0.000 & \textbf{1.551} $ \pm $ 0.081 & 1.278 $ \pm $ 0.535 & 1.740 $ \pm $ 0.664 & 0.059 $ \pm $ 0.039 & 3.903 $ \pm $ 1.507 \\
\cmidrule(lr){2-11}
                       & \multirow{5}{*}{0.1} & APS     & 0.883 $ \pm $ 0.047 & 2.308 $ \pm $ 0.154 & 0.194 $ \pm $ 0.038 & - & - & - & - & - \\
                       &         & LAC     & 0.905 $ \pm $ 0.047 & \textbf{2.276} $ \pm $ 0.136 & 0.149 $ \pm $ 0.033 & - & - & - & - & - \\
                       &         & OCDF    & 0.904 $ \pm $ 0.046 & 2.651 $ \pm $ 0.081 & 0.000 $ \pm $ 0.000 & 1.651 $ \pm $ 0.081 & \textbf{1.062} $ \pm $ 0.179 & \textbf{1.560} $ \pm $ 0.541 & \textbf{0.103} $ \pm $ 0.048 & 3.714 $ \pm $ 0.883 \\
                       &         & RPS     & 0.901 $ \pm $ 0.031 & 2.323 $ \pm $ 0.173 & 0.000 $ \pm $ 0.000 & \textbf{1.323} $ \pm $ 0.173 & 1.095 $ \pm $ 0.095 & 1.600 $ \pm $ 0.495 & 0.110 $ \pm $ 0.039 & 3.516 $ \pm $ 0.633 \\
                       &         & min-CPS & 0.910 $ \pm $ 0.040 & 2.389 $ \pm $ 0.145 & 0.000 $ \pm $ 0.000 & 1.389 $ \pm $ 0.145 & 1.198 $ \pm $ 0.152 & 1.860 $ \pm $ 0.351 & 0.106 $ \pm $ 0.045 & \textbf{3.510} $ \pm $ 0.770 \\
\midrule
 \multirow{15}{*}{support} & \multirow{5}{*}{0.02} & APS     & 0.984 $ \pm $ 0.016 & \textbf{2.288} $ \pm $ 0.335 & 0.130 $ \pm $ 0.066 & - & - & - & - & - \\
                       &         & LAC     & 0.983 $ \pm $ 0.016 & 2.292 $ \pm $ 0.355 & 0.105 $ \pm $ 0.060 & - & - & - & - & - \\
                       &         & OCDF    & 0.987 $ \pm $ 0.013 & 2.758 $ \pm $ 0.046 & 0.000 $ \pm $ 0.000 & 1.758 $ \pm $ 0.046 & 1.423 $ \pm $ 0.777 & 1.440 $ \pm $ 0.787 & 0.024 $ \pm $ 0.025 & 4.197 $ \pm $ 2.443 \\
                       &         & RPS     & 0.988 $ \pm $ 0.011 & 2.628 $ \pm $ 0.175 & 0.000 $ \pm $ 0.000 & 1.628 $ \pm $ 0.175 & \textbf{0.760} $ \pm $ 0.431 & \textbf{0.760} $ \pm $ 0.431 & \textbf{0.012} $ \pm $ 0.011 & \textbf{2.875} $ \pm $ 0.958 \\
                       &         & min-CPS & 0.985 $ \pm $ 0.014 & 2.383 $ \pm $ 0.510 & 0.000 $ \pm $ 0.000 & \textbf{1.383} $ \pm $ 0.510 & 1.239 $ \pm $ 0.746 & 1.520 $ \pm $ 0.863 & 0.023 $ \pm $ 0.019 & 3.684 $ \pm $ 1.529 \\
\cmidrule(lr){2-11}
                       & \multirow{5}{*}{0.05} & APS     & 0.953 $ \pm $ 0.022 & 1.656 $ \pm $ 0.128 & 0.060 $ \pm $ 0.017 & - & - & - & - & - \\
                       &         & LAC     & 0.954 $ \pm $ 0.023 & \textbf{1.651} $ \pm $ 0.124 & 0.051 $ \pm $ 0.016 & - & - & - & - & - \\
                       &         & OCDF    & 0.952 $ \pm $ 0.024 & 2.627 $ \pm $ 0.066 & 0.000 $ \pm $ 0.000 & 1.627 $ \pm $ 0.066 & 1.654 $ \pm $ 0.168 & 2.000 $ \pm $ 0.000 & 0.078 $ \pm $ 0.037 & 4.739 $ \pm $ 1.409 \\
                       &         & RPS     & 0.956 $ \pm $ 0.022 & 2.327 $ \pm $ 0.077 & 0.000 $ \pm $ 0.000 & 1.327 $ \pm $ 0.077 & \textbf{1.000} $ \pm $ 0.000 & \textbf{1.000} $ \pm $ 0.000 & \textbf{0.044} $ \pm $ 0.022 & \textbf{3.080} $ \pm $ 0.803 \\
                       &         & min-CPS & 0.955 $ \pm $ 0.021 & 1.661 $ \pm $ 0.102 & 0.000 $ \pm $ 0.000 & \textbf{0.661} $ \pm $ 0.102 & 1.363 $ \pm $ 0.163 & 1.960 $ \pm $ 0.198 & 0.061 $ \pm $ 0.030 & 3.088 $ \pm $ 1.100 \\
\cmidrule(lr){2-11}
                       & \multirow{5}{*}{0.1} & APS     & 0.904 $ \pm $ 0.035 & 1.372 $ \pm $ 0.094 & 0.044 $ \pm $ 0.017 & - & - & - & - & - \\
                       &         & LAC     & 0.907 $ \pm $ 0.033 & \textbf{1.346} $ \pm $ 0.117 & 0.043 $ \pm $ 0.015 & - & - & - & - & - \\
                       &         & OCDF    & 0.902 $ \pm $ 0.030 & 2.370 $ \pm $ 0.167 & 0.000 $ \pm $ 0.000 & 1.370 $ \pm $ 0.167 & 1.507 $ \pm $ 0.113 & 2.000 $ \pm $ 0.000 & 0.148 $ \pm $ 0.044 & 4.323 $ \pm $ 0.717 \\
                       &         & RPS     & 0.909 $ \pm $ 0.033 & 1.676 $ \pm $ 0.346 & 0.000 $ \pm $ 0.000 & 0.676 $ \pm $ 0.346 & \textbf{1.120} $ \pm $ 0.126 & \textbf{1.620} $ \pm $ 0.490 & \textbf{0.105} $ \pm $ 0.050 & \textbf{2.786} $ \pm $ 0.709 \\
                       &         & min-CPS & 0.908 $ \pm $ 0.031 & 1.356 $ \pm $ 0.102 & 0.000 $ \pm $ 0.000 & \textbf{0.356} $ \pm $ 0.102 & 1.416 $ \pm $ 0.109 & 2.000 $ \pm $ 0.000 & 0.131 $ \pm $ 0.047 & 2.978 $ \pm $ 0.845 \\
\midrule
 \multirow{15}{*}{winequalityRed} & \multirow{5}{*}{0.02} & APS     & 0.981 $ \pm $ 0.008 & 3.090 $ \pm $ 0.185 & 0.002 $ \pm $ 0.003 & - & - & - & - & - \\
                       &         & LAC     & 0.981 $ \pm $ 0.009 & 3.151 $ \pm $ 0.279 & 0.001 $ \pm $ 0.002 & - & - & - & - & - \\
                       &         & OCDF    & 0.981 $ \pm $ 0.012 & 4.297 $ \pm $ 0.147 & 0.000 $ \pm $ 0.000 & 3.297 $ \pm $ 0.147 & 1.022 $ \pm $ 0.043 & 1.220 $ \pm $ 0.418 & 0.020 $ \pm $ 0.013 & 5.303 $ \pm $ 1.141 \\
                       &         & RPS     & 0.981 $ \pm $ 0.009 & \textbf{2.820} $ \pm $ 0.127 & 0.000 $ \pm $ 0.000 & \textbf{1.820} $ \pm $ 0.127 & \textbf{1.000} $ \pm $ 0.000 & \textbf{1.000} $ \pm $ 0.000 & 0.019 $ \pm $ 0.009 & \textbf{3.720} $ \pm $ 0.815 \\
                       &         & min-CPS & 0.982 $ \pm $ 0.008 & 3.120 $ \pm $ 0.308 & 0.000 $ \pm $ 0.000 & 2.120 $ \pm $ 0.308 & 1.000 $ \pm $ 0.000 & 1.000 $ \pm $ 0.000 & \textbf{0.019} $ \pm $ 0.008 & 3.970 $ \pm $ 0.566 \\
\cmidrule(lr){2-11}
                       & \multirow{5}{*}{0.05} & APS     & 0.950 $ \pm $ 0.015 & 2.374 $ \pm $ 0.127 & 0.006 $ \pm $ 0.004 & - & - & - & - & - \\
                       &         & LAC     & 0.949 $ \pm $ 0.015 & 2.347 $ \pm $ 0.137 & 0.006 $ \pm $ 0.004 & - & - & - & - & - \\
                       &         & OCDF    & 0.948 $ \pm $ 0.015 & 3.951 $ \pm $ 0.061 & 0.000 $ \pm $ 0.000 & 2.951 $ \pm $ 0.061 & 1.027 $ \pm $ 0.032 & 1.460 $ \pm $ 0.503 & 0.054 $ \pm $ 0.016 & 5.104 $ \pm $ 0.571 \\
                       &         & RPS     & 0.949 $ \pm $ 0.014 & \textbf{2.198} $ \pm $ 0.062 & 0.000 $ \pm $ 0.000 & \textbf{1.198} $ \pm $ 0.062 & \textbf{1.000} $ \pm $ 0.000 & \textbf{1.000} $ \pm $ 0.000 & 0.051 $ \pm $ 0.014 & \textbf{3.233} $ \pm $ 0.511 \\
                       &         & min-CPS & 0.950 $ \pm $ 0.017 & 2.354 $ \pm $ 0.142 & 0.000 $ \pm $ 0.000 & 1.354 $ \pm $ 0.142 & 1.000 $ \pm $ 0.000 & 1.000 $ \pm $ 0.000 & \textbf{0.050} $ \pm $ 0.017 & 3.374 $ \pm $ 0.549 \\
\cmidrule(lr){2-11}
                       & \multirow{5}{*}{0.1} & APS     & 0.897 $ \pm $ 0.023 & 1.879 $ \pm $ 0.086 & 0.011 $ \pm $ 0.005 & - & - & - & - & - \\
                       &         & LAC     & 0.897 $ \pm $ 0.024 & \textbf{1.824} $ \pm $ 0.111 & 0.008 $ \pm $ 0.004 & - & - & - & - & - \\
                       &         & OCDF    & 0.897 $ \pm $ 0.024 & 3.664 $ \pm $ 0.087 & 0.000 $ \pm $ 0.000 & 2.664 $ \pm $ 0.087 & 1.018 $ \pm $ 0.019 & 1.540 $ \pm $ 0.503 & 0.105 $ \pm $ 0.026 & 4.766 $ \pm $ 0.439 \\
                       &         & RPS     & 0.896 $ \pm $ 0.022 & 1.831 $ \pm $ 0.091 & 0.000 $ \pm $ 0.000 & 0.831 $ \pm $ 0.091 & \textbf{1.007} $ \pm $ 0.014 & \textbf{1.220} $ \pm $ 0.418 & \textbf{0.104} $ \pm $ 0.023 & \textbf{2.921} $ \pm $ 0.376 \\
                       &         & min-CPS & 0.896 $ \pm $ 0.024 & 1.825 $ \pm $ 0.113 & 0.000 $ \pm $ 0.000 & \textbf{0.825} $ \pm $ 0.113 & 1.016 $ \pm $ 0.019 & 1.480 $ \pm $ 0.505 & 0.106 $ \pm $ 0.026 & 2.951 $ \pm $ 0.411 \\
\midrule
 \multirow{15}{*}{nhanes} & \multirow{5}{*}{0.02} & APS     & 0.982 $ \pm $ 0.006 & 4.143 $ \pm $ 0.066 & 0.010 $ \pm $ 0.003 & - & - & - & - & - \\
                       &         & LAC     & 0.980 $ \pm $ 0.006 & \textbf{4.077} $ \pm $ 0.073 & 0.008 $ \pm $ 0.003 & - & - & - & - & - \\
                       &         & OCDF    & 0.980 $ \pm $ 0.005 & 4.705 $ \pm $ 0.029 & 0.000 $ \pm $ 0.000 & 3.705 $ \pm $ 0.029 & 1.090 $ \pm $ 0.049 & 1.880 $ \pm $ 0.328 & 0.022 $ \pm $ 0.005 & 5.858 $ \pm $ 0.482 \\
                       &         & RPS     & 0.981 $ \pm $ 0.006 & 4.094 $ \pm $ 0.071 & 0.000 $ \pm $ 0.000 & 3.094 $ \pm $ 0.071 & \textbf{1.000} $ \pm $ 0.000 & \textbf{1.000} $ \pm $ 0.000 & \textbf{0.019} $ \pm $ 0.006 & \textbf{4.952} $ \pm $ 0.563 \\
                       &         & min-CPS & 0.981 $ \pm $ 0.006 & 4.091 $ \pm $ 0.065 & 0.000 $ \pm $ 0.000 & \textbf{3.091} $ \pm $ 0.065 & 1.026 $ \pm $ 0.026 & 1.540 $ \pm $ 0.503 & 0.020 $ \pm $ 0.006 & 5.087 $ \pm $ 0.566 \\
\cmidrule(lr){2-11}
                       & \multirow{5}{*}{0.05} & APS     & 0.950 $ \pm $ 0.010 & 3.628 $ \pm $ 0.073 & 0.034 $ \pm $ 0.005 & - & - & - & - & - \\
                       &         & LAC     & 0.950 $ \pm $ 0.008 & 3.540 $ \pm $ 0.058 & 0.025 $ \pm $ 0.003 & - & - & - & - & - \\
                       &         & OCDF    & 0.949 $ \pm $ 0.009 & 4.476 $ \pm $ 0.045 & 0.000 $ \pm $ 0.000 & 3.476 $ \pm $ 0.045 & 1.106 $ \pm $ 0.030 & 2.000 $ \pm $ 0.000 & 0.056 $ \pm $ 0.010 & 5.730 $ \pm $ 0.357 \\
                       &         & RPS     & 0.949 $ \pm $ 0.008 & \textbf{3.536} $ \pm $ 0.041 & 0.000 $ \pm $ 0.000 & \textbf{2.536} $ \pm $ 0.041 & \textbf{1.014} $ \pm $ 0.014 & \textbf{1.560} $ \pm $ 0.501 & \textbf{0.051} $ \pm $ 0.009 & \textbf{4.594} $ \pm $ 0.315 \\
                       &         & min-CPS & 0.950 $ \pm $ 0.008 & 3.552 $ \pm $ 0.039 & 0.000 $ \pm $ 0.000 & 2.552 $ \pm $ 0.039 & 1.030 $ \pm $ 0.015 & 1.920 $ \pm $ 0.274 & 0.052 $ \pm $ 0.008 & 4.622 $ \pm $ 0.299 \\
\cmidrule(lr){2-11}
                       & \multirow{5}{*}{0.1} & APS     & 0.899 $ \pm $ 0.012 & 3.129 $ \pm $ 0.056 & 0.050 $ \pm $ 0.005 & - & - & - & - & - \\
                       &         & LAC     & 0.903 $ \pm $ 0.012 & 3.091 $ \pm $ 0.056 & 0.029 $ \pm $ 0.003 & - & - & - & - & - \\
                       &         & OCDF    & 0.900 $ \pm $ 0.010 & 4.174 $ \pm $ 0.036 & 0.000 $ \pm $ 0.000 & 3.174 $ \pm $ 0.036 & 1.138 $ \pm $ 0.025 & 2.200 $ \pm $ 0.404 & 0.114 $ \pm $ 0.012 & 5.450 $ \pm $ 0.209 \\
                       &         & RPS     & 0.900 $ \pm $ 0.011 & \textbf{3.001} $ \pm $ 0.047 & 0.000 $ \pm $ 0.000 & \textbf{2.001} $ \pm $ 0.047 & \textbf{1.048} $ \pm $ 0.017 & \textbf{2.000} $ \pm $ 0.000 & \textbf{0.104} $ \pm $ 0.012 & \textbf{4.087} $ \pm $ 0.201 \\
                       &         & min-CPS & 0.901 $ \pm $ 0.013 & 3.094 $ \pm $ 0.061 & 0.000 $ \pm $ 0.000 & 2.094 $ \pm $ 0.061 & 1.052 $ \pm $ 0.017 & 2.000 $ \pm $ 0.000 & 0.104 $ \pm $ 0.014 & 4.183 $ \pm $ 0.219 \\
\midrule
 \multirow{15}{*}{LEVXSensors} & \multirow{5}{*}{0.02} & APS     & 0.980 $ \pm $ 0.006 & 3.552 $ \pm $ 0.066 & 0.296 $ \pm $ 0.037 & - & - & - & - & - \\
                       &         & LAC     & 0.981 $ \pm $ 0.007 & 3.464 $ \pm $ 0.069 & 0.252 $ \pm $ 0.029 & - & - & - & - & - \\
                       &         & OCDF    & 0.980 $ \pm $ 0.006 & \textbf{3.411} $ \pm $ 0.055 & 0.000 $ \pm $ 0.000 & \textbf{2.411} $ \pm $ 0.055 & 1.800 $ \pm $ 0.121 & 3.000 $ \pm $ 0.000 & 0.036 $ \pm $ 0.012 & 6.039 $ \pm $ 1.179 \\
                       &         & RPS     & 0.980 $ \pm $ 0.006 & 3.734 $ \pm $ 0.021 & 0.000 $ \pm $ 0.000 & 2.734 $ \pm $ 0.021 & \textbf{1.000} $ \pm $ 0.000 & \textbf{1.000} $ \pm $ 0.000 & \textbf{0.020} $ \pm $ 0.006 & \textbf{4.739} $ \pm $ 0.571 \\
                       &         & min-CPS & 0.981 $ \pm $ 0.005 & 3.482 $ \pm $ 0.052 & 0.000 $ \pm $ 0.000 & 2.482 $ \pm $ 0.052 & 1.520 $ \pm $ 0.153 & 2.960 $ \pm $ 0.198 & 0.029 $ \pm $ 0.010 & 5.356 $ \pm $ 0.927 \\
\cmidrule(lr){2-11}
                       & \multirow{5}{*}{0.05} & APS     & 0.948 $ \pm $ 0.010 & 3.107 $ \pm $ 0.073 & 0.501 $ \pm $ 0.028 & - & - & - & - & - \\
                       &         & LAC     & 0.949 $ \pm $ 0.010 & \textbf{3.001} $ \pm $ 0.070 & 0.426 $ \pm $ 0.026 & - & - & - & - & - \\
                       &         & OCDF    & 0.948 $ \pm $ 0.010 & 3.149 $ \pm $ 0.057 & 0.000 $ \pm $ 0.000 & \textbf{2.149} $ \pm $ 0.057 & 1.970 $ \pm $ 0.089 & 3.000 $ \pm $ 0.000 & 0.102 $ \pm $ 0.022 & 6.242 $ \pm $ 0.824 \\
                       &         & RPS     & 0.948 $ \pm $ 0.010 & 3.552 $ \pm $ 0.060 & 0.000 $ \pm $ 0.000 & 2.552 $ \pm $ 0.060 & \textbf{1.008} $ \pm $ 0.013 & \textbf{1.320} $ \pm $ 0.471 & \textbf{0.052} $ \pm $ 0.010 & \textbf{4.643} $ \pm $ 0.362 \\
                       &         & min-CPS & 0.952 $ \pm $ 0.008 & 3.208 $ \pm $ 0.035 & 0.000 $ \pm $ 0.000 & 2.208 $ \pm $ 0.035 & 1.585 $ \pm $ 0.086 & 3.000 $ \pm $ 0.000 & 0.076 $ \pm $ 0.015 & 5.265 $ \pm $ 0.557 \\
\cmidrule(lr){2-11}
                       & \multirow{5}{*}{0.1} & APS     & 0.900 $ \pm $ 0.012 & 2.590 $ \pm $ 0.059 & 0.638 $ \pm $ 0.015 & - & - & - & - & - \\
                       &         & LAC     & 0.898 $ \pm $ 0.012 & \textbf{2.498} $ \pm $ 0.067 & 0.562 $ \pm $ 0.017 & - & - & - & - & - \\
                       &         & OCDF    & 0.897 $ \pm $ 0.012 & 2.805 $ \pm $ 0.049 & 0.000 $ \pm $ 0.000 & \textbf{1.805} $ \pm $ 0.049 & 2.002 $ \pm $ 0.049 & 3.000 $ \pm $ 0.000 & 0.205 $ \pm $ 0.023 & 5.907 $ \pm $ 0.422 \\
                       &         & RPS     & 0.900 $ \pm $ 0.014 & 3.114 $ \pm $ 0.038 & 0.000 $ \pm $ 0.000 & 2.114 $ \pm $ 0.038 & \textbf{1.224} $ \pm $ 0.035 & \textbf{2.000} $ \pm $ 0.000 & \textbf{0.123} $ \pm $ 0.019 & \textbf{4.579} $ \pm $ 0.346 \\
                       &         & min-CPS & 0.899 $ \pm $ 0.013 & 2.839 $ \pm $ 0.053 & 0.000 $ \pm $ 0.000 & 1.839 $ \pm $ 0.053 & 1.738 $ \pm $ 0.058 & 3.000 $ \pm $ 0.000 & 0.175 $ \pm $ 0.025 & 5.343 $ \pm $ 0.463 \\
\bottomrule
\end{tabular}
\end{table*}

\begin{figure*}
     \begin{minipage}[!htb]{\textwidth}
        \centering
\includegraphics[width=\linewidth]{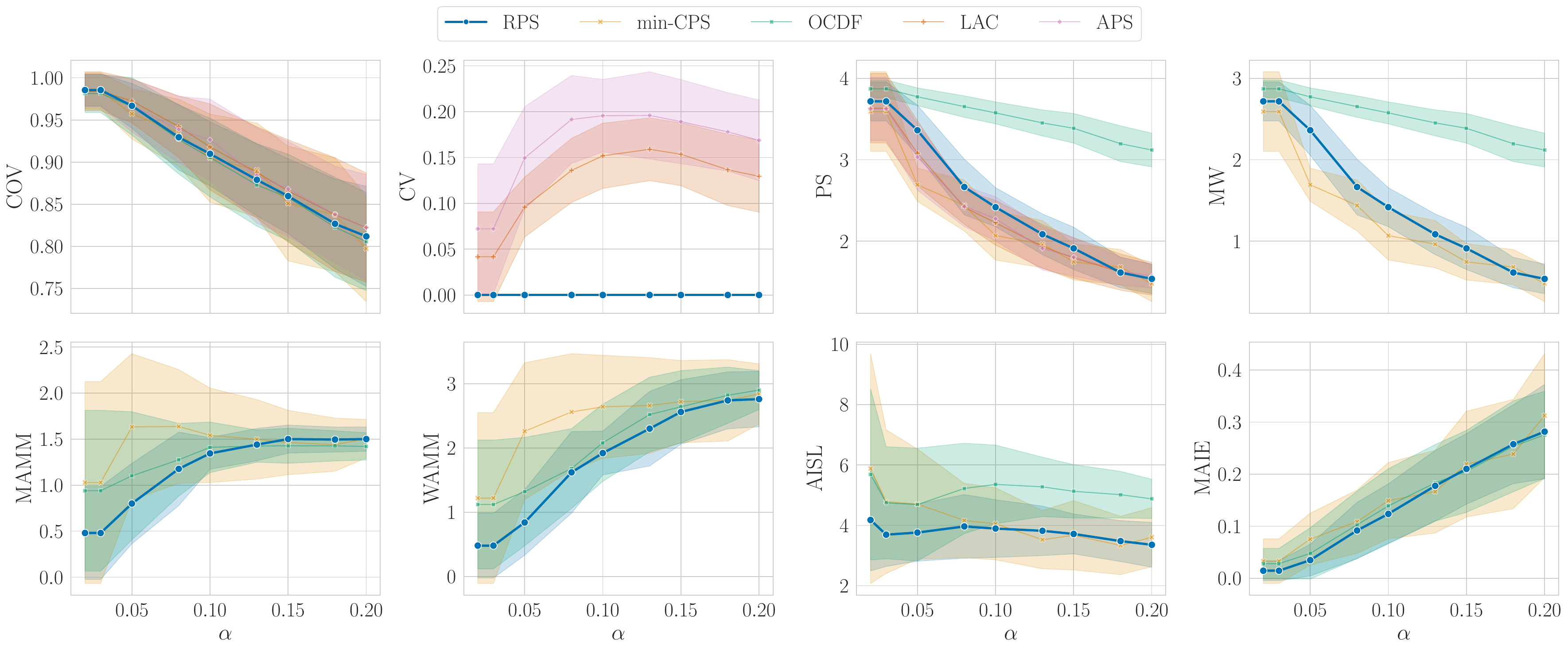}
    \caption{Comparison of prediction sets across methods on heartDisease dataset using LightGBM. Shaded regions indicate standard deviation.}
 \end{minipage}
      \begin{minipage}[!htb]{\textwidth}
        \centering
\includegraphics[width=\linewidth]{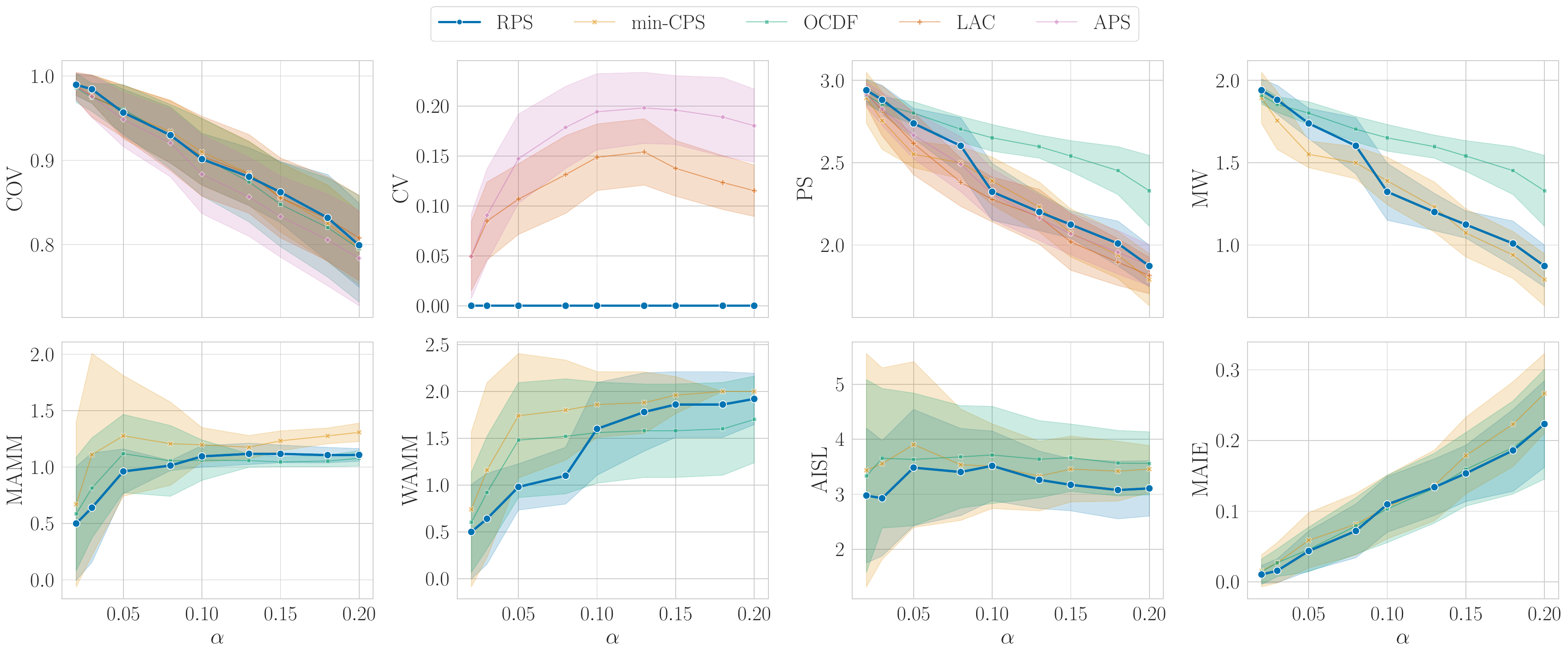}
    \caption{Comparison of prediction sets across methods on mammoexp dataset using LightGBM. Shaded regions indicate standard deviation.}
 \end{minipage}
       \begin{minipage}[!htb]{\textwidth}
        \centering
\includegraphics[width=\linewidth]{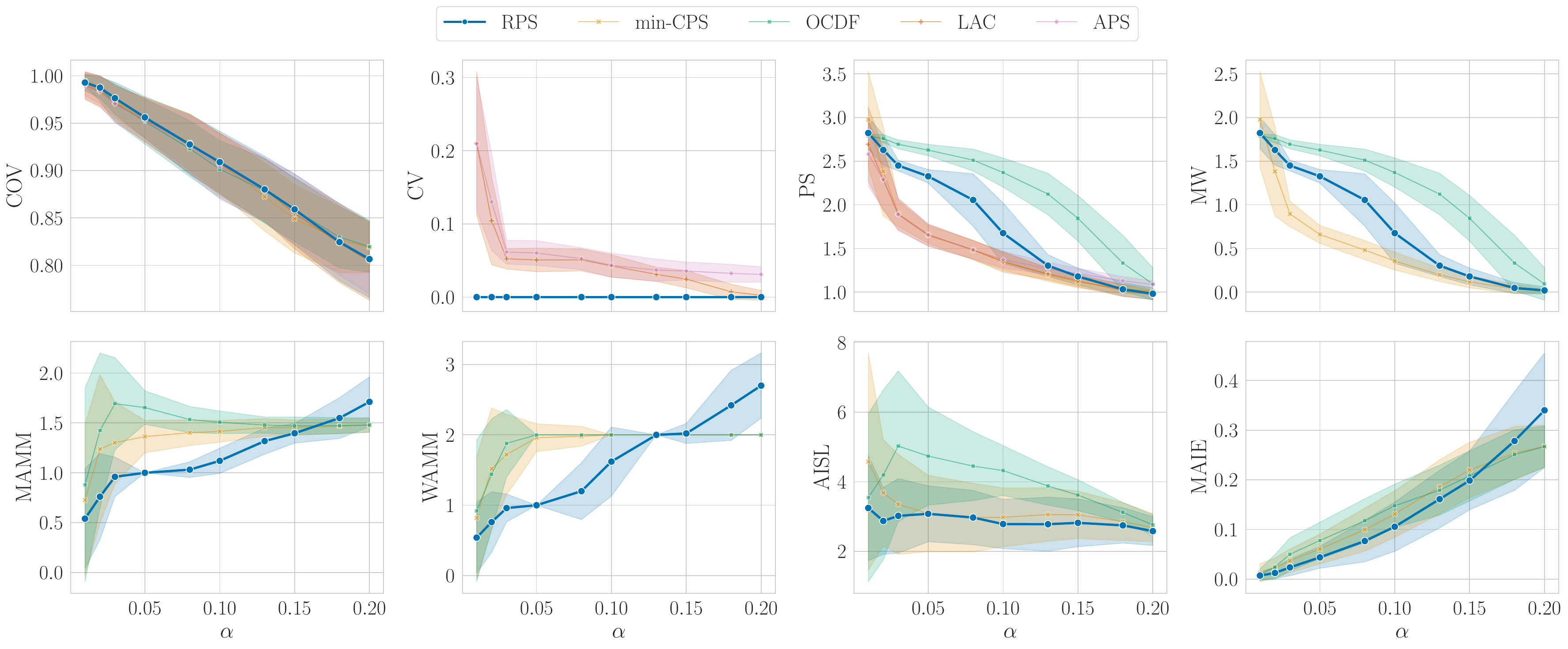}
    \caption{Comparison of prediction sets across methods on support dataset using LightGBM. Shaded regions indicate standard deviation.}
 \end{minipage}
\end{figure*}

\begin{figure*}
        \begin{minipage}[!htb]{\textwidth}
        \centering
\includegraphics[width=\linewidth]{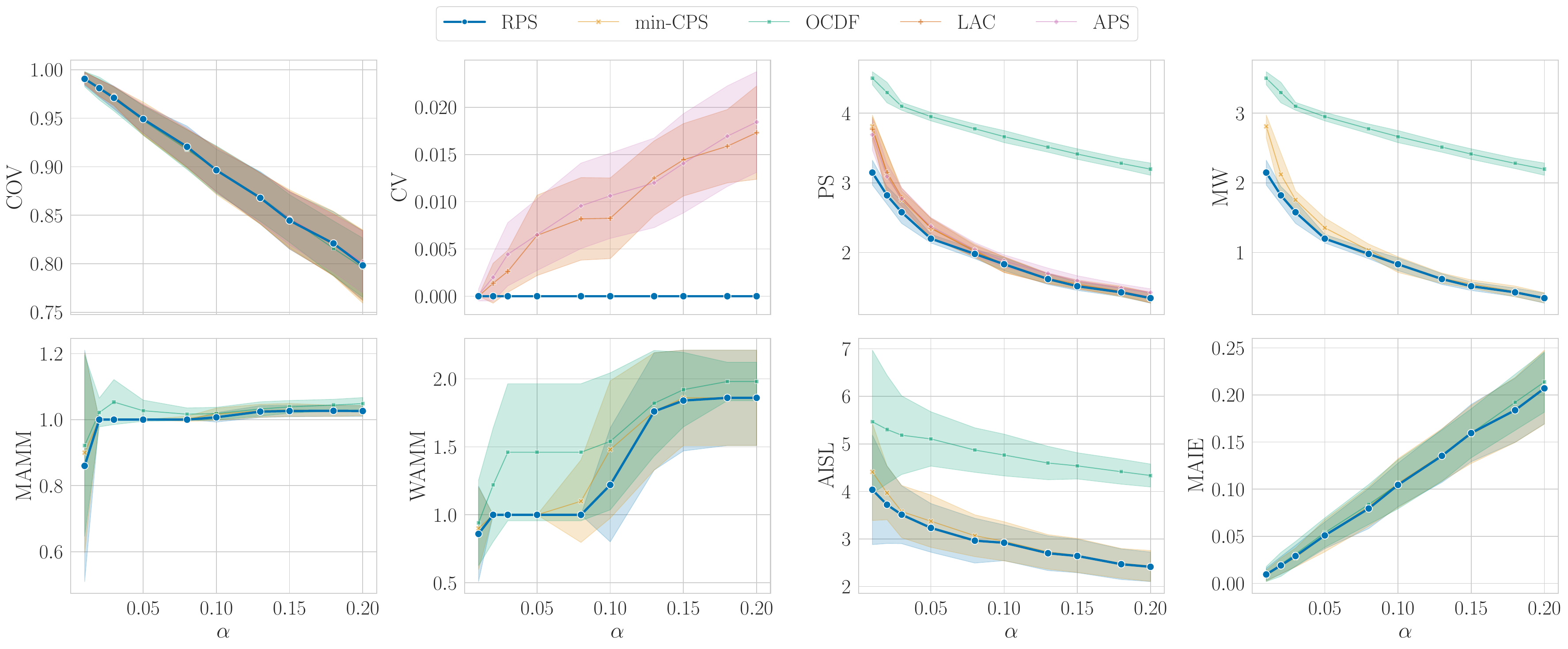}
    \caption{Comparison of prediction sets across methods on winequalityRed dataset using LightGBM. Shaded regions indicate standard deviation.}
 \end{minipage}
         \begin{minipage}[!htb]{\textwidth}
        \centering
\includegraphics[width=\linewidth]{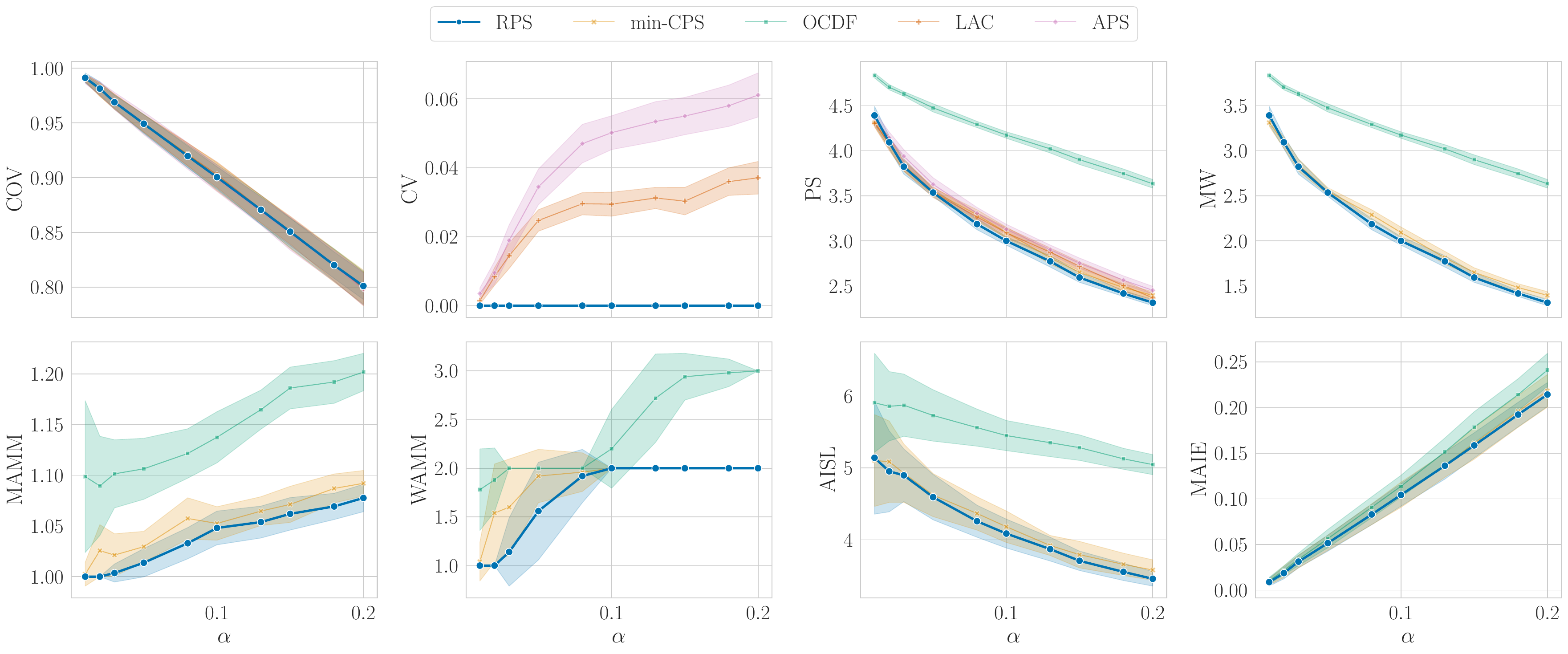}
    \caption{Comparison of prediction sets across methods on nhanes dataset using LightGBM. Shaded regions indicate standard deviation.}
 \end{minipage}
          \begin{minipage}[!htb]{\textwidth}
        \centering
\includegraphics[width=\linewidth]{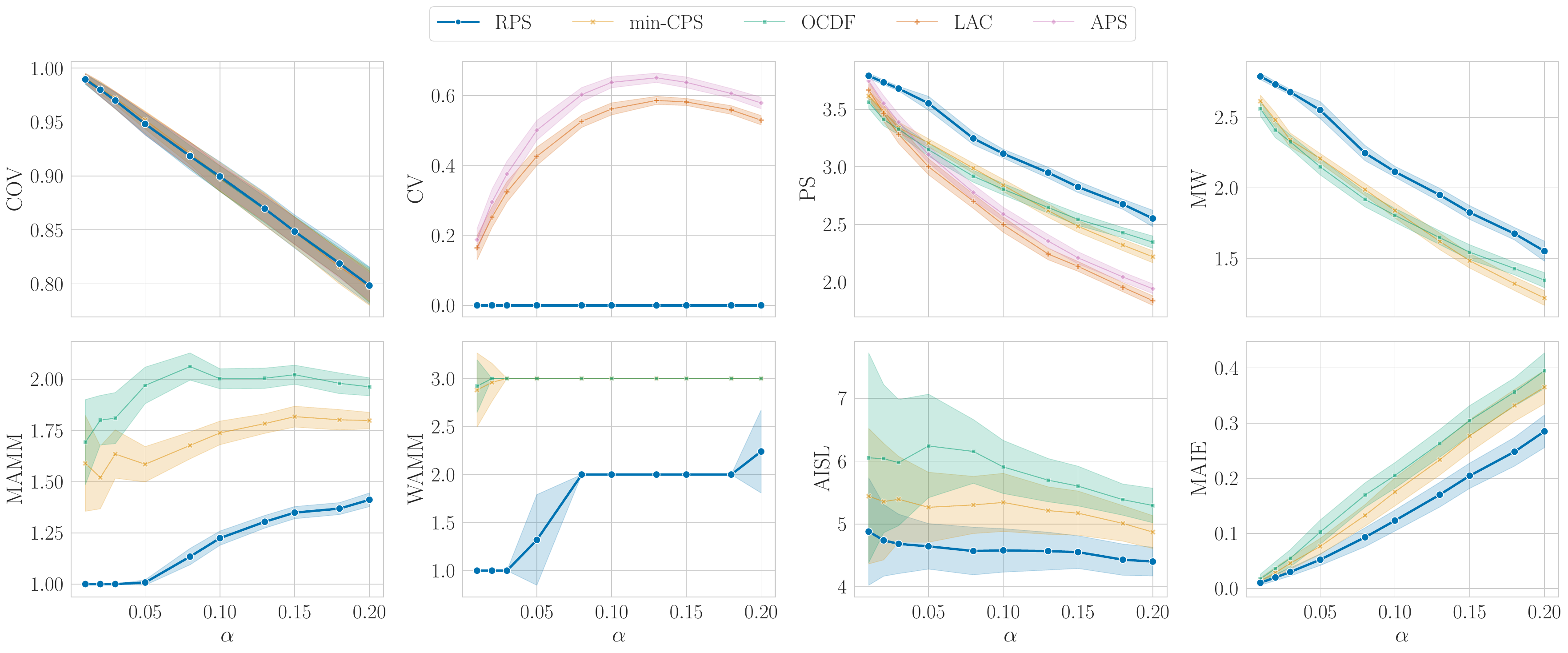}
    \caption{Comparison of prediction sets across methods on LEVXSensors dataset using LightGBM. Shaded regions indicate standard deviation.}
 \end{minipage}
 \end{figure*}




\end{document}